\def\eqref#1{equation~\ref{#1}}
\def\1{\bm{1}}
\DeclareMathAlphabet{\mathsfit}{\encodingdefault}{\sfdefault}{m}{sl}
\SetMathAlphabet{\mathsfit}{bold}{\encodingdefault}{\sfdefault}{bx}{n}
\title{Do ReLU Networks Have An Edge When Approximating Compactly-Supported Functions?}
\author{\name Anastasis Kratsios \email kratsioa@mcmaster.ca \\
      \addr Department of Mathematics\\
      McMaster University\\
      1280 Main Street West, Hamilton, Ontario, L8S 4K1, Canada
      \AND
      \name Behnoosh Zamanlooy \email zamanlob@mcmaster.ca \\
      \addr Department of Computing and Software\\
      McMaster University \\
      1280 Main Street West, Hamilton, Ontario, L8S 4K1, Canada}
\DeclareSymbolFontAlphabet{\mathbb}{AMSb}
\DeclareSymbolFontAlphabet{\mathbbl}{bbold}
\newlength{\oldparindent}
\newcommand{\rr}{{\mathbb{R}}}
\newcommand{\nn}{{\mathbb{N}}}
\newcommand{\fff}{{\mathcal{F}}}
\newcommand{\rrflex}[1]{{\ensuremath{\rr^{#1}
}}}
\newcommand{\rrD}{{\rrflex{D}}}
\newcommand{\rrd}{{\rrflex{d}}}
\NewDocumentCommand\argmin{o}{{\operatorname{argmin}\IfValueT{#1}{_{{#1}}}}}
\NewDocumentCommand\AF{o}{\operatorname{AF}\IfValueT{#1}{
		{
			\left({#1}\right)
		}
}}
\NewDocumentCommand\NNtrunc{o}{{
		\left \lceil{
			\operatorname{NN}^{\sigma}
			\IfValueT{#1}{_{#1}}
		}\right \rceil 
}}
\NewDocumentCommand\NNshal{oo}{
	{nn%
		_{\IfValueT{#1}{#1}}	
		\IfValueF{#2}{^{\fff}}\IfValueT{#2}{^{#2}}
	}
}
\NewDocumentCommand\NNho{oo}{
	{\mathcal{HNN}%
		_{R\IfValueT{#1}{#1}}	
		\IfValueF{#2}{^{\fff}}\IfValueT{#2}{^{#2}}
	}
}
\NewDocumentCommand\NNaff{oo}{
	{\operatorname{NN}%
		_{R,\IfValueT{#1}{#1}}	
		\IfValueF{#2}{^{a}}
	}
}
\NewDocumentCommand{\prodd}{oo}{
	\overset{{#2}}{
		\underset{{#1}}{
			\circlearrowleft
		}
	}
}
\NewDocumentCommand{\NN}{oo}{
	{\operatorname{NN}\IfValueT{#2}{^{#2}}\IfValueF{#2}{^{\sigma}}
		\IfValueF{#1}{_{d,D}}
		\IfValueT{#1}{_{{{{#1}}}}}
	}
}
\NewDocumentCommand{\intt}{o}{{\operatorname{int}
		\IfValueT{#1}{\left({#1}\right)}
}}
\NewDocumentCommand\rsupp{mo}{{\operatorname{supp}
		\IfValueT{#1}{\left({#1}
			\middle|
			\IfValueT{#2}{{{#2}}}\IfValueF{#2}{
				{
					K_{\cdot}
				}
			}
			\right)}
}}
\newtheorem{defn}{Definition}
\newtheorem{prop}{Proposition}
\newtheorem{lem}{Lemma}
\newtheorem{ex}{Example}
\newtheorem{thrm}{Theorem}
\newtheorem{rremark}{Remark}
\newtheorem*{ass*}{Assumption}
\newtheorem*{thrm*}{Theorem}
\newtheorem*{cor*}{Corollary}
\newtheorem*{prop*}{Proposition}
\NewDocumentCommand{\loc}{mo}{
	{
		\left\lceil\left \lceil{{#1}}\right \rceil\right \rceil_{\IfValueF{#2}{\epsilon}\IfValueT{#2}{{#2}}} 
	}
}
\NewDocumentCommand{\tope}{mo}{
	{
	{#1}\mbox{-}\operatorname{loc}
	\IfValueT{#2}{{_{#2}}}
	}
}
\NewDocumentCommand{\locFFNs}{oo}{{
		\loc{
			\operatorname{NN}^{\sigma
			}_{d,D,\IfValueF{#2}{k}\IfValueT{#2}{{#2}}
			}	
		}[\IfValueF{#1}{\epsilon}\IfValueT{#1}{{#1}}]
}}
\NewDocumentCommand{\BLNsNN}{oo}{{
		\tope{
			\operatorname{NN}^{\sigma
			}_{d,D\IfValueT{#2}{,{#2}}
			}	
		}[\IfValueF{#1}{\epsilon}\IfValueT{#1}{{#1}}]
}}
\newcommand{\colim@}[2]{%
  \vtop{\m@th\ialign{##\cr
    \hfil$#1\operator@font colim$\hfil\cr
    \noalign{\nointerlineskip\kern1.5\ex@}#2\cr
    \noalign{\nointerlineskip\kern-\ex@}\cr}}%
}
\renewcommand{\varinjlim}{%
  \mathop{\mathpalette\varlim@{\rightarrowfill@\scriptscriptstyle}}\nmlimits@
}
\definecolor{darkcerulean}{rgb}{0.03, 0.27, 0.49}
\definecolor{darkmidnightblue}{rgb}{0.0, 0.2, 0.4}
\definecolor{darkcyan}{rgb}{0.0, 0.55, 0.55}
\definecolor{darkgreen}{rgb}{0.0, 0.2, 0.13}
\definecolor{deepjunglegreen}{rgb}{0.0, 0.29, 0.29}
\definecolor{darkcandyapplered}{rgb}{0.64, 0.0, 0.0}
\definecolor{darkred}{rgb}{0.55, 0.0, 0.0}
\definecolor{darkscarlet}{rgb}{0.34, 0.01, 0.1}
\definecolor{jasper}{rgb}{0.84, 0.23, 0.24}
\definecolor{darkjazzberryjam}{rgb}{0.45, 0.04, 0.37}
\definecolor{MidnightBlue}{RGB}{25,25,112}
\definecolor{MidnightBlueComplementingGreen}{RGB}{25,112,25}
\definecolor{MidnightBlueComplementingPurple}{RGB}{112,25,112}
\definecolor{MidnightBlueComplementingRed}{RGB}{112,25,69}
\definecolor{TargetFunction}{RGB}{172,29,162}
\definecolor{ReLUPoolNet}{RGB}{66,135,255}
\definecolor{WowColor}{rgb}{.75,0,.75}
\definecolor{MildlyAlarming}{rgb}{0.85,0.25,0.1}
\definecolor{SubtleColor}{rgb}{0,0,.50}
\newcounter{margincounter}
\NewDocumentCommand{\Annieboi}{mo}{
    \IfValueF{#2}{
                        {{\scriptsize
                            \textcolor{deepjunglegreen}{ 
                            \textbf{A:}
                            \textit{{#1}}
                            }
                        }}
        }
        
    \IfValueT{#2}{
                        \marginnote{{\scriptsize
                            \textcolor{deepjunglegreen}{ 
                            \textbf{A:}
                            \textit{{#1}}
                            }
                        }}
        }
                    }
\NewDocumentCommand{\Behnoosh}{mo}{
    \IfValueF{#2}{
                        {{\scriptsize
                            \textcolor{MidnightBlue}{ 
                            \textbf{B:}
                            \textit{{#1}}
                            }
                        }}
        }
        
    \IfValueT{#2}{
                        \marginnote{{\scriptsize
                            \textcolor{MidnightBlue}{ 
                            \textbf{B:}
                            \textit{{#1}}
                            }
                        }}
        }
                    }
\definecolor{WildBerry}{rgb}{1,.26,.64}
\definecolor{SilverLakeblue}{rgb}{.36,.53,.72}
\definecolor{SaturatedGreenCyan}{rgb}{.3,.78,.69}
\definecolor{Teal}{RGB}{0,128,128}
\definecolor{Mahogany}{RGB}{75,25,0}
\definecolor{CadRed}{RGB}{216,6,33}
\newcommand{\eqdef}{
                    \ensuremath{\stackrel{\mbox{\upshape\tiny def.}}{=}}
                    }
\newcommand{\ReLU}{{
                    \operatorname{ReLU}
                    }}
\newcommand{\pool}{{
              \operatorname{Pool}
                }}
\begin{document}

\maketitle
\begin{abstract}
We study the problem of approximating compactly-supported integrable functions while implementing their support set using feedforward neural networks.  Our first main result transcribes this ``structured'' approximation problem into a universality problem.  We do this by constructing a refinement of the usual topology on the space $L^1_{\operatorname{loc}}(\mathbb{R}^d,\mathbb{R}^D)$ of locally-integrable functions in which compactly-supported functions can only be approximated in $L^1$-norm by functions with matching discretized support.
We establish the universality of ReLU feedforward networks with bilinear pooling layers in this refined topology.  Consequentially, we find that ReLU feedforward networks with bilinear pooling can approximate compactly supported functions while implementing their discretized support.  We derive a quantitative uniform version of our universal approximation theorem on the dense subclass of compactly-supported Lipschitz functions.  This quantitative result expresses the depth, width, and the number of bilinear pooling layers required to construct this ReLU network via the target function's regularity, the metric capacity and diameter of its essential support, and the dimensions of the inputs and output spaces.  Conversely, we show that polynomial regressors and analytic feedforward networks are not universal in this space.
\end{abstract}

\section{Introduction}
\label{s_intro}

The variety of available deep learning architectures used in practice and studied in the literature can make it difficult to identify which model is best for a given learning task.  In this paper, we consider the problem of approximating an \textit{essentially compactly-supported (Lebesgue) integrable function} $f:\rr^d\rightarrow \rr^D$ using a the rudimentary feedforward architecture.  The typical example of such a map is the \textit{distance function} to the complement of compact subset of $K\subseteq \rr^d$, defined by
\[
    x
        \mapsto 
    d_{\rr^d\setminus K}(x):=\inf_{z\in {\rr^d\setminus K}}\, \|z-x\|;
\]
where $K$ is non-empty.  These maps are common in computer vision \cite{di1999distance}, in computational physics \cite{tsai2002rapid}, and they are used for partitioning latent metric subspaces of $\rr^d$ (see \cite{CobzaczMichulescuNicolae_2019_LipschitzFunctionsBook}).

Since an essentially compactly-supported integrable function contains more structure than an arbitrary locally-integrable function; namely, its essential support set, it is natural to ask if we can \textit{approximate such a function to arbitrary precision} while simultaneously \textit{exactly implementing its support up to a discretization of the input space $\rr^d$}.  Even if we only focus on the class of feedforward networks from $\rr^d$ to $\rr^D$ it can be unclear which activation function produces feedforward networks which are compatible with this objective.  

We can immediately rule-out networks built using any combined number of analytic activation functions, by virtue of their analyticity.  Examples include the \textit{sigmoid} activation function, the Swish activation function of \cite{Swish2018ICLR}, the GeLU activation of \cite{hendrycks2016gaussian}, the Softplus non-linearity \cite{pmlrv15glorot11a}, the $\sin$ function used in SIREN networks \cite{sitzmann2020implicit}, $\tanh$, Hermite polynomial activation functions used in \cite{ma2005constructive}, and several others examples.  Since, the composition of analytic functions is again an analytic function then every such neural network must be analytic.  The trouble is that no analytic function can simultaneously be compactly-supported and non-zero.  Therefore, no feedforward architecture using only analytic activation functions can approximate a compactly-supported function in while exactly implementing its support (up to a discretization of the input space $\rr^d$).  

We therefore turn our attention to other most common class of activation functions; namely, (non-affine) piecewise linear activation functions such as the ReLU nonlinearity of \cite{fukushima1969visualReLU}, the PReLU activation function of \cite{he2015delvingPReLU}, or the leaky ReLU function of \cite{maas2013rectifierLeakyReLU}.  Since this class of activation functions is not analytic, it is at-least possible for neural networks with piecewise linear class to approximate essentially-compactly supported integrable functions in the aforementioned sense.  Since every neural network with a piecewise linear activation function can be implemented by a deep ReLU network (see \citep[Proposition 1]{YAROTSKY_PWLin_2017_approxrates}) and since the ReLU activation function, defined by $\operatorname{ReLU}(x)\eqdef \max\{0,x\}$, vanishes on a large part of its input space then, it is plausible that such neural networks can approximate a function while themselves having compact support.  Thus, feedforward neural neural networks with (non-affine) piecewise linear activation functions seem to be a viable candidate for solving this approximation-theoretic problem.  

In this paper, we demonstrate that deep feedforward networks with (non-affine) piecewise linear activation functions can approximate any essentially compactly-supported (Lebesgue) integrable function $f:\rr^d\rightarrow \rr^D$ while simultaneously exactly implementing its support up to a discretization of $\rr^d$, provided that the feedforward model can also leverage bilinear pooling layers.  We denote this set of functions by $\operatorname{NN}^{\ReLU + \pool}$.  

To answer this question we construct a topology $\tau$ on the set of locally-integrable functions $L^1_{\operatorname{loc}}(\rr^d,\rr^D)$ from $\rr^d$ to $\rr^D$ formalizing the mode of approximation for essentially compactly-supported integrable functions outlined thus far.  Furthermore, we wish that our universal approximation theorem implies the classical notion of $L^1$-universal approximation derived in \citep{hornik1989multilayer,pmlrv75yarotsky18a,2020SmoothReLU,DeepRELUSMOOTH_2021,Shen2022_OptimalReLU,ReLUHolomorphic}; therefore, our topology is constructed as a refinement of the usual metric topology on $L^1_{\operatorname{loc}}(\rr^d,\rr^D)$ as well as the familiar norm topology on the subset $L^1(\rr^d,\rr^D)$ of (globally) Lebesgue-integrable functions.  Our first main result confirms that $\tau$ is well-defined and that it encodes the aforementioned behaviour of models approximating compactly supported functions.
\begin{thrm}[Approximation of Essentially Compactly-Supported Lebesgue-Integrable Functions in $\tau$]
\label{theorem_Main_Properties_of_Tau}
\hfill\\
There is a strict refinement $\tau$ of the topology on $L^1_{\operatorname{loc}}(\rr^d,\rr^D)$ which refines the metric topology on $L^1_{\operatorname{loc}}(\rr^d,\rr^D)$, whose restriction to $L^1(\rr^d,\rrD)$ is also a strict refinement of the $L^1$-norm topology, and satisfies:
\begin{enumerate}
    \item[(i)] \textbf{Approximation of Compactly Supported Functions is Only Possible with Compactly Supported Models:} For every $n\in \nn_+$ and every $f\in L^1(\rr^d,\rr^D)$ which is essentially supported on $[-n,n]^d$, a sequence $\{f_k\}_{k \in \nn^+}$ in $L^1_{\operatorname{loc}}(\rrd,\rrD)$ converges to $f$ with respect to $\tau$ only if there is an $N\in \nn_+$ with $N\geq n$ such that all but a finite number of $f_k$ are in $[-N,N]^d$ 
    and $\lim\limits_{k\uparrow \infty}\, \|f_k-f\|=0$.
    \item[(ii)] \textbf{Simultaneous Discretized Support Implementation and $L^1$-Approximation Imply $\tau$-universality:} A subset $\mathcal{F}$ of $L^1_{\operatorname{loc}}(\rrd,\rrD)$ is dense for $\tau$ if, for every $f\in \operatorname{Lip}_{\operatorname{loc}}(\rrd,\rrD)$ which is essentially compactly supported, 
    there is a sequence $\{f_n\}_{n=1}^{\infty}$ in $\mathcal{F}$ satisfying 
    \[
        \lim\limits_{n\uparrow \infty}\, \|f_n-f\|_{L^1(\rrd,\rrD)} = 0 
    \mbox{ and }
        \operatorname{ess-supp}(f)
        \cup \bigcup_{n=1}^{\infty}\, \operatorname{ess-supp}(f_n)
        \subseteq [-n_f-1,n_f+1]^d
    ;
    \]
    where $
        n_f
    \eqdef
        \min\{
        n \in \nn_+:\, \operatorname{ess-supp}(f)\subseteq [-n,n]^d
        \}
    .
    $
    \item[(iii)] \textbf{Non Implementability Restrictions:} 
    For every $f\in \operatorname{NN}^{
    \sigma_{\operatorname{PW-Lin}} 
        + 
    \operatorname{Pool}
    }$ the set $\{f\}$ it not open in $\tau$.
\end{enumerate}
\end{thrm}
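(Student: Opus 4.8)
The plan is to prove the theorem by constructing $\tau$ explicitly and then verifying (i)--(iii). I would build $\tau$ as the coarsest topology on $L^1_{\operatorname{loc}}(\rr^d,\rr^D)$ that (a) refines the usual $L^1_{\operatorname{loc}}$-metric topology $\tau_{\mathrm{met}}$, (b) makes the total-mass functional $f\mapsto \int_{\rr^d}\|f(x)\|\,dx\in[0,\infty]$ continuous, and (c) makes continuous a family of ``support-resolution'' maps built from a dilated discretization of $\operatorname{ess-supp}(f)$ against the nested boxes $Q_n=[-n,n]^d$ and a fixed $\varepsilon$-mesh, engineered so that this data detects support escaping to infinity yet is insensitive to the precise bounded region containing the support --- concretely, so that a function essentially supported in $Q_n$ is confined by a $\tau$-neighbourhood only to $Q_{n+1}$, and never to $Q_n$ itself. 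As a join of topologies, $\tau$ is automatically a topology that refines $\tau_{\mathrm{met}}$; and since $\tau$-convergence entails $L^1_{\operatorname{loc}}$-convergence together with convergence of total masses, a Scheff\'e/Brezis--Lieb argument (an $L^1_{\operatorname{loc}}$-convergent sequence admits a.e.-convergent subsequences, on which convergence of norms upgrades to $L^1$-convergence) shows that $\tau$ restricted to $L^1(\rr^d,\rrD)$ refines the $L^1$-norm topology. Strictness of both refinements is witnessed by escaping bumps: $b_k:=\mathbf 1_{[k,k+1]^d}$ converges to $0$ in $\tau_{\mathrm{met}}$ but not in $\tau$, while $k^{-1}b_k$ converges to $0$ in $L^1$-norm but not in $\tau$ (its support still escapes).

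For (i): if $f\in L^1(\rr^d,\rr^D)$ is essentially supported in $Q_n$ and $f_k\to f$ in $\tau$, the equality $\lim_k\|f_k-f\|_{L^1}=0$ is exactly the Scheff\'e argument above (here $f\in L^1$ and, by continuity of the mass functional, $f_k\in L^1$ eventually), while the support-resolution part of $\tau$ provides a $\tau$-open $U$ with $f\in U\subseteq\{g:\operatorname{ess-supp}(g)\subseteq Q_{n+1}\}$, so all but finitely many $f_k$ are essentially supported in $Q_{n+1}$ and $N:=n+1\ge n$ works. For (iii): a network $f\in\operatorname{NN}^{\sigma_{\operatorname{PW-Lin}}+\operatorname{Pool}}$ is continuous and piecewise affine; if $f\ne 0$, the rescalings $(1-t)f$ ($t\downarrow 0$) have exactly the same essential support, total mass $(1-t)\|f\|_{L^1}\to\|f\|_{L^1}$, and converge to $f$ in $\tau_{\mathrm{met}}$, hence lie in every $\tau$-neighbourhood of $f$ while differing from $f$, so $\{f\}$ is not $\tau$-open; for $f\equiv 0$ one uses instead $t\varphi$ with $\varphi$ a fixed nonzero bump supported in the region to which $\tau$ already confines $0$, and the same three estimates apply.

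For (ii), I would argue in two steps. First, the essentially-compactly-supported locally-Lipschitz functions are $\tau$-dense in $L^1_{\operatorname{loc}}$: given $g$, mollify-and-truncate it, taking the truncation radius large enough along the sequence to force convergence of total masses and the mollification radius small enough not to change the dilated discretized support, so that the approximants converge to $g$ in $\tau$ (not merely in $\tau_{\mathrm{met}}$). Second, under the hypothesis on $\mathcal F$, every such $f$ lies in $\overline{\mathcal F}^{\,\tau}$: the given $\{f_n\}\subseteq\mathcal F$ has $\|f_n-f\|_{L^1}\to 0$ with all members essentially supported in $Q_{n_f+1}$ --- precisely the box to which $\tau$ confines $f$ --- so $f_n\to f$ in $\tau$. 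Hence $\overline{\mathcal F}^{\,\tau}$ is a $\tau$-closed set containing a $\tau$-dense set, and therefore equals $L^1_{\operatorname{loc}}(\rr^d,\rr^D)$, i.e.\ $\mathcal F$ is $\tau$-dense.

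The main obstacle is the construction itself: engineering the support-resolution part of $\tau$ so that all three properties hold at once. Fineness enough to preclude support escaping to infinity (needed for the common box in (i) and for strictness on $L^1$) is in tension with coarseness enough that a one-cell-dilated support neighbourhood of a target essentially supported in $Q_{n_f}$ already contains every function essentially supported in $Q_{n_f+1}$ (so that the weak hypothesis of (ii) forces $\tau$-convergence and $\tau$-density is not destroyed) and that piecewise-affine networks are not isolated (for (iii)). Reconciling these through the ``$+1$-slack''/one-cell-dilation bookkeeping, and verifying that the prescribed sets genuinely form a sub-basis --- that the putative neighbourhoods are open in the generated topology, rather than merely defining a convergence class --- is the delicate technical core; the three verifications above are comparatively routine once the construction is fixed.
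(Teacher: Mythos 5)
There is a genuine gap: the object whose existence the theorem asserts --- the topology $\tau$ --- is never actually constructed. You describe three generating families and then state, correctly, that ``engineering the support-resolution part of $\tau$ so that all three properties hold at once'' is the delicate technical core, and you leave exactly that part unresolved. The paper's resolution of this tension is a specific piece of machinery you do not invoke: one forms the strictly nested Banach spaces $L^1_n(\rrd,\rrD)=\{f\in L^1(\rrd,\rrD):\operatorname{ess-supp}(f)\subseteq\cup_{i\le n}K_i\}$, glues them into the LB-space (strict inductive limit) $L^1_c(\rrd,\rrD)$ with topology $\tau_c$, and takes $\tau$ to be generated by $\tau_c\cup\tau_{\operatorname{norm}}\cup\tau_{\operatorname{loc}}$. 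The strict-inductive-limit theory (Dieudonn\'e--Schwartz) then delivers (i) for free: a sequence converging in $\tau_c$ must eventually lie in \emph{some} step $L^1_N$ (with $N$ depending on the sequence) and converge there in the $L^1$-norm, because the subspace topology on each step is the Banach topology; the strictness of both refinements comes from non-metrizability of LB-spaces; and (iii) comes from the characterization of convex open sets in an LB-space (open iff the trace on each step is open) together with connectedness of each Banach step. Without this, or an equally concrete substitute, none of your three ``comparatively routine'' verifications has anything to stand on.

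Beyond the deferral, your proposed mechanism for (i) is the wrong shape and would have to be abandoned even if the construction were completed. You want a $\tau$-open $U$ with $f\in U\subseteq\{g:\operatorname{ess-supp}(g)\subseteq[-(n+1),n+1]^d\}$. The theorem only asserts a sequential necessary condition with an $N\ge n$ that may depend on the sequence, and in the paper's $\tau$ no such $U$ can exist: by the description of $\tau$-open sets as unions of $\tau_c$-, $\tau_{\operatorname{norm}}$- and $\tau_{\operatorname{loc}}$-open sets, any such $U$ would give nonempty interior to the proper closed linear subspace $L^1_{n+1}(\rrd,\rrD)$ of a topological vector space, which is impossible. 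Any attempt to realize your open-set version directly (e.g.\ by making a discretized ``support index'' $g\mapsto\min\{m:\operatorname{ess-supp}(g)\subseteq[-m,m]^d\}$ continuous into a space where the relevant fibres are open) isolates the support index and thereby kills the convergence needed in (ii), where the approximants are allowed support in $[-n_f-1,n_f+1]^d$, strictly larger than that of $f$. This is precisely the tension you flag; the LB-space construction is the device that resolves it, and it is the missing idea. (Your escaping-bump witnesses for strictness and the rescaling argument for (iii) are reasonable in spirit --- indeed simpler than the paper's non-metrizability argument --- but both presuppose the unconstructed convergence criterion.)
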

We call the topology $\tau$ constructed in the proof of Theorem~\ref{theorem_Main_Properties_of_Tau} the compactly-supported $L^1$-topology (cs$L^1$-topology).  
\paragraph{The Qualitative Effect Encoded by the CSL$^1$-Topology $\tau$}
\hfill\\
Convergence to a compactly supported Lipschitz function (such as $f$) in the csL$^1$-topology $\tau$ requires simultaneous approximation of $f$'s value and correct implementation of its support, instead of only requiring that $f$'s values are approximated as in the topologies on $L^1(\rrd,\rrD)$ and on $L^1_{\operatorname{loc}}(\rrd,\rrD)$.  
\begin{figure}[H]
    \centering
    \includegraphics[width = 0.25\linewidth]{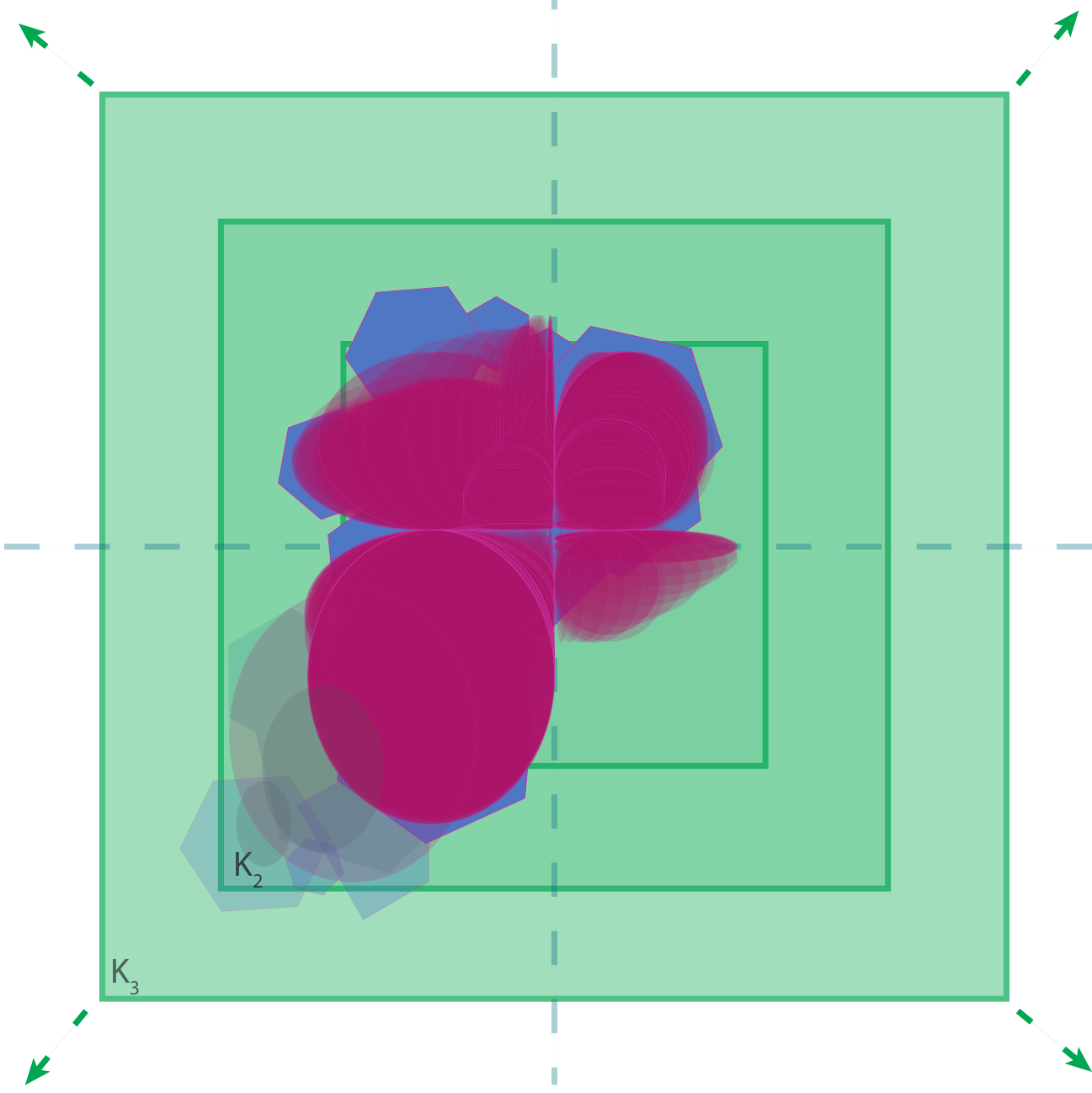}
    \caption{Approximation of a 
    {\color{TargetFunction}{compactly supported Lipschitz function}} by a {\color{ReLUPoolNet}{ReLU network with bilinear pooling}}%
    }
    \label{fig_f_approx_sup_outputsimultaneously}
\end{figure}
The two-dimensional example is illustrated by Figure~\ref{fig_f_approx_sup_outputsimultaneously}, which shows the {\color{TargetFunction}{target function $f:\rr^d\eqdef \rr^2\rightarrow \rr$}} (illustrated in {\color{TargetFunction}{red}}), an approximation of it by a {\color{ReLUPoolNet}{ReLU network $\hat{f}$ with bilinear pooling}} (illustrated in {\color{ReLUPoolNet}{blue}}), and a discretization given by a suitable of compact subsets $\{K_n\}_{n=1}^{\infty}$ of $\rr^d$ covering $\rr^d$ up to a set of Lebesgue measure $0$.  The {\color{WowColor}{target function's value and the network's output}} are represented by the vividness (alpha) of the each respective color.  We see that {\color{ReLUPoolNet}{ReLU network $\hat{f}$ with bilinear pooling}} is simultaneously close to the {\color{TargetFunction}{target function $f$}}'s value and that {\color{ReLUPoolNet}{$\hat{f}$}} identifies the correct number of compacts subsets $\{K_1,K_2,K_3\}$ containing {\color{TargetFunction}{target function $f$}} is supported (possibly with one extra set; in this case $K_3$).  
Moreover, somewhat surprisingly, we will see that this approximation guarantee is independently of our discretization of $\rr^d$ (i.e. our choice of suitable compact subsets $\{K_n\}_{n=1}^{\infty}$ of $\rr^d$).

This illustration is formalized by the following strengthened universal approximation theorem which shows that $\operatorname{NN}^{\ReLU + \pool}$ is universal in the topology $\tau$ on $L^1_{\operatorname{loc}}(\rr^d,\rr^D)$.   
Rigorously, we call a function $\sigma \in C(\rr)$ is said to be non-affine and piecewise linear if $\rr$ can be covered by a sequence of intervals on which $\sigma$ is affine and there is at-least one point at which $\sigma$ is not differentiable.  
Let $\sigma_{\operatorname{PW-Lin}}$ be a non-affine piecewise linear activation function and let $\operatorname{NN}^{\ReLU + \pool}$ denote the set of deep feedforward networks mapping $\rr^d$ to $\rr^D$ with bilinear pooling layer, defined by $\pool(x_1,\dots,x_{2n}) \eqdef (x_1x_2,\dots,x_{2n-1}x_{2n})$, at their output.  
\begin{thrm}[Universal Approximation Theorem + Support Implementation for Compactly Supported Functions]
\label{theorem_MAIN_Qualitative_UAT_Refined}
\hfill\\
Let $\log_2(d)\in \nn_+$ and let $\tau$ be the topology on $L^1_{\operatorname{loc}}(\rr^d,\rr^D)$ from Theorem~\ref{theorem_Main_Properties_of_Tau}.  If $\sigma_{\operatorname{PW-Lin}} \in C(\rr)$ is piecewise linear with at-least $2$ pieces then $\operatorname{NN}^{\sigma_{\operatorname{PW-Lin}} + \pool}$ is dense in $L^1_{\operatorname{loc}}(\rr^d,\rr^D)$ with respect to $\tau$.
\end{thrm}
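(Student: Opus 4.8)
The plan is to deduce the theorem from the \emph{sufficient} condition for $\tau$-density recorded in Theorem~\ref{theorem_Main_Properties_of_Tau}~(ii); accordingly it suffices to fix an essentially compactly supported $f\in\operatorname{Lip}_{\operatorname{loc}}(\rr^d,\rr^D)$ and to produce a sequence $\{f_n\}_{n=1}^{\infty}$ in $\operatorname{NN}^{\sigma_{\operatorname{PW-Lin}}+\pool}$ with $\|f_n-f\|_{L^1(\rr^d,\rr^D)}\to 0$ and $\operatorname{ess-supp}(f_n)\subseteq[-n_f-1,n_f+1]^d$ for every $n$, where $n_f\eqdef\min\{n\in\nn_+:\operatorname{ess-supp}(f)\subseteq[-n,n]^d\}$. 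First I would observe that, being locally Lipschitz, $f$ is continuous, and a continuous function that vanishes a.e.\ on an open set vanishes there identically; since $\operatorname{ess-supp}(f)\subseteq[-n_f,n_f]^d$ this forces $f\equiv 0$ on $\rr^d\setminus[-n_f,n_f]^d$, so $f$ is a genuine compactly supported Lipschitz function with $\operatorname{supp}(f)=\operatorname{ess-supp}(f)\subseteq[-n_f,n_f]^d$.

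The key device is a fixed continuous trapezoidal bump $\psi\colon\rr^d\to[0,1]$, written as a tensor product $\psi(x)=\prod_{i=1}^{d}\psi_i(x_i)$ where each $\psi_i$ is the one--dimensional piecewise--linear function equal to $1$ on $[-n_f,n_f]$, equal to $0$ off $(-n_f-1,n_f+1)$, and affine on the two remaining intervals. Then $0\le\psi\le 1$, $\psi\equiv 1$ on $[-n_f,n_f]^d$, and $\operatorname{supp}\psi\subseteq[-n_f-1,n_f+1]^d$. Each $\psi_i$ is continuous and piecewise linear in one variable, hence is represented exactly by a shallow $\sigma_{\operatorname{PW-Lin}}$--network: since $\sigma_{\operatorname{PW-Lin}}$ has at least two linear pieces, its kink lets one synthesise $\ReLU$ by an affine read--out, and deep $\ReLU$ networks represent every continuous piecewise--linear map exactly \citep[Prop.~1]{YAROTSKY_PWLin_2017_approxrates}. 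For the target itself, the classical uniform universal approximation theorem for piecewise--linear--activation feedforward networks on compacta (a Lipschitz map on the cube $[-n_f-1,n_f+1]^d$ is uniformly approximated by its piecewise--linear interpolant on a sufficiently fine simplicial subdivision, and that interpolant is exactly representable) provides, for each $n\in\nn_+$, a network $\hat f_n\in\operatorname{NN}^{\sigma_{\operatorname{PW-Lin}}}$ with $\sup_{x\in[-n_f-1,n_f+1]^d}\|\hat f_n(x)-f(x)\|\le 1/n$.

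Next I would assemble $f_n$ as the network computing $x\mapsto\big(\psi(x)\hat f_{n,1}(x),\dots,\psi(x)\hat f_{n,D}(x)\big)$. Concretely: a first $\sigma_{\operatorname{PW-Lin}}$ block outputs the stacked vector consisting of $\psi_1(x_1),\dots,\psi_d(x_d)$, the coordinates $\hat f_{n,1}(x),\dots,\hat f_{n,D}(x)$ (using closure of feedforward networks under parallelisation and the identity trick $t=\ReLU(t)-\ReLU(-t)$ to route the input), together with a suitable number of constant coordinates equal to $1$; then a balanced binary cascade of bilinear pooling layers multiplies, for each output index $j$, the $d+1$ factors $\psi_1(x_1),\dots,\psi_d(x_d),\hat f_{n,j}(x)$ together, coordinates that must wait being carried along via $\pool(\cdot\,,1)$. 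Because $\log_2(d)\in\nn_+$ the integer $d$ is a power of two, so the cascade on $d+1$ factors has depth exactly $\log_2(d)+1$, and $f_n\in\operatorname{NN}^{\sigma_{\operatorname{PW-Lin}}+\pool}$ with $\log_2(d)+1$ pooling layers. Verifying that $\operatorname{NN}^{\sigma_{\operatorname{PW-Lin}}+\pool}$ is indeed closed under these parallelisation, routing, and product--tree operations, and that the pooling layers transport and multiply the precomputed scalars without exceeding depth $\log_2(d)+1$, is the piece of bookkeeping I expect to be the most delicate; it is also precisely where the hypothesis $\log_2(d)\in\nn_+$ is used.

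Finally I would bound the error by partitioning $\rr^d$ into $[-n_f,n_f]^d$, the annulus $A\eqdef[-n_f-1,n_f+1]^d\setminus(-n_f,n_f)^d$, and the exterior of $[-n_f-1,n_f+1]^d$. On $[-n_f,n_f]^d$ one has $\psi\equiv 1$, hence $f_n=\hat f_n$ and $\|f_n-f\|\le 1/n$ pointwise; on $A$ one has $f=0$ a.e., so $\|\hat f_n\|\le\|\hat f_n-f\|\le 1/n$ a.e., whence $\|f_n\|=\psi\,\|\hat f_n\|\le 1/n$ a.e.\ and therefore $\|f_n-f\|\le 1/n$ a.e.; on the exterior $\psi=0$, so $f_n=0=f$. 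Consequently $\|f_n-f\|_{L^1(\rr^d,\rr^D)}\le\tfrac1n(2n_f+2)^d\to 0$, while $\operatorname{ess-supp}(f_n)\subseteq\operatorname{supp}\psi\subseteq[-n_f-1,n_f+1]^d$ and $\operatorname{ess-supp}(f)\subseteq[-n_f,n_f]^d$ supply the support containment demanded by Theorem~\ref{theorem_Main_Properties_of_Tau}~(ii). That criterion then yields density of $\operatorname{NN}^{\sigma_{\operatorname{PW-Lin}}+\pool}$ in $L^1_{\operatorname{loc}}(\rr^d,\rr^D)$ for $\tau$, which is the claim.
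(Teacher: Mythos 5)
Your proof is correct and follows the same skeleton as the paper's: both reduce the theorem to the sufficient density criterion of Theorem~\ref{theorem_Main_Properties_of_Tau}~(ii) (which the paper isolates as Lemma~\ref{lem_sufficient_univerality}), and both realize that criterion by multiplying a uniform approximant of $f$ on a cube by a tensor-product trapezoidal mask, the multiplication being carried out through a binary cascade of $\log_2(d)+1$ bilinear pooling layers --- this is precisely the content of the paper's Lemma~\ref{lemma_adjustment}, whose mask is likewise a product of one-dimensional trapezoids composed with coordinate projections. Where you genuinely diverge is in how the uniform approximant on $[-n_f-1,n_f+1]^d$ is produced and how the transition annulus is controlled. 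The paper invokes its quantitative Lemma~\ref{lem_Lipschitz} (a Lipschitz-extension argument combined with the optimal ReLU rates of \citep{Shen2022_OptimalReLU}), which simultaneously yields the width and depth estimates needed for Theorem~\ref{theorem_MAIN_QUANTITATIVE_UAT_Refined}; you instead use the piecewise-linear interpolant on a fine simplicial subdivision, which is more elementary and entirely sufficient for the qualitative density claim, but it silently relies on the nontrivial fact that every continuous piecewise-linear map on $\rr^d$ is \emph{exactly} representable by a deep ReLU network (a theorem of Arora, Basu, Mianjy and Mukherjee); note that \citep[Proposition 1]{YAROTSKY_PWLin_2017_approxrates}, which you cite at that point, only provides the conversion between networks with different piecewise-linear activations, not exact representability of arbitrary continuous piecewise-linear functions. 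Your treatment of the annulus is arguably cleaner than the paper's: you keep the mask's transition width fixed at $1$ and exploit that $\hat f_n$ is uniformly within $1/n$ of $f\equiv 0$ there, whereas the paper shrinks the transition width $\delta$ to control the Lebesgue mass of the annulus. Both routes are valid; yours trades the quantitative byproduct for elementarity, and the remaining work you flag (closure of $\operatorname{NN}^{\sigma_{\operatorname{PW-Lin}}+\pool}$ under parallelisation, identity routing, and the product tree) is exactly the bookkeeping the paper carries out in Steps 1--2 of the proof of Lemma~\ref{lemma_adjustment}.
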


This \textit{qualitative} universal approximation theorem confirms that (non-affine) piecewise linear neural networks can approximate essentially-compactly supported Lebesgue integrable functions between Euclidean spaces while exactly implementing their discretized support.  However, the result does not describe the complexity of the neural networks model.  Therefore, we also derive a quantitative version of Theorem~\ref{theorem_MAIN_Qualitative_UAT_Refined} specialized for the dense class of compactly-supported Lipschitz functions mapping $\rr^d$ to $\rr^D$; where density is meant with respect to the topology $\tau$. 

\textbf{Quantitative Approximation in the CSL$^1$-Topology $\tau$}
\hfill\\
Once $\tau$ is constructed, the crux of our analysis when proving Theorem~\ref{theorem_MAIN_Qualitative_UAT_Refined} reduces to obtaining a \textit{quantitative} ``structured'' universal approximation result shows that given any compactly supported Lipschitz function $f:\rr^d\rightarrow \rr^D$ we identify a neural network $\hat{f} \in \operatorname{NN}^{\operatorname{ReLU} + \operatorname{Pool}}$ which can approximate $f$'s value while also implementing its discretized support.

A rigorous statement of our result requires some terminology.  Denote the $d$-dimensional Lebesgue measure by $\mu$.  
The \textit{essential support} of a $f\in L^1_{\operatorname{loc}}(\rrd,\rrD)$ is defined by 
$
\operatorname{ess-supp}(f)
\eqdef
\rrd -
\bigcup
\left\{
U \subseteq \rrd 
: 
\, U  \mbox{ open and } \|f\|(x)=0 \mbox{ $\mu$-a.e. } x \in U
\right\}$.  We say that an $f\in L^1_{\operatorname{loc}}(\rrd,\rrD)$ is \textit{essentially compactly supported} if $\operatorname{ess-supp}(f)$ is contained in a closed and bounded subset of $\rrd$.  The regularity of a Lipschitz function $f:\rrd\rightarrow \rrD$ (i.e. a function with at-most linear growth) is quantified by its Lipschitz constant $\operatorname{Lip}(f)\eqdef \sup_{x_1,x_2\in \rrd,\,x_1\neq x_2}\, \frac{\|f(x_1)-f(x_2)\|}{\|x_1-x2\|}$.  The ``complexity'' of a subset $X\subseteq \rrd$ is quantified both in terms of its size $\operatorname{diam}(X)\eqdef \sup_{x_1,x_2\in X}\,\|x_1-x_2\|$ and its ``fractal dimension'' as quantified by its \textit{metric capacity} defined by
\[
\operatorname{cap}(X)
        \eqdef
    \sup\left\{
        n \in \nn_+:\,
        (\exists x_1,\dots,x_n \in X),
        (\exists r>0)
            \,
            \sqcup_{i=1}^N
            B_2(x_i,r/5) \subset B_2(x_0,r)
    \right\}
,
\]
where $\sqcup$ denotes the union of \textit{disjoint} subsets of $\rr^d$ and where $B_2(x,r)\eqdef\{u\in\rr^d:\, \|u-x\|<r\}$.  
We mention that, for a compact Riemannian manifold, the $\log_2$-metric capacity is always a multiple of the manifold's topological dimension and the $\log_2$-metric capacity of a $d$-dimensional cube in $\rrd$ is proportional to $d$; (see \citep[2.1.3]{acciaio2022metric} for further details).  
We denote the set of polynomial functions from $\rr^d$ to $\rr^D$ by $\mathbb{R}[x_1,\dots,x_d:D]$.
\begin{figure}[H]
    \centering
    \includegraphics[width = 0.25\linewidth]{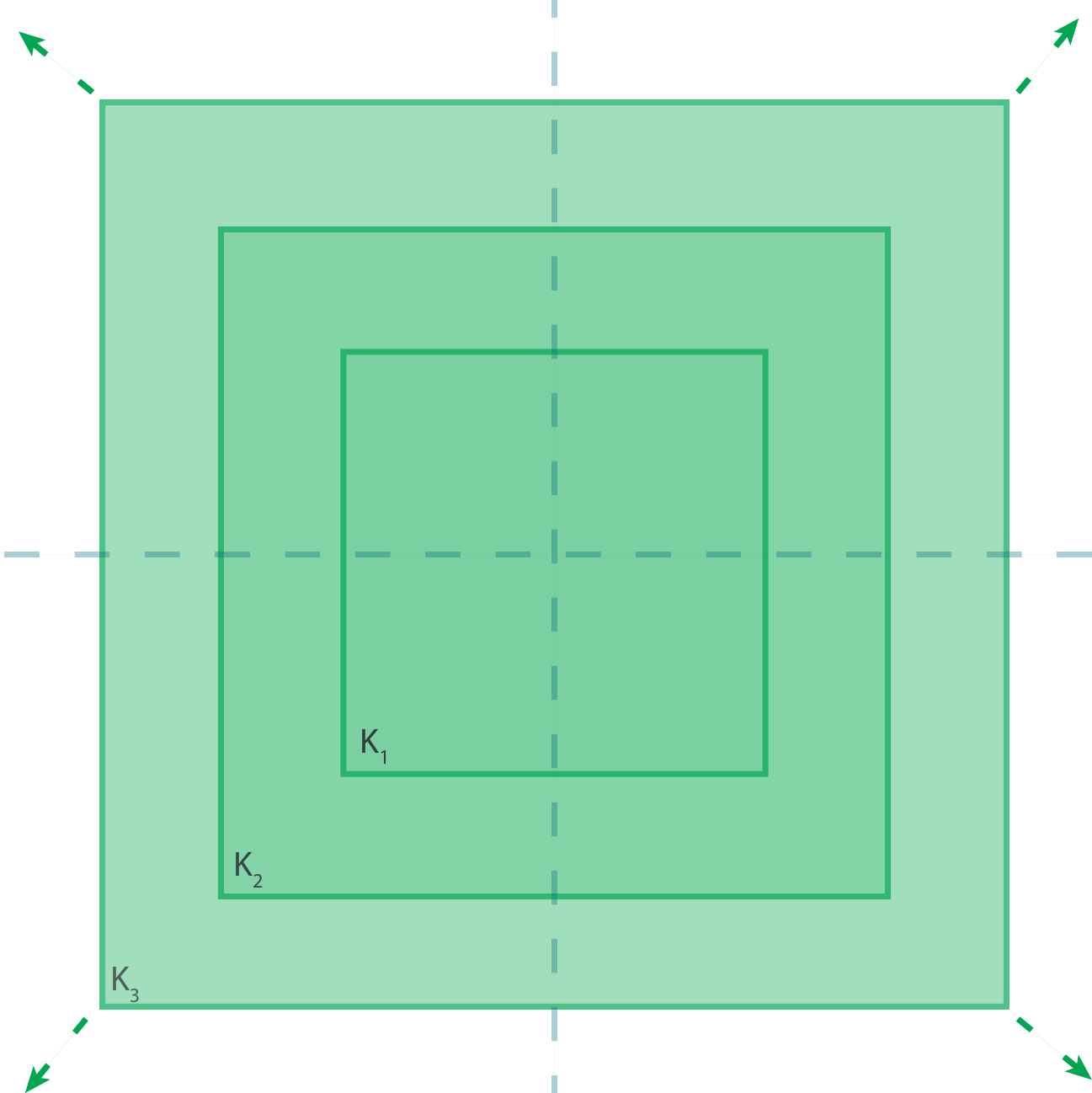}
    \caption{{The Cubic-Annuli Discretization of $\mathbb{R}^d$} (Definition~\ref{defn_goodparition})}
    \label{fig_cubic_annuli}
\end{figure}
Unlike classical quantitative universal approximation theorems, this next result describes the width, depth, and number of bi-linear pooling layers required for a neural network in $\operatorname{NN}^{\ReLU + \pool}$ to approximate a compactly supported Lipschitz functions while simultaneously exactly implementing its support; up to the following standardized discretization of the input space $\big\{ 
    K_n
        \eqdef 
    \{
    x\in \rrd:\, n<\|x\|_{\infty} \leq n+1
    \}
\big\}_{n=1}^{\infty}$, illustrated in Figure~\ref{fig_cubic_annuli}.

Let us mention that, using a using category-theoretic argument, we show that the csL$^1$-topology $\tau$ is independent of any discretization of $\rr^d$ used to construct it.  Consequentially, all our universality arguments and statements, such as Theorem~\ref{theorem_MAIN_Qualitative_UAT_Refined}, can without loss of generality be formulated using the {standardized discretization} illustrated in Figure~\ref{fig_cubic_annuli}.  
%
\begin{thrm}[Support Implementation and Uniform + $\tau$ Approximation of ReLU Networks with Pooling]
\label{theorem_MAIN_QUANTITATIVE_UAT_Refined}
\hfill\\
Let $f:\rr^d\rightarrow \rr^D$ be Lipschitz and compactly-supported and $\log_2(d)\in \nn_+$.  
For every ``width parameter'' $N \in \nn_+$ and every sequence $\{\epsilon_n\}_{n=1}^{\infty}$ in $(0,\infty)$ converging to $0$, there is a sequence $\{\hat{f}^{(n)}\}_{n=1}^{\infty}$ in $\operatorname{NN}^{\ReLU + \pool}$ satisfying:
\begin{enumerate}
    \item[(i)] \textbf{Quantitative Worst-Case Approximation:} for each $n\in \nn_+$
    $
    \max_{x\in [n_f,n_f]^d}\,
    \Big\|
    f(x) - \hat{f}^{(n)}(x)
    \Big\|
        \leq 
    \epsilon_n,
    $
    \item[(ii)] \textbf{Convergence in CSL$^1$-Topology $\tau$:} $\{\pool\circ \hat{f}^{(n)}\}_{n=1}^{\infty}$ converges to $f$ in the csL$^1$-topology $\tau$,
    \item[(iii)] \textbf{Support Implementation}: $\operatorname{ess-supp}(\hat{f}^{(n)})
    \subseteq
    \Big[
        -\sqrt[d]{
        2^{-d}\epsilon_n  + n_f^d
        }
            ,
         \sqrt[d]{
        2^{-d}\epsilon_n  + n_f^d
        }
    \Big]^d
    ,
    $
    where $n_f$ is defined by\hfill\\
    $ n_f
            \eqdef
        \min\{
        n \in \nn_+:\, \operatorname{ess-supp}(f)\subseteq [-n,n]^d
        \}
    .
    $
\end{enumerate}
Moreover, each $\hat{f}^{(n)}$ is specified by:%
\begin{enumerate}[(iv)]
    \item[(iv)] \textbf{Width}: $\hat{f}^{(n)}$ has Width $
        C_3
            + 
        C_4\max\{d\lfloor N^{1/d}\rfloor,N+1\}
        ,
        $
    \item[(v)] \textbf{Depth}:  $\hat{f}^{(n)}$ has Depth 
    $
    \frac{
    \epsilon_n^{-d/2}
    }{
    N\log_3(N+2)^{1/2}
    }
    \,
    \Big(
        \log_2(\operatorname{cap}(\operatorname{ess-supp}(f)
            ))
        \operatorname{diam}(\operatorname{ess-supp}(f))
        \operatorname{Lip}(f)
    \Big)^{d}
    C_1 + C_2
    $
    \item[(vi)] \textbf{Number of Bilinear Pooling Layers:} $\hat{f}^{(n)}$ uses $\log_2(d) +1$ bilinear pooling layers.
\end{enumerate} 
where the dimensional constants are 
$C_1\eqdef c\,2^d D^{3/d}d^d + 3d$, $C_2\eqdef + 2d + 2$,
$
C_3 \eqdef \max\{d(d-1) +2,D\},
$
$C_4 \eqdef d(D+1)+3^{d+3}$, 
and where $c>0$ is an absolute constant independent of $X,d$, $D$, and $f$.
\hfill\\
\hfill\\
\end{thrm}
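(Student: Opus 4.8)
The plan is to realize each $\hat f^{(n)}$ as the coordinatewise product $\Psi_n\cdot g^{(n)}$ of two pieces computed in parallel branches of one network: an \emph{interior approximant} $g^{(n)}$ — an ordinary ReLU network approximating $f$ uniformly on $[-n_f,n_f]^d$ to within $\epsilon_n$ — and a \emph{cutoff} $\Psi_n(x)\eqdef\prod_{i=1}^d\phi_n(x_i)$, a product of $d$ univariate ReLU ``trapezoids'' $\phi_n$ with $\phi_n\equiv 1$ on $[-n_f,n_f]$ and $\phi_n\equiv 0$ off $(-a_n,a_n)$, where $a_n\eqdef\sqrt[d]{2^{-d}\epsilon_n+n_f^d}$. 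The product of the $d$ factors $\phi_n(x_i)$ is formed by a balanced binary tree of $\log_2(d)$ bilinear pooling layers — which is exactly where the hypothesis $\log_2(d)\in\mathbb{N}_+$ enters — and one final pooling layer multiplies the resulting scalar $\Psi_n$ against each of the $D$ outputs of $g^{(n)}$, giving the count $\log_2(d)+1$ in (vi). Since $\tau$ was shown to be independent of the discretization of $\mathbb{R}^d$ used to build it, we work throughout with the cubic-annuli discretization of Figure~\ref{fig_cubic_annuli}.

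\textbf{Step 1: the interior approximant.} First I would apply a quantitative ReLU approximation theorem of the type in \citep{Shen2022_OptimalReLU}, combined with the metric-capacity covering estimates of \citep{acciaio2022metric}, to $f$ restricted to the compact set $\operatorname{ess-supp}(f)\subseteq[-n_f,n_f]^d$: covering $\operatorname{ess-supp}(f)$ by on the order of $\big(\log_2(\operatorname{cap}(\operatorname{ess-supp}(f)))\,\operatorname{diam}(\operatorname{ess-supp}(f))\,\operatorname{Lip}(f)/\epsilon_n\big)^d$ cells and piecewise-linearly interpolating $f$ on each yields a ReLU network $g^{(n)}\colon\mathbb{R}^d\to\mathbb{R}^D$ with $\sup_{x\in[-n_f,n_f]^d}\|f(x)-g^{(n)}(x)\|\le\epsilon_n$, of width $\lesssim\max\{d\lfloor N^{1/d}\rfloor,N+1\}$ and of the depth displayed in (v). I would then post-compose $g^{(n)}$ coordinatewise with the ReLU clip onto $[-\|f\|_\infty-1,\|f\|_\infty+1]$ — a bounded gadget of width $\mathcal O(D)$ and depth $\mathcal O(1)$ — so that, without disturbing the approximation on $[-n_f,n_f]^d$, we also get $\|g^{(n)}\|_\infty\le\|f\|_\infty+1$ \emph{everywhere}; this uniform bound is needed because $g^{(n)}$ carries no approximation guarantee on the transition shell $[-a_n,a_n]^d\setminus[-n_f,n_f]^d$.

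\textbf{Step 2: assembly and verification.} I would wire the two branches into one network: a depth-$\mathcal O(1)$ ReLU block computes $(\phi_n(x_1),\dots,\phi_n(x_d))$ in parallel, $\log_2(d)$ consecutive bilinear pooling layers collapse these to $\Psi_n(x)$ while the $D$ outputs of $g^{(n)}$ are carried along by pairing them with constant $1$'s, and a last pooling layer applied to $(\Psi_n,g^{(n)}_1,\Psi_n,g^{(n)}_2,\dots,\Psi_n,g^{(n)}_D)$ returns $\hat f^{(n)}=\Psi_n\cdot g^{(n)}\in\operatorname{NN}^{\ReLU+\pool}$; padding the shallow cutoff branch with identity layers aligns it with the deep branch, and tallying widths and depths of the two branches yields (iv)--(v), with the pooling count $\log_2(d)+1$. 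For (i)--(iii): on $[-n_f,n_f]^d$ one has $\Psi_n\equiv1$, hence $\hat f^{(n)}=g^{(n)}$ and (i) follows from Step 1; $\operatorname{supp}(\Psi_n)\subseteq[-a_n,a_n]^d$ gives (iii); and for (ii), splitting $\mathbb{R}^d$ into $[-n_f,n_f]^d$, the shell $[-a_n,a_n]^d\setminus[-n_f,n_f]^d$ (of Lebesgue measure $(2a_n)^d-(2n_f)^d=\epsilon_n$, on which $f=0$ a.e.\ and $\|\hat f^{(n)}\|\le\|f\|_\infty+1$), and the complement (where both vanish a.e.), one obtains $\|\hat f^{(n)}-f\|_{L^1(\mathbb{R}^d,\mathbb{R}^D)}\le\epsilon_n\big((2n_f)^d+\|f\|_\infty+1\big)\to0$; since the $a_n$ are uniformly bounded, all $\operatorname{ess-supp}(\hat f^{(n)})$ lie in a common cube, and the construction of $\tau$ in the proof of Theorem~\ref{theorem_Main_Properties_of_Tau} — under which convergence to a compactly supported limit means $L^1_{\operatorname{loc}}$-convergence together with eventual containment of the supports in one cube, sharpening part (i) there — delivers $\hat f^{(n)}\to f$ in $\tau$, i.e.\ (ii).

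\textbf{Main difficulty.} The architecture is essentially routine once the pooling tree is set up; the real work is Step 1 — extracting a ReLU approximant with \emph{exactly} the stated dependence on $\operatorname{cap}(\operatorname{ess-supp}(f))$, $\operatorname{diam}(\operatorname{ess-supp}(f))$, $\operatorname{Lip}(f)$, $N$, and $\epsilon_n$ by fusing a Shen-type optimal-rate construction with a metric-capacity covering argument, and then aggregating the widths and depths of the two interleaved branches so that they collapse to precisely the constants $C_1,\dots,C_4$. A secondary but genuine subtlety, already flagged, is the lack of any approximation control for $g^{(n)}$ on the transition shell, which is what forces the clipping step (or an a priori uniform bound built into the chosen approximant) in the $L^1$ estimate.
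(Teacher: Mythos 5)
Your overall architecture coincides with the paper's: the network is a mask-times-approximant product, the mask is a coordinatewise ReLU trapezoid supported on $[-a_n,a_n]^d$ with $a_n=\sqrt[d]{2^{-d}\epsilon_n+n_f^d}$ so that the transition shell has Lebesgue measure exactly $\epsilon_n$, the $d$ univariate factors are collapsed by a binary tree of $\log_2(d)$ pooling layers plus one final pooling layer against the $D$ outputs (this is Lemma~\ref{lemma_adjustment}), and convergence in $\tau$ is deduced from $L^1$-convergence together with common compact support via the criterion of Lemma~\ref{lem_approx_lipc}. The one place where you genuinely diverge is the mechanism behind Step~1. You propose to obtain the factor $\log_2(\operatorname{cap}(\operatorname{ess-supp}(f)))\operatorname{diam}(\operatorname{ess-supp}(f))\operatorname{Lip}(f)$ by ``fusing a Shen-type construction with a metric-capacity covering argument,'' and you rightly flag this as the real work — but a covering count does not naturally produce a \emph{logarithm} of the capacity multiplying the Lipschitz constant. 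The paper's Lemma~\ref{lem_Lipschitz} gets it differently: first extend $f$ from its support to all of $\rr^d$ using the Lipschitz-extension theorem of Br\'u\'e--Di Marino--Stra (the non-affine random projections), whose extension has Lipschitz constant $\lesssim\log_2(\operatorname{cap}(X))\operatorname{Lip}(f)$; then place $X$ in a cube of side $\sim\operatorname{diam}(X)$ by Jung's theorem and rescale to $[0,1]^d$, which contributes the $\operatorname{diam}$ factor; and only then apply the Shen--Yang--Zhang rate on the unit cube componentwise. If you want the depth bound in item (v) with precisely that product of three factors raised to the power $d$, you should route through the extension theorem rather than a covering argument. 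Finally, your clipping of $g^{(n)}$ to $\|f\|_\infty+1$ on the transition shell is a sensible precaution: the paper's Step~3 of Lemma~\ref{lemma_adjustment} bounds the shell contribution by its Lebesgue measure, which tacitly assumes the integrand is at most $1$ there, so your explicit uniform bound makes that estimate honest (at the cost of a slightly larger constant in the $L^1$ bound, which is harmless since it still tends to $0$).
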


In addition to the main contribution of Theorem~\ref{theorem_MAIN_QUANTITATIVE_UAT_Refined}, there are several additional points of technical novelty in Theorem~\ref{theorem_MAIN_QUANTITATIVE_UAT_Refined}.  The first such point is that the network complexity depends on the \textit{metric capacity} of the target function's essential support.  
Omitting constants, the depth of the ReLU networks $\hat{f}^{(n)}$ with pooling in Theorem~\ref{theorem_MAIN_QUANTITATIVE_UAT_Refined} encodes three of the target function $f$'s \textit{structural attributes}.  The first is the desired approximation quality, with more depth translating to better approximation capacity, and the second is the target function's regularity; both these factors are present in most available quantitative approximation theorems \citep{YAROTSKY_PWLin_2017_approxrates,2020SmoothReLU,jiao2021deep,DeepRELUSMOOTH_2021,Shen2022_OptimalReLU,ReLUHolomorphic}.  
\[
\operatorname{Depth}(f^{(n)}) \approx
\underbrace{
    \frac{
        \epsilon_n^{-d/2}
        }{
        N\log_3(N+2)^{1/2}
        }
    }_{
        \mbox{Approximation Quality}
    }
    \quad
\underbrace{
        \operatorname{Lip}(f)
    ^{d}
}_{
    \mbox{Target's Regularity}
}
\quad
\underbrace{
    \Big(
    {\color{darkcerulean}{
        \log_2(\operatorname{cap}(\operatorname{ess-supp}(f)
            ))
    }}
        \operatorname{diam}(\operatorname{ess-supp}(f))
    \Big)^{d}
    }_{
        \mbox{Complexity: Target's Essential Support}
    }
\]
Part of the novelty of Theorem~\ref{theorem_MAIN_QUANTITATIVE_UAT_Refined} is that it identifies a third quantity impacting the approximation quality of a ReLU network with pooling; namely, the complexity of the target function's support.  This third factor can be decomposed into two parts, the diameter of the target function's essential support, which other approximation theorems have also considered \cite{SIEGEL2020313,kratsios2021universal}, but what is most interesting here is the effect of the \textit{fractal dimension} (via the metric capacity; see \cite{brue2021linear} for details) of the target function's essential support.  In particular, the result shows that functions essentially supported on low-dimensional sets (e.g. low-dimensional latent manifolds) must be simpler to approximate than those with unbounded support (e.g. locally Lebesgue-integrable functions supported on $\rr^d$.  Theorem~\ref{theorem_MAIN_Qualitative_UAT_Refined} and variant of \citep[Theorem 1.1]{Shen2022_OptimalReLU} and of the main result of \cite{YAROTSKY_PWLin_2017_approxrates} where the approximation has \textit{controlled support} made to match that of the target function.  To the best of the authors' knowledge, the result is also the only quantitative universal approximation which encodes the target function $f$'s complexity in terms of its Lipschitz regularity, as well as, the size and dimension of its essential support. 
We note that, since one can show that $\tau$ is not a metric topology, therefore, a \textit{quantitative} counterpart of Theorem~\ref{theorem_MAIN_Qualitative_UAT_Refined} does not exist for arbitrary $f\in L^1_{\operatorname{loc}}(\rrd,\rrD)$ for the csL$^1$-topology $\tau$.  

An additional point of technical novelty in Theorem~\ref{theorem_MAIN_QUANTITATIVE_UAT_Refined} appears through new tools to the deep learning literature used in the result's derivation.  Namely, we introduce the non-affine random projections of \cite{ohta2009extending,brue2021linear} to encode this dependence into the approximation using contemporary Lipschitz-extension arguments when deriving the universal approximation theorem.  We note that, these random projections are distinct mathematical objects from the linear random projections of \cite{JohnsonLindenstrauss_Lemma_1984} and, as shown in \cite{AmbrioskiPuglisi2020LipschitzExtension}, these random projections are closely related to the random partitions of unity introduced by \cite{LeeNaor_2005_RandomPartition}.  

\paragraph{A Sanity Check: Comparison Between Networks in $\operatorname{NN}^{\operatorname{ReLU}+\operatorname{Pool}}$ and Analytic Models}
\hfill\\
We round off our discussion by verifying our intuition about analytic model classes is indeed reflected by the topology csL$^1$-topology $\tau$.  For illustrative purposes, we first consider the classical polynomial regressors, whose universal approximation capabilities in more classical topologies are guaranteed by the classical Stone-Weierstrass theorem and its numerous contemporary variants \cite{1994ProllaStoneWeierstrassTheorems}, \cite{Timofte2Liaqad2018JMathAnalApplStoneWeierstrassTheorems}, or of \cite{FalindoSanchis2004StoneWeierstrassTheoremsForGroupValued}).  
\begin{prop}[{Polynomial-Regressors Are Not Universal $L^1_{\operatorname{loc}}(\rrd,\rrD)$ for $\tau$}]
\label{prop_polynomials_not_dense}
\hfill\\
     The set $\mathbb{R}[x_1,\dots,x_d:D]$ is not dense in $L^1_{\operatorname{loc}}(\rrd,\rrD)$ for the CSL$^1$-topology $\tau$.  
\end{prop}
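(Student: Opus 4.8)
The plan is to deduce this directly from the structural property~(i) of the csL$^1$-topology recorded in Theorem~\ref{theorem_Main_Properties_of_Tau}, together with the rigidity of polynomials. Since density is a property of the ambient space, it suffices to exhibit a \emph{single} target $f\in L^1_{\operatorname{loc}}(\rrd,\rrD)$ lying outside the $\tau$-closure of $\mathbb{R}[x_1,\dots,x_d:D]$. I would take the simplest available one: a nonzero, essentially compactly supported, globally integrable function, e.g.\ $f\eqdef \mathbbm{1}_{[-1,1]^d}\,e_1$, where $e_1$ denotes the first standard basis vector of $\rrD$ (the distance function $d_{\rrd\setminus[-1,1]^d}$ from the introduction would serve equally well). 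Thus $f\in L^1(\rrd,\rrD)$, $f$ is essentially supported on $[-1,1]^d$, and $\|f\|>0$.

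Next I would argue by contradiction: suppose there is a sequence $\{p_k\}_{k\in\nn_+}$ in $\mathbb{R}[x_1,\dots,x_d:D]$ with $p_k\to f$ in $\tau$. Applying Theorem~\ref{theorem_Main_Properties_of_Tau}~(i) with $n=1$ produces an integer $N\geq 1$ such that all but finitely many $p_k$ satisfy $\operatorname{ess-supp}(p_k)\subseteq[-N,N]^d$, and moreover $\lim_{k\uparrow\infty}\|p_k-f\|=0$. The one genuinely substantive observation is that such a polynomial must vanish identically: if $\operatorname{ess-supp}(p_k)\subseteq[-N,N]^d$, then $p_k=0$ Lebesgue-a.e.\ on the nonempty open set $\rrd\setminus[-N,N]^d$, hence $p_k\equiv 0$ on that open set by continuity, and a polynomial vanishing on a nonempty open subset of $\rrd$ is the zero polynomial. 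Consequently $p_k=0$ for all but finitely many $k$, so $\|p_k-f\|=\|f\|$ for all large $k$; combined with $\|p_k-f\|\to 0$ this forces $\|f\|=0$, contradicting the choice of $f$. Hence no such sequence exists.

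I do not anticipate any real obstacle: the proposition is a sanity check confirming that $\tau$ behaves as intended, and essentially all of the work is off-loaded onto property~(i) of Theorem~\ref{theorem_Main_Properties_of_Tau}. The only point requiring mild care is the deduction that a polynomial vanishing a.e.\ off a compact cube is the zero map --- passing first from ``a.e.'' to ``everywhere'' by continuity, and then invoking the identity theorem for polynomials (equivalently, that the zero set of a nonzero polynomial has empty interior, in fact Lebesgue measure zero). An alternative route, avoiding the black box, would be to run the same contradiction directly against the explicit neighbourhood basis of $f$ in $\tau$ furnished by the construction of $\tau$, but funnelling the argument through~(i) is the cleanest presentation.
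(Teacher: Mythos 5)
Your proposal is correct and follows essentially the same route as the paper: the paper proves this proposition by noting that polynomials are analytic and invoking Lemma~\ref{lem_non_stone_Weierstrass}, whose proof uses the very same target $I_{[-1,1]^d}\cdot\bar{1}$, the same necessary condition for $\tau$-convergence (Proposition~\ref{prop_identification}, i.e.\ Theorem~\ref{theorem_Main_Properties_of_Tau}~(i)), and the same rigidity argument that a function vanishing a.e.\ on a nonempty open set must be identically zero. The only cosmetic difference is that you specialize the identity theorem to polynomials rather than passing through the more general class of analytic maps.
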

We return to our motivational example, by confirming that the class of deep feedforward networks which can leverage any number of analytic functions and which also have access to bilinear pooling are not dense in $L^1_{\operatorname{loc}}(\rrd,\rrD)$ for $\tau$.  Denote this class of neural networks by $\operatorname{NN}^{\omega + \pool}$; which reflects the notation for the set $C^{\omega}(\rr)$ of real-valued analytic functions on $\rr$.  
\begin{prop}[{Analytic Feedforward Networks Are Not Universal $L^1_{\operatorname{loc}}(\rrd,\rrD)$ for $\tau$}]\hfill\\
\label{prop_analyticnetworks_not_dense}
The set $\operatorname{NN}^{\omega + \pool}$ is not dense in $L^1_{\operatorname{loc}}(\rr^d,\rr^D)$ with respect to the CSL$^1$-topology $\tau$ $\tau$.
\end{prop}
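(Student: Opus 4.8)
The plan is to argue by contradiction, combining the rigidity of $\tau$-convergence recorded in Theorem~\ref{theorem_Main_Properties_of_Tau}~(i) with the identity theorem for real-analytic functions; this is essentially the same mechanism that drives Proposition~\ref{prop_polynomials_not_dense}. Fix any nonzero vector $v\in\rr^D$ and set $f\eqdef v\,\mathbf{1}_{[-1,1]^d}\in L^1(\rr^d,\rr^D)$, so that $\operatorname{ess-supp}(f)\subseteq[-1,1]^d$ and $\|f\|_{L^1(\rr^d,\rr^D)}=2^d\|v\|>0$. Suppose toward a contradiction that $\operatorname{NN}^{\omega+\pool}$ were dense in $L^1_{\operatorname{loc}}(\rr^d,\rr^D)$ for $\tau$. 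Then there is a sequence $\{f_k\}_{k\in\nn_+}$ in $\operatorname{NN}^{\omega+\pool}$ with $f_k\to f$ in $\tau$, and Theorem~\ref{theorem_Main_Properties_of_Tau}~(i) applied with $n=1$ yields an $N\geq 1$ such that all but finitely many $f_k$ are essentially supported in $[-N,N]^d$ while $\lim_{k\uparrow\infty}\|f_k-f\|_{L^1(\rr^d,\rr^D)}=0$.

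The second step is to observe that every $g\in\operatorname{NN}^{\omega+\pool}$ is a real-analytic map $\rr^d\to\rr^D$: it is a composition of affine maps, coordinatewise analytic activation functions, and the pooling map $\pool(x_1,\dots,x_{2n})=(x_1x_2,\dots,x_{2n-1}x_{2n})$, which is polynomial hence analytic, and real-analyticity of maps between Euclidean spaces is preserved under composition and passage to coordinates. Now pick $k$ large enough that $\operatorname{ess-supp}(f_k)\subseteq[-N,N]^d$. Since $f_k$ is continuous, it vanishes identically on the nonempty open set $\rr^d\setminus[-N,N]^d$; as $\rr^d$ is connected, the identity theorem for real-analytic functions forces $f_k\equiv 0$. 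Hence $\|f_k-f\|_{L^1(\rr^d,\rr^D)}=\|f\|_{L^1(\rr^d,\rr^D)}=2^d\|v\|>0$ for all large $k$, contradicting $\lim_{k\uparrow\infty}\|f_k-f\|_{L^1(\rr^d,\rr^D)}=0$. Therefore $\operatorname{NN}^{\omega+\pool}$ is not dense in $L^1_{\operatorname{loc}}(\rr^d,\rr^D)$ for $\tau$.

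The only point that needs care---and the same one behind Proposition~\ref{prop_polynomials_not_dense}---is that analyticity must be inherited by the entire network class rather than merely by its individual building blocks; this is exactly the composition-stability of real-analytic maps together with the fact that $\pool$ is polynomial, so the argument needs no extra hypothesis (in particular, no requirement that $\log_2(d)\in\nn_+$) and is insensitive to whether pooling layers are permitted only at the output or throughout the network. A cosmetic variant strengthens the conclusion: replacing the witness $f$ by the motivating distance function $d_{\rr^d\setminus K}$ for a compact $K\subseteq\rr^d$ with nonempty interior---which is Lipschitz, essentially supported on $K$, and strictly positive on $\operatorname{int}(K)$, hence of positive $L^1$-norm---shows that the obstruction already appears within the dense class of compactly-supported Lipschitz functions.
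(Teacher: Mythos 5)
Your proposal is correct and is essentially the paper's own argument: the paper proves this via Proposition~\ref{prop_identification} (the content of Theorem~\ref{theorem_Main_Properties_of_Tau}~(i)) applied to the same witness $f=\mathbf{1}_{[-1,1]^d}\cdot\bar{1}$, followed by the identity theorem for real-analytic functions and closure of analyticity under composition (Lemma~\ref{lem_non_stone_Weierstrass}). The only, minor, difference is that you explicitly note the pooling map is polynomial and hence preserves analyticity, a point the paper's proof of the non-density of $\operatorname{NN}^{\omega+\pool}$ passes over by arguing only for $\operatorname{NN}^{\omega}$; otherwise the two proofs coincide, including the shared (and unaddressed in both) reliance on sequences rather than nets to witness density in the non-metrizable topology $\tau$.
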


\subsection{Connection to Other Deep Learning Literature}
\label{s_Introduction__ss_connections_other_literature}
Our results are perhaps most closely related to \cite{park2020minimum} which demonstrates, to the best of our knowledge, the only other qualitative gap in the deep learning theory.  Namely, therein, the authors identify a minimum width under which all networks become too narrow to approximate any integrable function; equivalently, the set of ``very narrow'' deep feedforward networks is qualitatively less expressive than the set of ``arbitrary deep feedforward networks''.  Just as our main results are qualitative, the results of \cite{park2020minimum} can be contrasted against the main result of \cite{Shen2022_OptimalReLU} which quantifies the exact impacts of depth and width on approximation error of deep feedforward networks.  

Our results also add to the recent scrutiny given to deep feedforward networks deploying several activation functions  \citep{jiao2021deep,Yarotsky2020NEuripsPhaseDiagram,beknazaryan2021neural,yarotsky2021elementary,acciaio2022metric}.  The connection to this branch of deep learning theory happens on two distinct fronts.  
%
First $\operatorname{NN}^{\omega + \pool}$ is clearly a family of deep feedforward networks simultaneously utilizing several activation functions.  However, more interesting, is the second connection between networks in $\operatorname{NN}^{\ReLU + \pool}$ and the approximation theory of deep feedforward networks with generalized ReLU activation function $\operatorname{ReLU}_r(x)\eqdef \max\{x,0\}^r$, where $r\in \rr$ is a \textit{trainable parameter}.  This is because, $\pool$ can be implemented by a feedforward network with $\operatorname{ReLU}_2$ activation function, since $x^2 = \operatorname{ReLU}_2(x)+\operatorname{ReLU}_2(-x)$ (where $x\in \rr$) and \citep[Lemma 4.3]{kidger2020universal} shows that the multiplication map $\rr^2\ni (x_1,x_2)\mapsto x_1x_2$ can be exactly implemented by a neural network with one hidden layer and with activation function $x\mapsto x^2$.  Therefore, any $f\in \operatorname{NN}^{\ReLU + \pool}$ there are $f_1,\dots,f_I \in 
    \operatorname{NN}^{\ReLU_2} \cup \operatorname{NN}^{\ReLU}$ representing $f$ via
\[
    f = f_I\circ \dots \circ f_1
.
\]
We note that networks with activation function in  $\{\operatorname{ReLU}_r\}_{r\in \rr}$ have recently rigorous study in \cite{gribonval2019approximation} and are related to the the constructive approximation theory of splines where $\ReLU_r$ are known as \textit{truncated powers} (see \citep[Chapter 5, Equation (1.1)]{DeVoreLorentz_CABook_1993}).  
We also mention that Theorem~\ref{theorem_MAIN_QUANTITATIVE_UAT_Refined} is related to recent deep learning research considering the approximation of a function or probability measure's support.  The former case is considered by \cite{KRATSIOS2022192}, where the authors consider an exotic neural network architecture specialized in the approximation of piecewise continuous functions in a certain sense.  In the latter case, \cite{puthawala2022universal} use a GAN-like architecture to approximate probability distributions supported on a low-dimensional manifold by approximating their manifold and the density thereon using a specific neural network architecture.   In contrast, our results compare the approximation capabilities of feedforward networks built using different activation functions.  
\subsection*{Organization of Paper}\label{s_intro_ss_paper_organization}
This paper is organized as follows. Section~\ref{s_prelim} reviews the necessary deep learning terminology, measure theoretic, and topological background needed in the formulation of our main result.  Section~\ref{s_OutlineProofs} derives the main results, with the understanding that all technical lemmata and their proofs are relegated to the paper's appendix.  Section~\ref{s_Discussion} then discusses some of the implications our results and possible future directions of this type of analysis.
\section{Preliminaries}
\label{s_prelim}
We use $\nn_+$ to denote the set of positive integers, fix $d,D\in \nn_+$, and let $\|\cdot\|$ denote Euclidean distance on $\rrD$.   

To simplify the analysis, we emphasize that $d$ will \textit{always be assumed to be a power of $2$; i.e. $d=2^{d'}$ where $d'\in \nn_+$}.
\subsection{Deep Feedforward Networks}
\label{s_prelim__ss_DeepFeedforwardNetworks}

Originally introduced by \cite{mcculloch1943logical} as a prototypical model for artificial neural computation, deep feedforward networks have since lead to computational breakthroughs across various areas from biomedical imaging \cite{ronneberger2015u} to quantitative finance \cite{DeepHedging_2019,Seb_2022_FinStoch}.  Though deep learning tools has become pedestrian in most contemporary scientific computational endeavors, the mathematical foundations of deep learning are still in their early stages.  

Therefore, in this paper, we study the approximation-theoretic properties of what is arguably the most basic deep learning model; namely, the \textit{feedforward (neural) network}.  These are models which iteratively process inputs in $\rrd$ by repeatedly applying affine transformations (as in linear regression) and simple component-wise non-linearity called \textit{activation functions}, until an output in $\rrD$ is eventually produced. 

Our discussion naturally begins with the formal definition of the class of deep feedforward neural networks defined by a (non-empty) \textit{family of (continuous) activation functions} $\Sigma\subseteq C(\rr)$.  In the case where $\Sigma=\{\sigma\}$ is a singleton, one recovers the classical definition of a feedforward network studied in \cite{Cybenko,hornik1989multilayer,leshno1993multilayer,YAROTSKY_PWLin_2017_approxrates,kidger2020universal} and when $\Sigma=\{\sigma_r\}_{r\in \rr}$ and the map $(r,x)\mapsto \sigma_{r}(x)$ is Lebesgue a.e.\ differentiable then one obtains so-called \textit{trainable activation functions} as considered in \cite{Florian2021efficient,kratsios2022universal,acciaio2022metric} of which the $\operatorname{PReLU}_r(x)\eqdef \max\{x,rx\}$ activation function of \cite{he2015delvingPReLU} is prototypical.  More broadly, neural networks build using families of activation functions $\Sigma$ exhibiting sub-exponential approximation rates have also recently become increasingly well-studied; e.g. \cite{Yarotsky2020NEuripsPhaseDiagram,jiao2021deep,yarotsky2021elementary,beknazaryan2021neural}.  

Consider the \textit{bilinear pooling layer}, from computer vision \citep{lin2015bilinear,kim2016hadamard,fang2019bilinear}, given for any \textit{even} $n\in \nn_+$ and $x \in \rr^{n}$ as
\[
\pool(x) \eqdef \left(x_i x_{n/2+i}\right)_{i=1}^{n/2}
.
\]
Alternatively, $\pool$ can be thought of as a \textit{masking layer} with non-binary values, similar to the bilinear masking layers or bilinear attention layers used in the computer-vision literature \cite{fang2019bilinear,lin2015bilinear} or in the low-rank learning literature \cite{kim2016hadamard}, or as the \textit{Hadamard product} of the first $n/2$ components of a vector in $\rr^{n}$ with the last $n$ components. 

Fix a \textit{depth} $J,d,D\in \nn_+$.
A function $\hat{f}:\rr^d\rightarrow \rr^D$ is said to be a \textit{deep feedforward network with (bilinear) pooling} if for every $j=0,\dots, J-1$ there are 
Boolean \textit{pooling parameters} $\alpha^{(j)} \in \{0,1\}$, 
$d_{j,2}\times d_{j,1}$-dimensional matrices $A^{(j)}$ with $d_{j+1,1}/2 = d_{j,2}$ if $d_{j,2}$ is even and if $\alpha=1$ and $d_{j+1,1}=d_{j,2}$ otherwise which are called \textit{weights}, 
$b^{(j)}\in \rr^{d_j}$ and a $c\in \rr^{d_J}$ called \textit{biases}, and \textit{activation functions} $\sigma^{(j,i)}\in \Sigma$ such that $\hat{f}$ admits the iterative representation
\begin{equation}
\label{eq_definition_ffNNrepresentation_function}
\begin{aligned}
    %
    \hat{f}(x) 
        & \eqdef
    x^{(J)} +c
    \\
    x^{(j+1)}
        & \eqdef
    \begin{cases}
    \pool(\tilde{x}^{(j+1)}) & : \alpha^{(j)} = 1 \mbox{ and } d_{j+1} \mbox{ is even}
        \\
    \tilde{x}^{(j+1)} & : \mbox{else}
    \end{cases}
        & \mbox{for }  
        j=0,\dots,J-1
    \\
    %
    \tilde{x}^{(j+1)}_i &\eqdef
    \sigma^{(j,i)}(
        (A^{(j)}
        x^{(j)}
            +
        b^{(j)}
        )_i
        )\qquad 
            & \mbox{for }  
            j=0,\dots,J-1
            ;\,
            i=1,\dots,d_{j+1}
    \\
    x^{(0)} &\eqdef x.
\end{aligned}
\end{equation}
We denote by $\operatorname{NN}^{\Sigma + \pool}$ the set of all deep feedforward networks with pooling and activation functions belonging to $\Sigma$.  
If, in the above notation, $\hat{f}$ is such that $x^{(j+1)} = \tilde{x}^{(j+1)}$ then, we say that $\hat{f}$ is a \textit{deep feedforward network (without pooling)}.  The collection of all deep feedforward networks (without pooling) is denoted by $\operatorname{NN}^{\Sigma}$ and activation functions belonging to $\Sigma$.  
 
In either case, if $\Sigma$ consists only of a single activation function $\sigma$ then, we use $\operatorname{NN}^{\Sigma + \pool}$ to denote $\operatorname{NN}^{\sigma + \pool}$.  Similarly, if $\Sigma=\{\sigma\}$ then we set $\operatorname{NN}^{\sigma}\eqdef\operatorname{NN}^{\Sigma}$.  Let us consider some examples of activation functions.  
\begin{ex}[Non-Affine and Piecewise Linear Networks]
\label{ex_pwlin_2breaks}
An activation function $\sigma\in C(\rr)$ is called non-affine and piecewise linear if: there exist $-\infty = t_0<t_1<\dots<t_p<t_{p+1}=\infty$ and some $m_1,\dots,m_p,b_1,\dots,b_p\in \rr$ for which
\begin{enumerate}
    \item[(i)] $\sigma(x)=m_ix+b_i$ for every $t\in (t_i,t_{i+1})$ for each $i=0,\dots,p$,
    \item[(ii)] There exist some $i\in \{1,\dots,p\}$ for which $\sigma'(t_i)$ is undefined.  
\end{enumerate}
The prototypical example of such an activation function is $\ReLU(x)\eqdef \max\{0,x\}$.   
\end{ex}
%
\begin{ex}[Deep Feedforward Networks with ``Adaptive'' Analytic Activation Functions {($\operatorname{NN}^{\omega}$)}]
\label{ex_adaptiveanalytic}
Let $C^{\omega}(\rr)$ denote the set of a analytic maps from $\rr$ to itself.  We set $\operatorname{NN}^{\omega}\eqdef \operatorname{NN}^{C^{\omega}(\rr)}$ and we use $\operatorname{NN}^{\omega + \pool} \eqdef \operatorname{NN}^{C^{\omega}(\rr) + \pool}$ 
\end{ex}
\subsection{Measure Theory}
\label{s_prelim__ss_MeasureTheory}
Following \citep[Chapter 1]{SchwartzLaurentTheoriesdesdistributions}, we call Borel measurable function $f:\rr^d\rightarrow \rr^D$ is called \textit{locally integrable} if, on each compact subset $K\subset \rr^d$ the Lebesgue integral $\int_{x\in K}\, \|f(x)\|\,dx$ is finite.  Let $L^1_{\operatorname{loc}}(\rrd,\rrD)$ denote the set of locally integrable functions from $\rrd$ to $\rrD$; with equivalence relation $f\sim g$ if and only if $f$ and $g$ differ only on a set of Lebesgue measure $0$.  The set $L^1_{\operatorname{loc}}(\rrd,\rrD)$ is made into a \textit{complete metric space} by equipping it with the distance function $d_{L^1_{\operatorname{loc}}}$ defined on any two $f,g\in L^1_{\operatorname{loc}}(\rrd,\rrD)$ by
\[
    d_{L^1_{loc}}(f,g)
        \eqdef 
    \sum_{n=1}^{\infty} \frac1{2^n} 
        \frac{
        	\int_{\|x\|\leq n}
            	\left\|
            	(f(x)-g(x))
            	\right\| \,dx
        }{
        	1
        	+ 
        	\int_{\|x\|\leq n} 
            	\left\|
            	(f(x)-g(x))
            	\right\| \, dx
        }
.
\]
The subset of $L^1_{\operatorname{loc}}(\rrd,\rrD)$ consisting of all \textit{integrable ``functions''}, i.e. all $f\in L^1_{\operatorname{loc}}(\rrd,\rrD)$ for which the integral $\int_{x\in \rrd}\, \|f(x)\|\,dx$ is finite, is denoted by $L^1(\rrd,\rrD)$. The set $L^1(\rrd,\rrD)$ is made into a Banach space, called the \textit{Bochner-Lebesgue} space, by equipping it with the norm
$
    \|f\|_{L^1}
        \eqdef
    \int_{x\in \rrd}\,
    \|f(x)\|\,dx
.
$
\subsection{Point-Set Topology}
\label{s_prelim__ss_PointSetTopology}
In most of analysis one uses the language of \textit{metric spaces}, i.e.: an (abstract) set of points $X$ together with a distance function $d:X^2\rightarrow [0,\infty)$ satisfying certain axioms (see \citep{heinonen2001lectures}), to the similarity of dissimilarity between different mathematical objects.  However, not all notions of similarity can be described by a metric structure and this is in particular true for several very finer notions of similarity playing central roles in functional analysis (see \cite{SaxonNarayanaswamiLFnotMetrizable}).  

In such situations, one instead turns to the notion of a \textit{topology} to qualify closeness of two objects without relying on the quantitative notion of distance defined though by a metric.   Briefly, a topology $\tau_X$ on a set $X$ is a collection of subsets of $X$ declared as being ``open''; we require only that $\tau_X$ satisfy certain axioms reminiscent of the familiar open neighborhoods build using balls in metric space theory.  Namely, $\tau_X$ contains the empty set and the ``total'' set $X$, the union of elements in $\tau_X$ are again a member of $\tau_X$, and the countable intersection of sets in $\tau_X$ are again a set in $\tau$.  
A \textit{topological space} is a pair $(X,\tau_X)$ of a set $X$ and a topology $\tau_X$ on $X$.  
If clear from the context, we denote $(X,\tau_X)$ by $X$.  
\begin{ex}[Metric Topology on {$L^1_{\operatorname{loc}}(\rrd,\rrD)$}]
\label{ex_topology_induced_metric}
The \textit{metric topology} on $L^1_{\operatorname{loc}}(\rrd,\rrD)$, which exists, is the smallest topology on $L^1_{\operatorname{loc}}(\rrd,\rrD)$ containing all the \textit{open balls}
\[
    B_{L^1_{\operatorname{loc}}(\rrd,\rrD) }(f,\epsilon)
        \eqdef
    \left\{
        g\in L^1_{\operatorname{loc}}(\rrd,\rrD)
            :\,
        d_{L^1_{\operatorname{loc}}}(f,g)<\epsilon
    \right\}
    ,
\]
where $f \in L^1_{\operatorname{loc}}(\rrd,\rrD)$ and $\epsilon>0$.  We denote this topology by $\tau_{\operatorname{loc}}$.
\end{ex}
A topology on the subset $L^1(\rrd,\rrD)$ of $L^1_{\operatorname{loc}}(\rrd,\rrD)$ can always be defined by restricting $\tau_{\operatorname{loc}}$ as follows.  
\begin{ex}[Subspace Topology on {$L^1(\rrd,\rrD)$}]
\label{ex_topology_induced_metric_subspace}
The subspace topology on $L^1(\rrd,\rrD)$, relative to the metric topology on $L^1_{\operatorname{loc}}(\rrd,\rrD)$, is the collection $\{U\cap L^1(\rrd,\rrD):\, U \in \tau_{\operatorname{loc}}\}$.
\end{ex}
A topology $\tau_X'$ on $X$ is said to be strictly \textit{stronger} than another topology $\tau_X$ on $X$ if $\tau_X\subset \tau_X'$.  
The key relation between $L^1_{\operatorname{loc}}(\rrd,\rrD)$ and $L^1(\rrd,\rrD)$ is that even if former is strictly larger as a set, 
the topology on the latter induced by the norm $\|\cdot\|_{L^1}$ is strictly stronger than $\tau_{\operatorname{loc}}$.  

The norm topology on $L^1(\rr^d,\rr^D)$ is defined as follows.
\begin{ex}[Norm Topology on {$L^1(\rrd,\rrD)$}]
\label{ex_topology_induced_norm}
The \textit{norm topology} on $L^1(\rrd,\rrD)$, which exists, is the smallest topology on $L^1(\rrd,\rrD)$ which contains all the \textit{open balls}
\[
    B_{L^1(\rrd,\rrD)}(f,\epsilon)
        \eqdef
    \left\{
        g\in L^1(\rrd,\rrD)
            :\,
        \|f-g\|_{L^1}<\epsilon
    \right\}
    ,
\]
where $f \in L^1(\rrd,\rrD)$ and $\epsilon>0$.
We denote this topology by $\tau_{\operatorname{norm}}$.
\end{ex}
The qualitative statement being put forth by a \textit{universal approximation theorem} (e.g. \cite{leshno1993multilayer,Petrushev1999,YAROTSKYSobolev,suzuki2018adaptivity,MR4048990,heinecke2020refinement,kidger2020universal,zhou2020universality,kratsios2020non,SIEGEL2020313,kratsios2021neu,kratsios2022universal,MR4376566}) is a statement about the topological genericness of a machine learning model, such as a neural network model, in specific sets topological ``function'' spaces.  Topological genericness is called \textit{denseness}, and we say that a subset $F\subseteq X$ is dense with respect to a topology $\tau_X$ on $X$ if: for every non-empty open subset $U\in \tau_X$ there exists an element $f\in F$ which also belongs to $U$.  

Related is the notion of \textit{convergence} of a sequence in a general topological space $X$. 
Let $I$ be a set with a preorder $\preccurlyeq$ (i.e. for every $i,j,k\in I$ $i\preccurlyeq i$ and if $i\preccurlyeq j$ and $j \preccurlyeq k$ then $i\preccurlyeq k$), 
such that every finite subset of $I$ has an upper-bound with respect to $\preccurlyeq$.  A typical example of a directed set is $\mathbb{N}$ equipped with the preorder given by $\leq$.  
A \textit{net} in a topological space $X$ is a map from a directed set $I$ to $X$; we denote nets by $(x_i)_{I\in I}$.  A typical example of a net is a sequence; in which case the directed set $I$ is the natural numbers with pre-order $\leq$.  
The next $(x_i)_{I\in I}$ is said to \textit{converge} to an element $x$ of $X$ with respect to the topology $\tau_X$ if: for every $U\in \tau_X$ containing $x$, there exists some $i_U\in I$ such that for every $i\in I$ if $i_U \preccurlyeq I$ then $x_i\in U$.  
\subsection{Limit-Banach Spaces (LB-Spaces)}
\label{s_prelim__ss_LCSs}
Our construction will exploit a special class of \textit{topological vector spaces}, i.e. vector spaces wherein addition and scalar multiplication are continuous operators, formed by inductively \textit{gluing} together ascending sequences of Banach spaces.  Specifically, a topological vector space $X$ is a \textit{limit-Banach} space, nearly always referred to as an LB-space in the literature, if first, one can exhibit sequence of strictly nested Banach spaces $\{X_n\}_{n=1}^{\infty}$ (i.e. each $X_n$ is a proper subspace of $X_{n+1}$) such that 
\[
X = \cup_{n=1}^{\infty}\, X_n.
\]
Then, the topology on $X$ must be \textit{smallest} topology containing every convex subset $B\subseteq X$ for which $k b \in B$ whenever $k\in [-1,1]$ and $b\in B$, and for every positive integer $n$, $0\in B\cap X_n$ and $B\cap X_n$ is an open subset of $X_n$.  

Conversely, given a sequence of strictly nested Banach spaces $\{X_n\}_{n=1}^{\infty}$ one can always form an ``optimal'' LB-space as follows.  Define $X\eqdef \cup_{n=1}^{\infty}\,X_n$ and equip $X$ with the finest topology making $X$ into an LB-space and such that, for every $n\in \nn_+$, the inclusion $X_n\subseteq X$ is continuous.  Indeed, as discussed in \citep[Section 3.8]{OsborneLCSs2014}, such a topology always exists%
\footnote{In the language of category theory, $X$ is the colimit of the inductive system $(\{X_n\}_{n=1}^{\infty},\subseteq)$ in the category of locally-convex topological vector spaces with bounded linear maps as morphisms.}%
.  We will henceforth refer to $X$ as the \textit{LB-space glued together from $\{X_n\}_{n=1}^{\infty}$.  }

A classical example of an LB-space arises when one wants to analyse polynomial functions but does not want to take their closure in some larger space (e.g. a larger space containing power series).  We now present this example.  
\begin{ex}[Polynomial Functions]
\label{ex_polynomials}
For every $n\in \nn_+$, the set of degree at-most $n$ polynomial functions mapping $\rr$ to $\rr$ is $\mathbb{R}_n[X]\eqdef 
\{p(x)=\sum_{i=0}^n\, \beta_i\, x^i\; \beta \in \rr^{n+1}\}.
$  We make $\mathbb{R}_n[X]$ into a Banach space through its identification the coefficients of polynomials in $\mathbb{R}_n[X]$ with $\rr^{n+1}$; i.e. for any polynomial $p(x) = \sum_{i=0}^n \beta_i x^i\in \mathbb{R}_n[X]$ we define $\|p\|_n$ by
\begin{equation}
\label{eq_Euclidean_norm_on_polynomials}
    \|p\|_n 
        \eqdef  
    \big(\beta_0^2 + \dots + \beta_n^2\big)^{1/2}.
\end{equation}
Thus, we may consider the $\mathbb{R}[X]\eqdef \cup_{n=1}^{\infty}\mathbb{R}_n[X]$ to be the LB-space glued together from $\{\mathbb{R}_n[X]\}_{n=1}^{\infty}$ and $\mathbb{R}[X]$ consists precisely of all polynomial functions from $\rr$ to $\rr$ of \textit{any degree}.  

To illustrate the ``optimality'' of our LB-space, let us compare $\mathbb{R}[X]$ with the smallest Banach space containing every $\{\mathbb{R}_n[X]\}_{n=1}^{\infty}$ as a subspace.  Notice that for every positive integer $n$, $\mathbb{R}_n[X]$ is a subspace of the following Hilbert space of (formal) power-series $\mathbb{R}[[X]]\eqdef \{
f(x)=\sum_{i=0}^{\infty}\,\beta_i x^i:\, \sum_{i=0}^{\infty}\, \beta_i^2 <\infty
\}$ mapping $\rr$ to $[-\infty,\infty]$ and normed by
\[
    \|f\|_{\infty}
        \eqdef 
    \Big(\sum_{i=0}^{\infty}\,\beta_i^2\Big)^{1/2}
    .
\]
By construction $\mathbb{R}[X]$ does not contain any function of the form $f(x)=\sum_{i=1}^{\infty}\,\beta_i x^i$ where an infinite number of $\beta_i$ are equal to zero while $\mathbb{R}[[X]]$ does contain such functions; e.g. $f(x)\eqdef \sum_{i=0}^{\infty} \frac{x^i}{2^i}$ belongs to $\mathbb{R}[[X]]$ but not to $\mathbb{R}[X]$.  
In this way, the topological vector space $\mathbb{R}[X]$ is smaller than $\mathbb{R}[[X]]$ precisely because its topology is stronger.  

Let us illustrate the topology on the LB-space $\mathbb{R}[X]$. By \citep[Proposition 3.40]{OsborneLCSs2014} we know that a convex subset $U\subseteq \mathbb{R}[X]$ is open if and only if $U\cap \mathbb{R}_n[X]$ is open for the topology on $\mathbb{R}_n[X]$ defined by the norm in~\eqref{eq_Euclidean_norm_on_polynomials}.  
\end{ex}
Example~\ref{ex_polynomials} illustrates the intuition behind \textit{LB-spaces glued together from $\{X_n\}_{n=1}^{\infty}$}; namely, these spaces are ``minimal limits'' of sequences Banach spaces which contain no new element not already present in the Banach spaces $\{X_n\}_{n=1}^{\infty}$.  

\section{The \texorpdfstring{CSL$^1$}{CSL1}-Topology \texorpdfstring{$\tau$}{ }}
\label{s_Main} 
We now construct the csL$^1$-topology $\tau$ of Theorem~\ref{theorem_MAIN_Qualitative_UAT_Refined} on the set $L^1_{\mu,\text{loc}}(\rrd,\rrD)$, in three steps.  
However, before beginning our construction, we fix an arbitrary ``good a.e.\ partition'' of $\rrd$.  As we will see shortly, the construction of the csL$^1$-topology $\tau$ is independent of the choice of ``good a.e.\ partition'' of $\rrd$; and thus, the construction is natural (in the precise algebraic sense describe in Proposition~\ref{prop_choice_free_construction}, below).  However, to establish this surprising algebraic property of the csL$^1$-topology $\tau$, it is more convenient to describe the construction (for any arbitrary choice of $\{K_n\}_{n=1}^{\infty}$) once and for all.  
\begin{defn}[{Good a.e.\ partition of $\rrd$}]
\label{defn_goodparition}
A collection $\{K_n\}_{n=1}^{\infty}$ of {\color{Teal}{compact}} subsets of $\rrd$ is called a \textit{good a.e.\ partition} if it satisfies the following conditions:
\begin{enumerate}
    \item[(i)] The set $\rr^d- \cup_{n=1}^{\infty}\, K_n$ has Lebesgue measure $0$,
    \item[(ii)] For every $n\in \nn_+$, $K_n$ has positive Lebesgue measure,
    \item[(iii)] For each $n,m\in \nn_+$, if $n\neq m$ then $K_n\cap K_m$ has Lebesgue measure $0$. 
\end{enumerate}
\end{defn}
For instance, since our construction will be shown to be \textit{independent of our choice} of a good a.e.\ partition of $\rrd$ made when constructing $\tau$.  Once we show this, we may, without loss of generality, henceforth only consider the following partition of $\rrd$; illustrate in Figure~\ref{fig_cubic_annuli}.
\begin{ex}[Good a.e.\ partition into Cubic Annuli]
\label{ex_partition_into_cubicanuli}
For each $n\in \nn_+$ set $K_n\eqdef 
\{
x\in \rrd:\, n<\|x\|_{\infty} \leq n+1
\},
$
where $\|x\|_{\infty}\eqdef \max_{i=1,\dots,n}\, |x_i|$.  Then $\{K_n\}_{n=1}^{\infty}$ is a good a.e.\ partition of $\rrd$.  
\end{ex}
Let us construct the csL$^1$-topology $\tau$, using a fixed good a.e.\ partition of $\rrd$ in three steps.  
\label{page_reference_construction_steps}
\hfill\\
\textbf{Step 1:} 
Given $\{K_n\}_{n=1}^{\infty}$ a good a.e.\ partition of $\rrd$ define the strictly nested sequence of Banach subspaces of $L^1(\rrd,\rrD)$ as follows.  For every $n\in\nn_+$ let $L^1_n(\rrd,\rrD)$ consist of all $f\in L^1(\rrd,\rrD)$ with $\operatorname{ess-supp}(f)\subseteq \cup_{i=1}^n K_i$.
\hfil\\
\textbf{Step 2:}  
The spaces $\{L^1_{n}(\rrd,\rrD)\}_{n=1}^{\infty}$ are aggregated into one LB-space, denoted by $L^1_{c}(\rrd,\rrD)$, whose underlying set is $\bigcup_{n \in \nn^+ } L^1_{n}(\rrd,\rrD)$ and equipped with the \textit{finest topology} ensuring that the inclusions $L^1_n(\rrd,\rrD)\subseteq L^1_c(\rrd,\rrD)$ remain continuous.  
\begin{rremark}[{Notation and Independence of Choice of Good a.e.\ Partition of $\rr^d$}]
\label{remark_explainingcolimitnotation}
The notation $L^1_c(\rrd,\rrD)$ does not make any reference to our choice of a good a.e.\ partition of $\rrd$ used to define the space $L^1_c(\rrd,\rrD)$.  This is because, as we will shortly see in Proposition~\ref{prop_choice_free_construction} below, the topology on $L^1_c(\rrd,\rrD)$ is independent of our choice of a good a.e.\ partition of $\rrd$ used to define it.  However, to formally state that result; we will make use of the notation $L^1_c(\{K_n\}_{n=1}^{\infty},\rrD)$ emphasizing our \textit{choice} of $\{K_n\}_{n=1}^{\infty}$ which is a good a.e.\ partition of $\rrd$ used in Steps $1$ and $2$.  
\end{rremark}

\textbf{Step 3:}  
Since $L^1_{c}(\rrd,\rrD)$ does not contain every function in $L^1_{\operatorname{loc}}(\rrd,\rrD)$ then, intuitively speaking, we ``glue'' remaining locally-integrable functions to $L^1_{c}(\rrd,\rrD)$ by aggregating the topologies on $L^1(\rrd,\rrD)$ and on $L^1_{\operatorname{loc}}(\rrd,\rrD)$ to $L^1_{c}(\rrd,\rrD)$.  Rigorously, we define this gluing as follows.
\begin{defn}[CSL$^1$-Topology $\tau$]
\label{defn_Composited}
The csL$^1$-topology $\tau$ on $L^1_{\mu,\text{loc}}(\rrd,\rrD)$ is smallest%
\footnote{I.e. $\tau_{c} \cup \tau_{\operatorname{norm}} \cup \tau_{\operatorname{loc}}$ is a subbase for the topology $\tau$.} topology on $L^1_{\operatorname{loc}}(\rrd,\rrD)$ containing $
\tau_{c} \cup \tau_{\operatorname{norm}} \cup \tau_{\operatorname{loc}}
.
$ 
\end{defn}
Since $\tau_{\operatorname{norm}}$, $\tau_{\operatorname{loc}}$, and $\tau_c$ all exist and since the smallest topology containing a collection of sets%
\footnote{Given a set $X$ and a collection of subsets $A$ of $X$, the smallest topology $\tau_A$ on $X$ containing a $A$ is called the topology generated by $A$ and $A$ is called a subbase of $\tau_A$. }%
 must exist (see \citep[page 82]{munkres2014topology}); thus, $\tau$ exists.
Next, we examine the key properties of $\tau$ for our problem.  Namely, how it compares to the usual topologies on $L^1_{\operatorname{loc}}(\rrd,\rrD)$ and on $L^1(\rrd,\rrD)$, as well as its independence of the choice of good a.e.\ partition of $\rrd$ used to construct it.  
\subsection{Properties of the \texorpdfstring{CSL$^1$}{CSL1}-Topology \texorpdfstring{$\tau$}{ }}
\label{s_Proofs_ss_propertiesoftau}
It is straightforward to see that any $f\in L^1_{\operatorname{loc}}(\rr^d,\rr^D)$ which is essential supported on some $\cup_{i=1}^n\,K_i$ for some $n\in \nn_+$ belongs to $L^1_c(\rr^d,\rr^D)$.  However, Proposition~\ref{prop_choice_free_construction} below implies that every essentially compactly supported Lebesgue-integrable functions must belong to $L^1_{c}(\rr^d,\rr^D)$ since the set $L^1_{c}(\rr^d,\rr^D)$ and its topology are both independent of the choice of good a.e.\ partition $\{K_n\}_{n=1}^{\infty}$ of $\rr^d$ used to construct $L^1_{c}(\rr^d,\rr^D)$.  

The result also points to the naturality of the csL$^1$-topology $\tau$'s construction.  By which we mean that $\tau$ has the surprising and convenient algebraic property it is independent of the good a.e.\ partition used to build it.  
\begin{prop}[{The csL$^1$-topology $\tau$ is independent of the choice of good a.e.\ partition}]
\label{prop_choice_free_construction}
Let $\{K_n\}_{n=1}^{\infty}$ and $\{K_n'\}_{n=1}^{\infty}$ be good a.e.\ partitions of $\rrd$.  Then $L^1_c(\{K_n\}_{n=1}^{\infty},\rrD)=L^1_c(\{K_n'\}_{n=1}^{\infty},\rrD)$.  Consequentially, $\tau$ is independent of the good a.e.\ partition of $\rrd$ used to construct it.
\end{prop}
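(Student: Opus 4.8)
The plan is to establish the equality of the two LB-spaces in two stages --- first as sets, then as topological vector spaces --- and to read off the independence of $\tau$ at the end. Fix good a.e.\ partitions $\{K_n\}_{n=1}^{\infty}$ and $\{K_n'\}_{n=1}^{\infty}$ and write $C_n\eqdef\bigcup_{i=1}^n K_i$, $C_n'\eqdef\bigcup_{i=1}^n K_i'$; these are increasing sequences of compact sets whose unions each have full Lebesgue measure. I would first record that, by property~(iii) of Definition~\ref{defn_goodparition}, $\mu(K_{n+1}\cap C_n)=0$, so $\mu(K_{n+1}\setminus C_n)=\mu(K_{n+1})>0$ by~(ii); hence the Banach subspaces $L^1_n(\cdot,\rrD)=\{f\in L^1(\rrd,\rrD):\operatorname{ess-supp}(f)\subseteq C_n\}$ are genuinely strictly nested, as Steps~1--2 of the construction require, and since $C_n$ is closed, $f\in L^1_n(\cdot,\rrD)$ is equivalent to ``$f=0$ $\mu$-a.e.\ on $\rrd\setminus C_n$''.

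The crux is a comparison lemma: for every $n$ there is $m=m(n)$ with $\mu(C_n\setminus C'_m)=0$, and symmetrically. I would prove it from the remaining properties of a good a.e.\ partition: $C_n$ is compact, so it meets only finitely many members $K'_{i_1},\dots,K'_{i_k}$ of the second partition; then $C_n\setminus\bigcup_{j=1}^k K'_{i_j}$ is contained in the null set $\rrd\setminus\bigcup_i K'_i$ from~(i), and one takes $m=\max_j i_j$. This ``vanishing at a finite index'' is where the geometric content of Definition~\ref{defn_goodparition} is spent, and it is the step I expect to be the main obstacle; everything downstream is soft. Granting the lemma, $\mu(C_n\setminus C'_m)=0$ forces any $f$ vanishing a.e.\ off $C_n$ to vanish a.e.\ off $C'_m$ as well, i.e.\ $L^1_n(\{K_i\},\rrD)\subseteq L^1_m(\{K'_i\},\rrD)$, and as both spaces carry the $L^1$-norm this inclusion is isometric. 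Taking unions over $n$ and using the symmetric inclusions shows the underlying sets of $L^1_c(\{K_n\}_{n=1}^{\infty},\rrD)$ and $L^1_c(\{K'_n\}_{n=1}^{\infty},\rrD)$ coincide.

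Next I would upgrade this to an equality of topologies. Write $\tau_c(\{K_i\})$ and $\tau_c(\{K'_i\})$ for the two LB-topologies on the common underlying vector space; by Step~2 each is the finest locally convex topology making all inclusions of the relevant Banach subspaces into the whole space continuous. Every inclusion $L^1_n(\{K_i\},\rrD)\hookrightarrow L^1_c$ factors through the isometric --- hence continuous --- inclusion $L^1_n(\{K_i\},\rrD)\hookrightarrow L^1_{m(n)}(\{K'_i\},\rrD)$ from the lemma, followed by $L^1_{m(n)}(\{K'_i\},\rrD)\hookrightarrow L^1_c$, which is $\tau_c(\{K'_i\})$-continuous by definition; hence $\tau_c(\{K'_i\})$ already makes all $L^1_n(\{K_i\},\rrD)\hookrightarrow L^1_c$ continuous, so $\tau_c(\{K'_i\})\subseteq\tau_c(\{K_i\})$ by maximality of $\tau_c(\{K_i\})$, and symmetry gives equality. (Concisely: the two inductive systems are mutually cofinal with isometric connecting maps, so their colimits in the category of locally convex spaces agree.) Finally, the independence of $\tau$ is immediate from Definition~\ref{defn_Composited}: $\tau$ is generated by the subbase $\tau_c\cup\tau_{\operatorname{norm}}\cup\tau_{\operatorname{loc}}$, in which $\tau_{\operatorname{norm}}$ and $\tau_{\operatorname{loc}}$ are built from the $L^1$-norm and the fixed metric $d_{L^1_{\operatorname{loc}}}$ and involve no partition, while $\tau_c$ has just been shown partition-independent.
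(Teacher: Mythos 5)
Your overall strategy coincides with the paper's: both arguments reduce the claim to showing that the two inductive systems $\{L^1_n(\{K_i\},\rrD)\}_{n}$ and $\{L^1_m(\{K_i'\},\rrD)\}_{m}$ are mutually cofinal with isometric (hence continuous) connecting inclusions, and then conclude that the colimit topologies agree; the paper does this by embedding each system cofinally into the larger system $\mathcal{L}=\{L^1(K,\rrD):K\mbox{ compact}\}$ and citing the cofinality theorem for colimits, whereas you compare the two systems directly through the universal property of the finest topology. That part of your argument, and the closing observation that $\tau_{\operatorname{norm}}$ and $\tau_{\operatorname{loc}}$ make no reference to a partition, are correct and faithful to the paper.

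The gap is in your justification of the comparison lemma, precisely at the step you flag as the crux. You write that ``$C_n$ is compact, so it meets only finitely many members $K'_{i_1},\dots,K'_{i_k}$ of the second partition.'' This inference is invalid: compactness yields finite subcovers of \emph{open} covers, but the $K_i'$ are compact, and Definition~\ref{defn_goodparition} imposes no local finiteness. A compact set can meet infinitely many members of a good a.e.\ partition and, worse, need not be covered up to a null set by finitely many of them. For instance, in $d=1$ take $K_k'=[2^{-k},2^{-k+1}]$ for $k\geq 1$ followed by an enumeration of unit intervals covering the rest of $\rr$; this satisfies (i)--(iii) of Definition~\ref{defn_goodparition}, yet $\mu\big([0,1]\setminus\bigcup_{k\leq m}K_k'\big)>0$ for every finite $m$. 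So ``vanishing at a finite index'' does not follow from the stated definition, and with it the asserted inclusion $L^1_n(\{K_i\},\rrD)\subseteq L^1_{m(n)}(\{K_i'\},\rrD)$ fails. To be fair, the paper's own proof rests on the analogous unproved claim --- the existence of $N_K$ with $\mu\big(K-\bigcup_{n\leq N_K}K_n\big)=0$ for every compact $K$ --- for which continuity from above only gives convergence of $\mu\big(K-\bigcup_{n\leq N}K_n\big)$ to $0$, not vanishing at a finite stage. Repairing either argument requires an additional hypothesis on the partitions, such as local finiteness (every bounded set meets only finitely many parts), under which your factorization of the inclusions and the rest of your proof go through.
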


The significance of Proposition~\ref{prop_choice_free_construction} is that it allows us to reduce our entire understanding of the problem, and many of our proofs, to simply considering a single ``canonical'' good a.e.\ partition of $\rr^d$ which is easy to work with; namely, the Cubic Annuli of Example~\ref{ex_partition_into_cubicanuli}.  
Briefly, the reason for this is that, given a good a.e.\ partition of $\rr^d$ $\{K_n\}_{n=1}^{\infty}$, the approximation of a compactly supported Lipschitz function $f:\rr^d\rightarrow \rr^D$ in $\tau$ requires us to identify the smallest $n\in \nn_+$ for which we can identify its support with respect to $\{K_n\}_{n=1}^{\infty}$ which we use to discretize $\rr^d$; i.e.
\begin{equation}
\label{eq_support_capturing}
    \operatorname{ess-supp}(\hat{f}) \subseteq \cup_{i=1}^{n}\, K_i
    .
\end{equation}
Then, we must approximate $\hat{f}$ in the $L^1$-norm on $\cup_{i=1}^{n+1}\, K_i$ using our model.  
The intuitive message of Proposition~\ref{prop_choice_free_construction} is that, given any other good a.e.\ partition of $\rr^d$ $\{K_n'\}_{n=1}^{\infty}$, the compactness of $\cup_{i=1}^{n+1}\, K_i$ implies that there is a smallest $n_1\in \nn_+$ such that $\cup_{i=1}^n \,K_i \subseteq \cup_{j=1}^{n_1}\, K_j'$ thus, there must be a smallest integer for which~\eqref{eq_support_capturing} holds with $\{K_n'\}_{n=1}^{\infty}$ holds in place of $\{K_n\}_{n=1}^{\infty}$.  To see the equivalence, arguing similarly, there must exist an (other) $n_2\in \nn_+$ such that $\cup_{j=1}^{n_1}\,K_j' \subseteq \cup_{i=1}^{n_2}\,K_i$.  Therefore, we may interchangeable identify where the support of a compactly supported Lipschitz function lies using any discretization of $\rr^d$ by any choice of good a.e.\ partition of $\rr^d$.

The next result shows that the csL$^1$-topology $\tau$ on $L_{\operatorname{loc}}(\rrd,\rrD)$ is strictly finer than the norm metric topology thereon, and its restriction to $L^1(\rrd,\rrD)$ is strictly stronger than the norm topology thereon (\citep[Chapter 2.4]{NagataTopologyGen}).  The approximation-theoretic implication is that fewer members of $L_{\operatorname{loc}}(\rrd,\rrD)$ can be approximated by deep learning models in $\tau$ than in the other two topologies. 
\begin{prop}
\label{prop_fineLP_is_fine}
The csL$^1$-topology $\tau$ is strictly stronger than $\tau_{\operatorname{loc}}$.  
\end{prop}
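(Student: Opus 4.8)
The plan is to separate the statement into the (trivial) comparison $\tau_{\operatorname{loc}} \subseteq \tau$ and the (slightly less trivial) strictness $\tau_{\operatorname{loc}} \subsetneq \tau$.

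For the containment, I would simply invoke Definition~\ref{defn_Composited}: $\tau$ is the smallest topology on $L^1_{\operatorname{loc}}(\rrd,\rrD)$ containing the subbase $\tau_c \cup \tau_{\operatorname{norm}} \cup \tau_{\operatorname{loc}}$, so every $\tau_{\operatorname{loc}}$-open set is a fortiori $\tau$-open. It then remains to exhibit one $\tau$-open set that is not $\tau_{\operatorname{loc}}$-open, which I would take to be $U \eqdef B_{L^1(\rrd,\rrD)}(0,1)$, the open $L^1$-norm unit ball around the zero function. Since $U \in \tau_{\operatorname{norm}}$ and $\tau_{\operatorname{norm}}$ is part of the subbase of $\tau$, we have $U \in \tau$; note also $U \subseteq L^1(\rrd,\rrD) \subseteq L^1_{\operatorname{loc}}(\rrd,\rrD)$, so $U$ is a bona fide subset of the ambient space.

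The key step is to show $U \notin \tau_{\operatorname{loc}}$, and I would do this by contradiction. If $U$ were $\tau_{\operatorname{loc}}$-open, then because $0 \in U$ there would exist $\epsilon > 0$ with $B_{L^1_{\operatorname{loc}}}(0,\epsilon) \subseteq U$. I would then manufacture a function lying in that $\tau_{\operatorname{loc}}$-ball which is not even Lebesgue-integrable: choose $N \in \nn_+$ with $2^{-N} < \epsilon$ and set $g(x) \eqdef \mathbf{1}_{\{\|x\| > N\}}\,(1,0,\dots,0) \in \rrD$. This $g$ is bounded, hence in $L^1_{\operatorname{loc}}(\rrd,\rrD)$; moreover $\int_{\|x\|\le n}\|g(x)\|\,dx = 0$ for every integer $n \le N$, so
\[
    d_{L^1_{\operatorname{loc}}}(0,g)
    =
    \sum_{n > N}\frac{1}{2^n}\,
    \frac{\int_{\|x\|\le n}\|g(x)\|\,dx}{1 + \int_{\|x\|\le n}\|g(x)\|\,dx}
    \le
    \sum_{n > N}\frac{1}{2^n}
    =
    2^{-N}
    <
    \epsilon ,
\]
so $g \in B_{L^1_{\operatorname{loc}}}(0,\epsilon)$. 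On the other hand $\|g\|_{L^1} = \mu(\{x \in \rrd : \|x\| > N\}) = \infty$, so $g \notin L^1(\rrd,\rrD) \supseteq U$, contradicting $B_{L^1_{\operatorname{loc}}}(0,\epsilon) \subseteq U$. Hence $U \in \tau \setminus \tau_{\operatorname{loc}}$, which together with the containment above yields $\tau_{\operatorname{loc}} \subsetneq \tau$.

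I do not anticipate a genuine obstacle here; the one point needing care is the estimate on $d_{L^1_{\operatorname{loc}}}$, where one must exploit the vanishing of the inner integrals for $n \le N$ to discard the dominant initial terms of the defining series — this is precisely the mechanism by which a function that is ``far out'' can be $\tau_{\operatorname{loc}}$-small yet $L^1$-norm-enormous. If one prefers an integrable witness, the same contradiction goes through with $g \eqdef M\,\mathbf{1}_{\{N < \|x\| \le N+1\}}(1,0,\dots,0)$ for $N$ large (so $d_{L^1_{\operatorname{loc}}}(0,g) \le 2^{-N} < \epsilon$) and $M$ large (so $\|g\|_{L^1} = M\,\mu(\{N < \|x\| \le N+1\}) \ge 1$), placing $g$ in $B_{L^1_{\operatorname{loc}}}(0,\epsilon) \setminus U$ all the same. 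Indeed, any $\tau_c$- or $\tau_{\operatorname{norm}}$-open set not lying in $\tau_{\operatorname{loc}}$ would serve equally well; the $L^1$-norm unit ball is merely the most transparent choice.
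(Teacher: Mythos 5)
Your proof is correct, and it takes a genuinely different route from the paper's. The paper argues by contradiction at the level of the whole topology: assuming $\tau=\tau_{\operatorname{loc}}$, it uses Lemma~\ref{lem_extension} to identify the subspace topology $\tau\cap L^1_c(\rrd,\rrD)$ with the LB-space topology $\tau_c$, and then invokes the Saxon--Narayanaswami theorem that an LB-space topology is never metrizable, contradicting the metrizability of every subspace of $(L^1_{\operatorname{loc}},d_{L^1_{\operatorname{loc}}})$. You instead exhibit a single concrete separating open set, the $L^1$-norm unit ball $B_{L^1(\rrd,\rrD)}(0,1)$, which lies in the subbase $\tau_{\operatorname{norm}}\subseteq\tau$ but fails to be $\tau_{\operatorname{loc}}$-open because every $d_{L^1_{\operatorname{loc}}}$-ball around $0$ contains functions of infinite (or arbitrarily large) $L^1$-norm; your estimate discarding the first $N$ terms of the defining series is exactly right. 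Your argument is more elementary and self-contained (no appeal to Lemma~\ref{lem_extension} or to the non-metrizability of LB-spaces), and it actually isolates a different source of strictness: it shows $\tau_{\operatorname{norm}}\not\subseteq\tau_{\operatorname{loc}}$, i.e.\ the norm topology alone already refines $\tau_{\operatorname{loc}}$ strictly, whereas the paper's argument pins the strictness on the $\tau_c$ component. The paper's softer argument has the advantage of transferring verbatim to Proposition~\ref{prop_fineLP_is_fine_II} (where the ambient topology is already the norm topology and your particular witness would no longer separate, though your second, integrable witness idea could be adapted); for the statement at hand, both proofs are complete and valid.
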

The phenomenon of Proposition~\ref{prop_fineLP_is_fine} persists when restricting the csL$^1$-topology $\tau$ to the subset $L^1(\rrd,\rrD)$ of $L^1_{\operatorname{loc}}(\rrd,\rrD)$ and comparing it with the norm topology (which is stronger than $\tau_{\operatorname{loc}}$ restricted to $L^1(\rrd,\rrD)$).  
\begin{prop}
\label{prop_fineLP_is_fine_II}
The restriction of the csL$^1$-topology $\tau$ to $L^1(\rrd,\rrD)$ is strictly stronger than the norm topology $\tau_{\operatorname{norm}}$ on $L^1(\rrd,\rrd)$.
\end{prop}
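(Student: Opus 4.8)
The plan is to establish two things separately: that the restriction of $\tau$ to $L^1(\rrd,\rrD)$ contains the norm topology $\tau_{\operatorname{norm}}$, and that this containment is strict. Strictness will be witnessed by a single set, namely $L^1_c(\rrd,\rrD)$ itself, which I will show is open in the subspace topology induced by $\tau$ on $L^1(\rrd,\rrD)$ but is not $\tau_{\operatorname{norm}}$-open.

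For the containment, one only has to unwind Definition~\ref{defn_Composited}. Since $\tau_{\operatorname{norm}}$ is one of the three collections whose union is the chosen subbase of $\tau$, every $\|\cdot\|_{L^1}$-open subset $V\subseteq L^1(\rrd,\rrD)$ is itself a member of $\tau$, viewed as a subset of $L^1_{\operatorname{loc}}(\rrd,\rrD)$. As $V\subseteq L^1(\rrd,\rrD)$ we have $V=V\cap L^1(\rrd,\rrD)$, so $V$ lies in the subspace topology (Example~\ref{ex_topology_induced_metric_subspace}) that $\tau$ induces on $L^1(\rrd,\rrD)$; hence the norm topology is contained in the restriction of $\tau$ to $L^1(\rrd,\rrD)$.

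For strictness, I would first note that $L^1_c(\rrd,\rrD)$ is open in the restriction of $\tau$ to $L^1(\rrd,\rrD)$: it is the total space of the LB-topology $\tau_c$, hence $L^1_c(\rrd,\rrD)\in\tau_c$, hence $L^1_c(\rrd,\rrD)\in\tau$ because $\tau$ contains its subbase; and since $L^1_c(\rrd,\rrD)=\bigcup_n L^1_n(\rrd,\rrD)\subseteq L^1(\rrd,\rrD)$ by construction, it equals $L^1_c(\rrd,\rrD)\cap L^1(\rrd,\rrD)$, so it lies in the subspace topology. I would then show $L^1_c(\rrd,\rrD)$ is not $\tau_{\operatorname{norm}}$-open. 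It is a linear subspace of $L^1(\rrd,\rrD)$, and a \emph{proper} one: every element of $L^1_c(\rrd,\rrD)$ is essentially supported in a bounded set, while $x\mapsto e^{-\|x\|}v$ (with $v\in\rrD\setminus\{0\}$ fixed) is integrable on $\rrd$ with essential support equal to all of $\rrd$; Proposition~\ref{prop_choice_free_construction} in fact identifies $L^1_c(\rrd,\rrD)$ with the space of essentially compactly supported integrable functions, but only this cheap inclusion is needed here. Since a proper linear subspace of a topological vector space is never open — an open neighbourhood of the origin is absorbing, and any subspace containing an absorbing set must be the whole space — we conclude $L^1_c(\rrd,\rrD)\notin\tau_{\operatorname{norm}}$. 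Combining this with the containment of the previous paragraph yields that the restriction of $\tau$ to $L^1(\rrd,\rrD)$ is strictly stronger than $\tau_{\operatorname{norm}}$, as claimed.

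I do not anticipate a real obstacle: the whole argument is a bookkeeping exercise tracking which of the three glued topologies a given set belongs to, together with the elementary fact that a proper linear subspace of a topological vector space cannot be open. The one point to watch is circularity with Theorem~\ref{theorem_Main_Properties_of_Tau}, whose statement already asserts this strictness; the plan above relies only on Definition~\ref{defn_Composited}, the construction of $L^1_c(\rrd,\rrD)$, and Proposition~\ref{prop_choice_free_construction}, none of which depend on Theorem~\ref{theorem_Main_Properties_of_Tau}, so no circularity arises.
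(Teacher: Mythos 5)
Your proposal is correct, and it takes a genuinely different route from the paper. The paper argues by contradiction through metrizability: if the restriction of $\tau$ to $L^1(\rrd,\rrD)$ equalled $\tau_{\operatorname{norm}}$, then the two subspace topologies induced on $L^1_c(\rrd,\rrD)$ would coincide; the one coming from $\tau$ is identified with the LB-topology $\tau_c$ (using the description of $\tau$-open sets from Lemma~\ref{lem_extension}), which is non-metrizable by \citep[Corollary 3]{SaxonNarayanaswamiLFnotMetrizable}, whereas the one coming from the norm is metrizable --- contradiction. You instead exhibit a single explicit witness, the set $L^1_c(\rrd,\rrD)$ itself: it lies in the subbase $\tau_c\cup\tau_{\operatorname{norm}}\cup\tau_{\operatorname{loc}}$ (being the total space of $\tau_c$) and is contained in $L^1(\rrd,\rrD)$, hence is open in the restriction of $\tau$; but it is a proper linear subspace of $L^1(\rrd,\rrD)$ (as $x\mapsto e^{-\|x\|}v$ shows), and a proper linear subspace of a topological vector space is never open by the absorbing-neighbourhood argument. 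Your route is more elementary and self-contained --- it needs neither the non-metrizability of LB-spaces nor the identification $\tau\cap L^1_c(\rrd,\rrD)=\tau_c$ from Lemma~\ref{lem_extension}, only the one trivial inclusion $\tau_c\subseteq\tau$. What the paper's argument buys in exchange is a slightly stronger structural conclusion, namely that the restriction of $\tau$ to $L^1_c(\rrd,\rrD)$ is not metrizable at all, so no renorming or remetrization of $L^1(\rrd,\rrD)$ could ever induce it; your argument pins down strictness without yielding that extra information. Your attention to non-circularity is also well placed: everything you use precedes Theorem~\ref{theorem_Main_Properties_of_Tau} logically.
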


We are now in a position to prove Theorem~\ref{theorem_MAIN_Qualitative_UAT_Refined}.  The next section outlines the main steps in the theorem's derivation, with the details being relegated to our paper's appendix.  
\section{Outline of the Proof of The Main Results}
\label{s_OutlineProofs} 
To better understand our main results we overview the principal steps undertaken in their derivation.  We begin by establishing the universality of $\operatorname{NN}^{\ReLU + \pool}$ for the cs$L^1$-topology, as guaranteed by Theorem~\ref{theorem_MAIN_Qualitative_UAT_Refined}.  We build up the properties of the csL$^1$-topology $\tau$ along the way and we use them to derive the aforementioned results; whereby deriving Theorem~\ref{theorem_Main_Properties_of_Tau} and Theorem~\ref{theorem_MAIN_QUANTITATIVE_UAT_Refined} along the way.  Propositions~\ref{prop_polynomials_not_dense} and~\ref{prop_analyticnetworks_not_dense} are derived at the end.  

\subsection{{Establishing Theorems~\ref{theorem_MAIN_Qualitative_UAT_Refined} and~\ref{theorem_MAIN_QUANTITATIVE_UAT_Refined}: The universality of \texorpdfstring{$\operatorname{NN}^{\ReLU + \pool}$ in the topology $\tau$}{ReLU Networks with Pooling}}}
\label{s_Main_ss_improvements}

In order to establish Theorem~\ref{theorem_MAIN_Qualitative_UAT_Refined}, we must first understand how density in $L^1_{\operatorname{loc}}(\rrd,\rrD)$, for the metric topology interacts with density in $L^1_{\operatorname{loc}}(\rrd,\rrD)$ for the cs$L^1$-topology.  The next lemma accomplishes precisely this, by showing how dense subsets of $L^1_{\operatorname{loc}}(\rrd,\rrD)$ for the metric topology can be used to construct dense subsets of $L^1_{\operatorname{loc}}(\rrd,\rrD)$ for the cs$L^1$-topology.  
This construction happens in two phases.  First, each ``function'' in the original dense subset is localized so that it is essentially supported on a part $K_n$ in (any) good a.e.\ partition $\{K_n\}_{n=1}^{\infty}$ of $\rrd$.  Then, each of these localized ``functions'' are then pieced back together to form a new ``function'' which is essentially supported on the compact subset $\cup_{i=1}^n \, K_n$.  

Let $\operatorname{Lip}_c(\rrd,\rrD)$ denote the set of ``compact support'' Lipschitz functions $f:\rrd\rightarrow \rrD$; i.e. $f$ is Lipschitz and $\operatorname{ess-supp}(f)$ is a compact subset of $\rrd$.  The first key observation in the proof of Theorem~\ref{theorem_MAIN_Qualitative_UAT_Refined} is that, $\operatorname{Lip}_c(\rrd,\rrD)$ is dense in $L^1_{\operatorname{loc}}(\rrd,\rrD)$ for the csL$^1$-topology $\tau$.  
\begin{lem}[{Density of compactly-supported Lipschitz functions in the csL$^1$-topology $\tau$}]
\label{lem_density_lipc}
The set $\operatorname{Lip}_c(\rrd,\rrD)$ is dense in $L^1_{\operatorname{loc}}(\rrd,\rrD)$ for the csL$^1$-topology $\tau$.  
\end{lem}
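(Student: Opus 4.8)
The plan is to prove Lemma~\ref{lem_density_lipc} by verifying the density criterion directly against the subbase of $\tau$. Recall that a subbase for $\tau$ is $\tau_c\cup\tau_{\operatorname{norm}}\cup\tau_{\operatorname{loc}}$, so a basic open set of $\tau$ has the form $U=U_1\cap\dots\cap U_k$ with each $U_j$ drawn from one of these three topologies. To show $\operatorname{Lip}_c(\rrd,\rrD)$ is dense it suffices to show that every \emph{nonempty} basic open $U$ of this form meets $\operatorname{Lip}_c(\rrd,\rrD)$. So I would fix such a $U$, pick any $g\in U$, and then manufacture an $h\in\operatorname{Lip}_c(\rrd,\rrD)$ which still lies in $U$.

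The construction of $h$ proceeds in two moves, exactly as the paragraph preceding the lemma hints. First, \textbf{truncate the support}: since $g$ may fail to be essentially compactly supported, replace $g$ by $g\cdot\mathbf 1_{\|x\|_\infty\le m}$ for $m$ large; as $m\uparrow\infty$ this converges to $g$ in $d_{L^1_{\operatorname{loc}}}$, hence eventually lies in every $\tau_{\operatorname{loc}}$-factor of $U$, and (a subtle point to check) it stays inside any $\tau_c$-factor and any $\tau_{\operatorname{norm}}$-factor containing $g$ — for the $\tau_c$-factors this is because $g$ already lies in $L^1_c$, so $g$ is essentially supported on some $\cup_{i=1}^n K_i$ and truncation for $m$ large does nothing; for the $\tau_{\operatorname{norm}}$-factors this needs $g\in L^1$, which again follows from $g$ being in the relevant open set of $\tau_c$ or $\tau_{\operatorname{norm}}$, and then dominated convergence gives $\|g\cdot\mathbf 1_{\{\|x\|_\infty\le m\}}-g\|_{L^1}\to 0$. (If $U$ has \emph{no} $\tau_c$ or $\tau_{\operatorname{norm}}$ factor, i.e. $U\in\tau_{\operatorname{loc}}$, this caveat is vacuous and any $g$ works.) Second, \textbf{mollify / approximate by a Lipschitz function}: a compactly supported $L^1$ function can be approximated in $L^1$-norm, with support staying inside a fixed slightly larger compact set, by a Lipschitz (indeed smooth) compactly supported function — standard mollification $\phi_\delta * (g\cdot\mathbf 1)$ does this. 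Choosing $\delta$ small, the resulting $h$ is within the prescribed $L^1$-tolerance of the truncated function in every $\tau_{\operatorname{norm}}$-factor, within the $d_{L^1_{\operatorname{loc}}}$-tolerance in every $\tau_{\operatorname{loc}}$-factor (since $L^1$-convergence on a compact set implies $L^1_{\operatorname{loc}}$-convergence), and — the delicate part — still lands in each $\tau_c$-factor. For the last point I would use that the $\tau_c$-topology is the LB-space colimit topology: a point $f_0$ lying in an open $\tau_c$-set $W$ lies in $W$ together with a neighborhood inside the relevant $L^1_n(\rrd,\rrD)$, so as long as $h$ has essential support in $\cup_{i=1}^n K_i$ and is $L^1$-close to $f_0$, it stays in $W$; this is why the support-truncation radius $m$ must be chosen before the mollification radius $\delta$, and $\delta$ small enough that $\phi_\delta$-smoothing does not push the support outside $\cup_{i=1}^n K_i$ — using the cubic-annuli partition of Example~\ref{ex_partition_into_cubicanuli}, which is legitimate by Proposition~\ref{prop_choice_free_construction}, makes this bookkeeping transparent.

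The main obstacle I anticipate is precisely the interaction with the $\tau_c$ (LB-space) factors: openness in an LB-space colimit topology is genuinely subtler than openness in a metric or normed space — a $\tau_c$-open set need not contain any $d_{L^1_{\operatorname{loc}}}$-ball around its points — so one cannot simply say "take things close in $L^1_{\operatorname{loc}}$." The resolution is to exploit that each $g$ in a basic $\tau$-open set $U$ with a $\tau_c$-factor $W$ must actually lie in $W\cap L^1_n(\rrd,\rrD)$ \emph{with room to spare in the Banach norm of $L^1_n$}, by \citep[Proposition 3.40]{OsborneLCSs2014}-type characterization of the colimit topology; then restricting all approximations to preserve essential support inside $\cup_{i=1}^n K_i$ reduces the problem to ordinary $L^1$-approximation within the fixed Banach space $L^1_n(\rrd,\rrD)$, where truncation-then-mollification is routine. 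Once that reduction is in place the rest is bookkeeping: choose $m$ so the support truncation respects every $\tau_c$- and $\tau_{\operatorname{loc}}$-constraint, then choose $\delta$ so the mollification respects every $\tau_{\operatorname{norm}}$- and residual $\tau_{\operatorname{loc}}$-constraint while keeping support inside $\cup_{i=1}^{n}K_i$, and conclude $h\in U\cap\operatorname{Lip}_c(\rrd,\rrD)$.
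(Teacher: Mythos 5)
Your proposal is correct, and it reaches the lemma by a route that is organized differently from the paper's. The paper never unfolds the subbase of $\tau$ in this proof: it constructs, for each essentially compactly supported $f\in L^1_c(\rrd,\rrD)$, a Lipschitz approximant of the form $f^{\epsilon}=f^{\operatorname{mask}}\cdot\tilde f^{\epsilon}$ (a piecewise-affine cutoff equal to $1$ on $[-n_f,n_f]^d$ multiplied by a globally Lipschitz $L^1$-approximant of $f$), checks that the resulting sequence satisfies the two requirements of condition~(\ref{eq_key_approximation_condition}) ($L^1$-convergence plus essential supports uniformly contained in $[-n_f-1,n_f+1]^d$), and then invokes Lemma~\ref{lem_approx_lipc} --- whose proof rests on \citep[Proposition 2]{dieudonne1949dualite} and on the glued-space extension Lemma~\ref{lem_extension} --- to convert this into $\tau$-convergence, with Lemma~\ref{lem_extension} again supplying the passage from density in $L^1_c(\rrd,\rrD)$ to density in all of $L^1_{\operatorname{loc}}(\rrd,\rrD)$. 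You instead verify density directly against finite intersections of subbase elements and replace the mask construction by truncation followed by mollification; this is more self-contained (it needs neither Dieudonn\'{e}'s results nor Lemma~\ref{lem_extension}), at the cost of redoing inside the proof the case analysis that those lemmas package once and for all --- packaging the paper reuses when it applies Lemma~\ref{lem_approx_lipc} again in Lemma~\ref{lem_sufficient_univerality} and Theorem~\ref{theorem_MAIN_QUANTITATIVE_UAT_Refined}. Two points to tighten in your write-up. First, mollification strictly enlarges the essential support, so you cannot in general keep $\operatorname{ess-supp}(\phi_\delta * g_m)$ inside $\cup_{i=1}^{n}K_i$ when $\operatorname{ess-supp}(g)$ meets the boundary of that set; the fix is to fix the index $N=n+1$ \emph{before} extracting the ball radius in the $\tau_c$-factor, i.e.\ use that $W\cap L^1_{N}(\rrd,\rrD)$ is open in the Banach space $L^1_{N}(\rrd,\rrD)$ and contains $g$, and only then choose $\delta$ so small that the mollified support stays in $\cup_{i=1}^{N}K_i$ (condition~(\ref{eq_key_approximation_condition}) tolerates exactly this one extra layer). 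Second, the characterization you cite in the style of \citep[Proposition 3.40]{OsborneLCSs2014} is stated for convex sets, but you only need the easy direction --- that a $\tau_c$-open set traces to an open subset of each $L^1_n(\rrd,\rrD)$ --- which follows directly from the continuity of the inclusions $L^1_n(\rrd,\rrD)\subseteq L^1_c(\rrd,\rrD)$ built into the LB-space construction.
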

The second key observation, also contained in the next lemma, is a sufficient condition for approximating a ``compact support'' Lipschitz function with respect to the csL$^1$-topology $\tau$.  Briefly, the approximation of such a function in $\tau$ involves the simultaneous approximation of its \textit{outputs} as well as its \textit{essential support}.  
\begin{lem}[{Approximation of compactly-supported Lipschitz functions in the csL$^1$-topology $\tau$}]
\label{lem_approx_lipc}
Let $f\in L^1(\rrd,\rrD)$ be Lipschitz and $\operatorname{ess-supp}(f)$ be compact, $\{K_n\}_{n=1}^{\infty}$ be the cubic-annuli of Example~\ref{ex_partition_into_cubicanuli}.  If $\{f_n\}_{n=1}^{\infty} $ is a sequence in $L^1_{\operatorname{loc}}(\rrd,\rrD)$ for which there is an $n_f\in \nn_+$ with
\begin{equation}
\label{eq_key_approximation_condition}
    \lim\limits_{n\uparrow \infty}\, \|f_n-f\|_{L^1(\rrd,\rrD)} = 0 
\mbox{ and }
\operatorname{ess-supp}(f)
\cup \bigcup_{n=1}^{\infty}\, \operatorname{ess-supp}(f_n)
\subseteq [-n_f-1,n_f+1]^d,
\end{equation}
then $\{f_n\}_{n=1}^{\infty}$ converges to $f$ in the csL$^1$-topology $\tau$.  
\end{lem}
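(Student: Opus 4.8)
The plan is to verify convergence directly against a subbase of $\tau$. By Definition~\ref{defn_Composited} the family $\tau_{c}\cup\tau_{\operatorname{norm}}\cup\tau_{\operatorname{loc}}$ is a subbase for $\tau$, and a sequence converges to a point in a topology as soon as it is eventually contained in every \emph{subbasic} open set containing that point (any basic open set is a finite intersection of subbasic ones, and one takes an upper bound of the finitely many resulting indices). So it suffices to fix an arbitrary $U$ lying in one of $\tau_{c}$, $\tau_{\operatorname{norm}}$, $\tau_{\operatorname{loc}}$ with $f\in U$ and to show that $f_{n}\in U$ for all large $n$.

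First I would extract the structural consequence of \eqref{eq_key_approximation_condition}. Since $\|f_{n}-f\|_{L^{1}(\rrd,\rrD)}\to 0$ and $f\in L^{1}(\rrd,\rrD)$, we get $f_{n}\in L^{1}(\rrd,\rrD)$ for all large $n$; and since $\operatorname{ess-supp}(f)$ and every $\operatorname{ess-supp}(f_{n})$ lie in the compact cube $[-n_{f}-1,n_{f}+1]^{d}$, which for the cubic annuli of Example~\ref{ex_partition_into_cubicanuli} is, up to a Lebesgue-null set, a finite union $\bigcup_{i=1}^{N}K_{i}$ of parts of the partition for $N$ large enough, both $f$ and (eventually) each $f_{n}$ lie in a single common Banach stratum $L^{1}_{N}(\rrd,\rrD)$ from Step~1 of the construction of $\tau$, on which $\|\cdot\|_{L^{1}}$ restricts to the norm. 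Hence $\|f_{n}-f\|_{L^{1}}\to 0$ says precisely that $f_{n}\to f$ in the Banach space $L^{1}_{N}(\rrd,\rrD)$.

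Then I would dispatch the three subbasic cases. If $U\in\tau_{\operatorname{loc}}$, then using $\frac{t}{1+t}\le t$ for $t\ge 0$ we get $d_{L^{1}_{\operatorname{loc}}}(f_{n},f)\le\sum_{k\ge 1}2^{-k}\|f_{n}-f\|_{L^{1}}=\|f_{n}-f\|_{L^{1}}\to 0$, so $f_{n}\to f$ in the metric $d_{L^{1}_{\operatorname{loc}}}$ and therefore $f_{n}\in U$ eventually. If $U\in\tau_{\operatorname{norm}}$, then since $f_{n}\to f$ in the $\|\cdot\|_{L^{1}}$-topology on $L^{1}(\rrd,\rrD)$ and $f\in U$, again $f_{n}\in U$ eventually. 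The remaining case, $U\in\tau_{c}$, is where the LB-space structure enters: by Step~2 the inclusion $\iota_{N}\colon L^{1}_{N}(\rrd,\rrD)\hookrightarrow (L^{1}_{c}(\rrd,\rrD),\tau_{c})$ is continuous, so $\iota_{N}^{-1}(U)$ is open in the Banach space $L^{1}_{N}(\rrd,\rrD)$ and contains $f$; since $f_{n}\to f$ in $L^{1}_{N}(\rrd,\rrD)$, we obtain $f_{n}\in\iota_{N}^{-1}(U)\subseteq U$ for all large $n$. Assembling the three cases gives $f_{n}\to f$ in $\tau$.

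I expect the only genuine subtlety to be this last case, and specifically the reason \eqref{eq_key_approximation_condition} is phrased with a \emph{uniform} compact-support bound rather than merely ``each $f_{n}$ essentially compactly supported'': in the LB-topology $\tau_{c}$, $L^{1}$-norm convergence of a sequence does not by itself force $\tau_{c}$-convergence --- these LB-spaces are not metrizable, and a sequence whose essential supports escape to infinity need not converge in $\tau_{c}$ --- so one must first trap the entire sequence inside a single Banach stratum $L^{1}_{N}$, and that is exactly what the support condition supplies. The remaining ingredients --- reducing to a subbase, and the bound $d_{L^{1}_{\operatorname{loc}}}\le\|\cdot\|_{L^{1}}$ --- are routine.
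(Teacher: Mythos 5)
Your proof is correct, and it takes a somewhat more self-contained route than the paper's. The paper first cites Dieudonn\'{e}'s result that the subspace topology induced by $\tau_c$ on each Banach stratum $L^1_N(\rrd,\rrD)$ \emph{coincides} with its norm topology, concludes from~\eqref{eq_key_approximation_condition} that $f_n\to f$ in $\tau_c$, and then lifts this to $\tau$ by two applications of the abstract gluing statement (Lemma~\ref{lem_extension}, part 1). You instead verify convergence directly against the subbase $\tau_c\cup\tau_{\operatorname{norm}}\cup\tau_{\operatorname{loc}}$ of Definition~\ref{defn_Composited}, which in effect inlines the proof of Lemma~\ref{lem_extension}; and for the only nontrivial case, $U\in\tau_c$, you use just the continuity of the canonical inclusion $\iota_N\colon L^1_N(\rrd,\rrD)\hookrightarrow L^1_c(\rrd,\rrD)$ --- which holds by the very definition of the LB-topology in Step~2 --- rather than the stronger fact that the subspace topology agrees with the Banach topology. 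That is a genuine economy: you need only one direction of Dieudonn\'{e}'s identification, and you avoid both the external citation and the auxiliary lemma. The common core of the two arguments is identical and is the one you correctly flag as the crux: the uniform support bound in~\eqref{eq_key_approximation_condition} traps the whole tail of the sequence in a single stratum $L^1_N$, without which $L^1$-norm convergence alone would not yield $\tau_c$-convergence in the non-metrizable LB-space. The trade-off is that the paper's factorization through Lemma~\ref{lem_extension} is reused elsewhere (for density and for the comparison of topologies), whereas your direct argument is local to this lemma but shorter and easier to audit.
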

Together, Lemmata~\ref{lem_approx_lipc} and~\ref{lem_density_lipc} provide a sufficient condition for universality with respect to the cs$L^1$-topology.  Furthermore the condition is in a sense quantitative.  We say in a sense, since the topology $\tau_c$ is non-metrizable (see \citep[Corollary 3]{SaxonNarayanaswamiLFnotMetrizable} and consequentially $\tau$ is non-metrizable); thus there is no metric describing the approximation of a function in $\tau$.  I.e. no genuine quantitative statement is possible%
\footnote{
Another example of a non-metric universal approximation theorem in the deep learning literature is the universal classification result of \citep[Corollary 3.12]{kratsios2020non}).}.  The next lemma,  Proposition~\ref{prop_choice_free_construction}, and Example~\ref{ex_partition_into_cubicanuli} form the content of Theorem~\ref{theorem_Main_Properties_of_Tau} (ii).
\begin{lem}[{Approximation of a compactly essentially-supported functions in the csL$^1$-topology $\tau$}]
\label{lem_sufficient_univerality}
Let $\mathcal{F}\subseteq L^1_{\operatorname{loc}}(\rrd,\rrD)$.  If for every $f\in \operatorname{Lip}_c(\rrd,\rrD)$ there exists a sequence $\{f_n\}_{n=1}^{\infty}$ in $\mathcal{F}$ satisfying the condition~\eqref{eq_key_approximation_condition} then, $\mathcal{F}$ is dense in $L^1_{\operatorname{loc}}(\rrd,\rrD)$ for the csL$^1$-topology $\tau$.  
\end{lem}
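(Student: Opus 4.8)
The plan is to obtain Lemma~\ref{lem_sufficient_univerality} as a formal consequence of Lemmata~\ref{lem_density_lipc} and~\ref{lem_approx_lipc}, together with the elementary topological fact that closure is transitive: in any topological space, $A\subseteq \overline{B}$ implies $\overline{A}\subseteq\overline{B}$. Concretely, I would first reduce the claim ``$\mathcal{F}$ is $\tau$-dense in $L^1_{\operatorname{loc}}(\rrd,\rrD)$'' to the claim ``$\operatorname{Lip}_c(\rrd,\rrD)\subseteq \overline{\mathcal{F}}^{\tau}$''. Once the latter is known, transitivity of closure together with Lemma~\ref{lem_density_lipc} (which asserts $\overline{\operatorname{Lip}_c(\rrd,\rrD)}^{\tau}=L^1_{\operatorname{loc}}(\rrd,\rrD)$) yields $\overline{\mathcal{F}}^{\tau}\supseteq \overline{\operatorname{Lip}_c(\rrd,\rrD)}^{\tau}=L^1_{\operatorname{loc}}(\rrd,\rrD)$, which is exactly the desired density.

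Next I would verify the reduced claim. Fix an arbitrary $f\in\operatorname{Lip}_c(\rrd,\rrD)$. Since $f$ is Lipschitz it is bounded on the compact set $\operatorname{ess-supp}(f)$ and vanishes $\mu$-a.e. off it, hence $f\in L^1(\rrd,\rrD)$ and Lemma~\ref{lem_approx_lipc} applies to it verbatim. The hypothesis of the present lemma furnishes a sequence $\{f_n\}_{n=1}^{\infty}$ in $\mathcal{F}$ satisfying condition~\eqref{eq_key_approximation_condition}; Lemma~\ref{lem_approx_lipc} then gives that $\{f_n\}_{n=1}^{\infty}$ converges to $f$ in the csL$^1$-topology $\tau$. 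Because any point to which a sequence valued in $\mathcal{F}$ $\tau$-converges must lie in $\overline{\mathcal{F}}^{\tau}$ (every $\tau$-open neighbourhood of $f$ eventually contains $f_n$, hence meets $\mathcal{F}$), we conclude $f\in\overline{\mathcal{F}}^{\tau}$. As $f\in\operatorname{Lip}_c(\rrd,\rrD)$ was arbitrary, this establishes $\operatorname{Lip}_c(\rrd,\rrD)\subseteq\overline{\mathcal{F}}^{\tau}$, completing the reduction and the proof.

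I do not anticipate a genuine obstacle at this level: the analytic content has been deliberately front-loaded into Lemma~\ref{lem_density_lipc} (density of compactly supported Lipschitz maps in $\tau$, resting in turn on Proposition~\ref{prop_choice_free_construction} and localization arguments) and into Lemma~\ref{lem_approx_lipc} (the ``$L^1$-convergence plus uniformly bounded support'' sufficient condition for $\tau$-convergence), so what remains here is a short diagram chase through closures. The only points meriting a moment of care are: (a) the routine observation that a compactly supported Lipschitz function genuinely belongs to $L^1(\rrd,\rrD)$, so that Lemma~\ref{lem_approx_lipc} is applicable on the nose; and (b) the remark that although $\tau$ is non-metrizable — so the sequential closure of $\mathcal{F}$ could in principle be strictly coarser than $\overline{\mathcal{F}}^{\tau}$ — we only ever invoke the always-valid implication ``$\tau$-convergent sequence $\Rightarrow$ limit in the closure'' and never its converse, so non-metrizability causes no difficulty.
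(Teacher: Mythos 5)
Your proof is correct and follows essentially the same route as the paper, which derives the lemma as a direct consequence of Lemmata~\ref{lem_approx_lipc} and~\ref{lem_density_lipc} together with transitivity of density; you have simply spelled out the closure-theoretic bookkeeping. Your side remark that only the always-valid implication ``$\tau$-convergent sequence $\Rightarrow$ limit lies in the closure'' is needed (so non-metrizability of $\tau$ is harmless) is a worthwhile clarification that the paper leaves implicit.
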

By Lemma~\ref{lem_sufficient_univerality}, it therefore remains to construct a subset of networks in $\operatorname{NN}^{\ReLU + \pool}$ which can approximate any compactly supported Lipschitz function in the $L^1$-norm and simultaneously correctly identify its essential support via the cubic annuli partition of $\rrd$.    
Figure~\ref{fig_f_approx_sup_outputsimultaneously} illustrates the main points of the next lemma; namely, if the target function is compactly supported then its output can be closely approximated by a ReLU network which also simultaneously correctly identifies
the integer $n$ such that the target function is supported in the $d$-dimensional cube $[-n-1,n+1]^d$
.

Accordingly, our next lemma is an extension of the main theorem of \cite{Shen2022_OptimalReLU}, which gives an estimate on the width and depth of the smallest deep $\ReLU$ network approximating a Lipschitz map  from a compact subset $X$ of $\rrd$ to $\rr^D$ (instead of the case where $D=1$ and $X=[0,1]^d$).  
\begin{lem}[Uniform approximation of Lipschitz maps on low-dimensional compact subsets of {$\rrd$}]
\label{lem_Lipschitz}
Let $X\subseteq \rrd$ be non-empty and compact and let $f:X\rightarrow \rrD$ be Lipschitz.  For every ``depth parameter'' $L\in \nn_+$ and ``width parameter'' $N\in \nn_+$ there exists a $\hat{f}\in \operatorname{NN}^{\ReLU}$ satisfying the uniform estimate
\[
    \max_{x\in X}\,
    \left\|
        f(x)
            -
        \hat{f}(x)
    \right\|
        \lesssim
        \,
            \log_2(\operatorname{cap}(X)) \operatorname{diam(X)}
        \,\,
            \operatorname{Lip}(f)
        \,\,
        \frac{
            \,
            D^{3/2} d^{1/2}
        }{
            N^{2/d}L^{2/d} \log_3(N+2)^{1/d}
        }
,
\]
where $\lesssim$ hides an absolute positive constant independent of $X,d$, $D$, and $f$.  Furthermore, $\hat{f}$ satisfies
\begin{enumerate}
    \item \textbf{Width}: $\hat{f}$'s width is at-most 
    $d(D+1) + 3^{d+3} \max\{d\lfloor N^{1/d}\rfloor,N+2\}$
    \item \textbf{Depth}: $\hat{f}$'s depth is at-most 
    $D(11L + 2d +19)$
    .
\end{enumerate}
\end{lem}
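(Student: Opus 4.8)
The plan is to bootstrap the scalar, unit-cube approximation theorem of \citep{Shen2022_OptimalReLU} to $\rrD$-valued targets on an arbitrary compact domain by a change-of-variables together with a Lipschitz-extension step that absorbs the geometry of $X$ into an equivalent problem on $[0,1]^d$. All of the error will be measured after this reduction, so the constants migrate from Shen et al.'s estimate together with a controlled overhead coming from the rescaling and from assembling $D$ scalar sub-networks.

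First I would reduce to the unit cube. Fix $x_0\in X$ and write $\Delta\eqdef\operatorname{diam}(X)$, so that $X\subseteq \overline{B_2(x_0,\Delta)}\subseteq Q\eqdef x_0+[-\Delta,\Delta]^d$, and the affine bijection $T:[0,1]^d\to Q$, $T(u)=x_0+\Delta(2u-\mathbf 1)$, satisfies $\operatorname{Lip}(T)\le 2\Delta$ and has affine inverse $T^{-1}$. Next, extend $f$ from $X$ to all of $\rrd$: invoking the linear Lipschitz-extension operator of \citep{brue2021linear}, built from the non-affine random projections of \citep{ohta2009extending}, whose operator norm on a subset of $\rrd$ is controlled through the Assouad--Nagata/doubling dimension of $X$ and hence by $\log_2(\operatorname{cap}(X))$ (cf.\ the capacity discussion in \citep{acciaio2022metric}), one obtains a genuine extension $F:\rrd\to\rrD$ with $F|_X=f$ and $\operatorname{Lip}(F)\le c_0\log_2(\operatorname{cap}(X))\operatorname{Lip}(f)$ for an absolute constant $c_0$. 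Then $g\eqdef F\circ T:[0,1]^d\to\rrD$ is Lipschitz with $\operatorname{Lip}(g)\le 2c_0\log_2(\operatorname{cap}(X))\,\Delta\,\operatorname{Lip}(f)$.

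Now I would build a ReLU network $\hat g$ approximating $g$ on $[0,1]^d$. Applying \citep[Theorem 1.1]{Shen2022_OptimalReLU} to each coordinate $g_j:[0,1]^d\to\rr$ with budget $(N,L)$ produces $\hat g_j\in\operatorname{NN}^{\ReLU}$ with $\max_{[0,1]^d}|g_j-\hat g_j|\lesssim \sqrt{d}\,\operatorname{Lip}(g)\big(N^{2/d}L^{2/d}\log_3(N+2)^{1/d}\big)^{-1}$, width $3^{d+3}\max\{d\lfloor N^{1/d}\rfloor,N+1\}$ and depth $\approx 11L+2d+19$. I would assemble the $D$ sub-networks sequentially: each block reads the original input (carried forward through identity ReLU gadgets) and appends its output coordinate to the accumulated vector, so the composite has depth $D(11L+2d+19)$, width $d(D+1)+3^{d+3}\max\{d\lfloor N^{1/d}\rfloor,N+2\}$ (the additive $d(D+1)$ and the $+2$ being the carrying/identity overhead), and $\max_{[0,1]^d}\|g-\hat g\|\le\big(\sum_j\max|g_j-\hat g_j|^2\big)^{1/2}\le\sqrt{D}\max_j\max_{[0,1]^d}|g_j-\hat g_j|$; tracking how $D$ enters both the quadrature factor and the shared depth/width budget across blocks yields the stated $D^{3/2}$. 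Finally set $\hat f\eqdef\hat g\circ T^{-1}$, which absorbs the affine $T^{-1}$ into the first layer and so stays in $\operatorname{NN}^{\ReLU}$ with the same width and depth; for $x\in X$ we have $T^{-1}x\in[0,1]^d$ and $f(x)=F(x)=g(T^{-1}x)$, hence $\|f(x)-\hat f(x)\|=\|g(T^{-1}x)-\hat g(T^{-1}x)\|\le\max_{[0,1]^d}\|g-\hat g\|$, which is exactly the claimed bound after substituting the estimate for $\operatorname{Lip}(g)$.

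The main obstacle I anticipate is the extension step: one must verify that the metric capacity $\operatorname{cap}(X)$ in the sense defined here (disjoint $r/5$-balls packed into an $r$-ball) really governs the Lipschitz-extension constant for $\rrD$-valued maps off a subset of $\rrd$ — routing through the doubling/Nagata dimension of $X$ and the explicit constructions of \citep{ohta2009extending,brue2021linear} — and that what comes out is a true extension rather than only an approximate one. A secondary but delicate point is the bookkeeping in the $D$-fold composition: keeping the width additive rather than multiplicative in $D$ forces the input and accumulated output to be threaded through ReLU identity blocks and the depth budget to be apportioned across the $D$ blocks, and it is this accounting — not any new idea — that pins down the constants $11L+2d+19$, $d(D+1)$, and the $D^{3/2}$ factor.
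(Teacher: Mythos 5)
Your proposal is correct and follows essentially the same route as the paper's proof: a Lipschitz extension via the results of \citep{brue2021linear} (with the extension constant controlled by $\log_2(\operatorname{cap}(X))$ through the doubling constant), an affine rescaling of a bounding cube of $X$ onto $[0,1]^d$, coordinate-wise application of \citep[Theorem 1.1]{Shen2022_OptimalReLU}, and assembly of the $D$ scalar sub-networks via ReLU identity-carrying (the paper cites \citep[Proposition 5]{FlorianHighDimensional2021} for this step) before precomposing with the affine map. The only differences are cosmetic — the paper uses Jung's theorem to get a slightly tighter bounding cube, which only changes the absolute constant.
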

In order to apply Lemma~\ref{lem_Lipschitz}, we need our approximating model to have support which ``matches'' the support of the target function $f\in L^1_{c}(\rrd,\rrD)\eqdef \bigcup_{n \in \nn^+ } L^1_{n}(\rrd,\rrD)$ being approximated.  The next lemma describes how, given a ReLU network how one can build a new ReLU network with one pooling layer at its output, which coincides with the original network on an arbitrarily cubic-annuli (as in  Example~\ref{ex_partition_into_cubicanuli}) and vanishes straightaway outsides the correct number of cubic-annuli (with possibly one extra part of the good a.e.\ partition of $\rr^d$).  
\begin{lem}[Adjusting a ReLU network to have support on the union of the first $n+1$ cubic annuli]
\label{lemma_adjustment}
\hfill\\
Let $\log_2(d)\in \nn_+$ and $\hat{f} \in \operatorname{NN}^{\ReLU}$ have depth $d_{\hat{f}}$ and width $w_{\hat{f}}$.  
For every $n\in \nn_+$ and each $0<\delta<1$, there exists a 
$\hat{f}^{\operatorname{pool}} \in \operatorname{NN}^{\ReLU + \pool}$ with 
width $
        \max\{
            d(d-1) +2 
        ,
            D
        \}
            + 
        w_{\hat{f}}
        $ and depth $
        2 + 3d + d_{\hat{f}}
        $ 
satisfying:
\begin{enumerate}
    \item[(i)] \textbf{Implementation on the Cube:} For each $x\in [-n,n]^d$ it holds that
    $
    \hat{f}(x) = \hat{f}^{\operatorname{pool}}(x)
    $,
    \item[(ii)] \textbf{Controlled Support:} $\operatorname{ess-supp}(\hat{f})\subseteq 
    \left[
        -\sqrt[d]{
        2^{-d}\epsilon  + n^d
        }
     ,
         \sqrt[d]{
        2^{-d}\epsilon  + n^d
        }
    \right]^d
    $,
    \item[(i)] \textbf{Control of Error Near the Boundary:} 
    $
    \left\|
    \hat{f} - \hat{f}^{\operatorname{pool}}
    \right\|_{L^1(\rrd,\rrD)} 
    <\epsilon
    $.
\end{enumerate}
\end{lem}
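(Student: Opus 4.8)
The plan is to realize $\hat{f}^{\operatorname{pool}}$ as the pointwise product $\psi\cdot\hat{f}$, where $\psi\colon\rrd\to[0,1]$ is a continuous piecewise-linear cutoff that equals $1$ on $[-n,n]^d$ and vanishes outside the slightly inflated cube $[-n',n']^d$ with $n'\eqdef\sqrt[d]{n^d+2^{-d}\epsilon}$, and to implement both the cutoff and the multiplication inside a single feedforward network whose only non-ReLU layers are bilinear pooling layers. The choice of $n'$ is dictated by two requirements: the outer face of the ``collar'' $C\eqdef[-n',n']^d\setminus[-n,n]^d$ must coincide with the boundary of the support region named in (ii), and $C$ must have small Lebesgue measure; indeed $\mu(C)=(2n')^d-(2n)^d=\epsilon$ exactly, which is what will drive the $L^1$-estimate in (iii).

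First I would build $\psi$ coordinate-wise. Let $\phi\colon\rr\to[0,1]$ be the one-dimensional trapezoid equal to $1$ on $[-n,n]$, affinely ramping down to $0$ on $[n,n']$ and on $[-n',-n]$, and $0$ off $[-n',n']$; being continuous and piecewise linear with finitely many breakpoints, $\phi$ is realized exactly by a small ReLU network, e.g. $\phi(t)=1-\ReLU\!\big(1-\ReLU(\tfrac{n'-\ReLU(t)-\ReLU(-t)}{n'-n})\big)$, using three hidden layers and $O(1)$ width. Running $d$ such blocks in parallel computes $(\phi(x_1),\dots,\phi(x_d))$; since $d=2^{\log_2 d}$, a tournament of $\log_2(d)$ bilinear pooling layers (with ReLU identity layers for alignment) then forms $\psi(x)\eqdef\prod_{i=1}^{d}\phi(x_i)$. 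I would carry the input $x$ through these layers using the identity $z=\ReLU(z)-\ReLU(-z)$, feed it into the given network $\hat{f}$ (depth $d_{\hat{f}}$, width $w_{\hat{f}}$) while carrying $\psi(x)$ alongside, and finally concatenate $D$ copies of $\psi(x)$ with the $D$ outputs of $\hat{f}(x)$ and apply one last bilinear pooling layer, producing $\big(\psi(x)\hat{f}_j(x)\big)_{j=1}^{D}=\psi(x)\hat{f}(x)\eqdef\hat{f}^{\operatorname{pool}}(x)$. Counting layers (the $\phi$ and alignment layers, the $\log_2(d)$ product poolings, the $d_{\hat{f}}$ layers of $\hat{f}$, and the final pooling) and per-layer widths (the carried copy of $x$, the partial products, the channels of $\hat{f}$) then yields, after routine bookkeeping, a network meeting the stated width $\max\{d(d-1)+2,D\}+w_{\hat{f}}$ and depth $2+3d+d_{\hat{f}}$, and using $\log_2(d)+1$ bilinear pooling layers as in Theorem~\ref{theorem_MAIN_QUANTITATIVE_UAT_Refined}(vi).

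The three conclusions are then read off. For (i): if $x\in[-n,n]^d$ then every $|x_i|\le n$, so $\phi(x_i)=1$, hence $\psi(x)=1$ and $\hat{f}^{\operatorname{pool}}(x)=\hat{f}(x)$. For (ii): if $x\notin[-n',n']^d$ then some $|x_i|>n'$, so $\phi(x_i)=0$, hence $\psi(x)=0$ and $\hat{f}^{\operatorname{pool}}(x)=0$; thus $\operatorname{ess-supp}(\hat{f}^{\operatorname{pool}})\subseteq[-n',n']^d=\big[-\sqrt[d]{2^{-d}\epsilon+n^d},\,\sqrt[d]{2^{-d}\epsilon+n^d}\big]^d$. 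For (iii): $\hat{f}-\hat{f}^{\operatorname{pool}}=(1-\psi)\hat{f}$ vanishes on $[-n,n]^d$, so its $L^1$-mass lives on the collar $C$; since $0\le\psi\le1$ there, $\|\hat{f}-\hat{f}^{\operatorname{pool}}\|_{L^1}\le\mu(C)\,\sup_{C}\|\hat{f}\|=\epsilon\,\sup_{C}\|\hat{f}\|$, which is $<\epsilon$ once one uses the normalization of $\hat{f}$ in force when this lemma is applied ($\hat{f}$ approximates, within $1$, a function vanishing on $C$, so $\sup_{C}\|\hat{f}\|<1$).

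I expect the main obstacle to be the precise width and depth accounting for the parallel composition: one must interleave ReLU identity layers so that $x$ survives exactly as long as needed before entering $\hat{f}$, keep the bilinear pooling layers synchronized (each one halves a designated block of channels, so the two factors of every intended product must be aligned and of equal length at that layer), and verify that none of this pushes the width past $\max\{d(d-1)+2,D\}+w_{\hat{f}}$ or the depth past $2+3d+d_{\hat{f}}$. A secondary point is that a ReLU network need not be globally integrable, so the estimate in (iii) is genuinely an estimate over the collar $C$, and one must invoke the boundedness of $\hat{f}$ on $C$ — available in every application of this lemma — to close it.
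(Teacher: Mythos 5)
Your construction --- a coordinatewise ReLU trapezoid, a cascade of $\log_2(d)$ bilinear pooling layers to form the product mask $\psi(x)=\prod_{i=1}^d\phi(x_i)$, parallel composition with $\hat{f}$, and one final pooling layer to multiply, with the inflated cube chosen so that the collar has Lebesgue measure exactly $\epsilon$ --- is precisely the paper's proof (its Steps 1--3). You even flag explicitly the one point the paper's own Step 3 uses silently, namely that the $L^1$-bound over the collar requires $\sup_{C}\|\hat{f}\|\le 1$ there (guaranteed in every application of the lemma), so your write-up is if anything slightly more careful.
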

Lemmata \ref{lem_density_lipc},~\ref{lem_approx_lipc}, and~\ref{lem_sufficient_univerality} imply that $\operatorname{NN}^{\ReLU + \pool}$ is dense in $L^1_{\operatorname{loc}}(\rrd,\rrD)$ for the csL$^1$-topology $\tau$ only if $\operatorname{NN}^{\ReLU + \pool}$ has a subset which can approximate any essentially compactly-supported Lipschitz function while having almost correct support (as detected by the cubic-annuli partition) as formalized by condition~\ref{eq_key_approximation_condition}.  Since Lemma~\ref{lemma_adjustment} implies that such a subset of networks in $\operatorname{NN}^{\ReLU + \pool}$ exists then, Theorem~\ref{theorem_MAIN_Qualitative_UAT_Refined} follows.  
\begin{proof}[{Proof of Theorem~\ref{theorem_MAIN_Qualitative_UAT_Refined}}]
The result for $\operatorname{PW-Lin}=\ReLU$ is a direct consequence of Lemmata~\ref{lem_Lipschitz} and~\ref{lemma_adjustment} applied to Lemma~\ref{lem_sufficient_univerality}.  The result for general non-affine piecewise linear activation functions from the $\ReLU$ case by \citep[Proposition 1]{YAROTSKY_PWLin_2017_approxrates}.  This is because \citep[Proposition 1]{YAROTSKY_PWLin_2017_approxrates} states that any network in $\operatorname{NN}^{\sigma_{\operatorname{PW-Lin}}}$ can be implemented by a network in $\operatorname{NN}^{\ReLU}$.
\end{proof}
We are now equally in a position to prove the first claim in theorem Theorem~\ref{theorem_MAIN_QUANTITATIVE_UAT_Refined}.  
\begin{proof}[Proof of Theorem~\ref{theorem_MAIN_QUANTITATIVE_UAT_Refined}]
Since $f$ is compactly essentially-supported, by Lemma~\ref{lem_Lipschitz} there is an $\hat{f}^{\epsilon_n/2}\in \operatorname{NN}^{\ReLU}$ satisfying
\begin{equation}
\label{PROOF_theorem_MAIN_QUANTITATIVE_UAT_Refined___application_lemma4}
    \max_{x\in \operatorname{ess-sup}(f)}\,
        \big\|
            f(x)
                -
            \hat{f}^{\epsilon_n/2}(x)
        \big\|<\frac{\epsilon_n}{2}
,
\end{equation}
with width $w_{\hat{f}^{\epsilon_n/2}}$ at-most $d(D+1)+3^{d+3}\max\{d\lfloor N^{1/d}\rfloor,N+1\}$ and depth $d_{\hat{f}^{\epsilon_n/2}}$ equal to
\begin{equation}
\label{PROOF_theorem_MAIN_QUANTITATIVE_UAT_Refined___application_lemma4_____depthestimate}
    d_{\hat{f}^{\epsilon_n/2}}
        \eqdef
    \frac{
    \epsilon_n^{-d/2}
    }{
    N\log_3(N+2)^{1/2}
    }
    \,
    \Big(
        2 \log_2(\operatorname{cap}(\operatorname{ess-supp}(f)
            ))
        \operatorname{diam}(\operatorname{ess-supp}(f))
        \operatorname{Lip}(f)
    \Big)^{d}
    (c\,D^{3/d}d^d)
,
\end{equation}
where $c>0$ is an absolute constant independent of $X,d$, $D$, and $f$.  
Set $
n_f
\eqdef
\min\{
n \in \nn_+:\, \operatorname{ess-supp}(f)\subseteq [-n,n]^d
\}
$ and apply Lemma~\ref{lemma_adjustment} to $\hat{f}^{\epsilon_n/2}$ there exists an $\hat{f}^{(n)}\in \operatorname{NN}^{\ReLU + \pool}$ with \hfill\\
$\operatorname{ess-supp}(\hat{f}^{(n)})\subseteq 
    \left[
        -\sqrt[d]{
        2^{-d-1}\epsilon_n  + n_f^d
        }
     ,
         \sqrt[d]{
        2^{-d-1}\epsilon_n  + n_f^d
        }
    \right]^d
    $, equal to $\hat{f}^{\epsilon_n/2}$ on $[-n_f,n_f]^d$ and such that \hfill\\
$
\left\|
    \hat{f}^{(n)} - \hat{f}^{\operatorname{pool}}
    \right\|_{L^1(\rrd,\rrD)} < \frac{\epsilon_n}{2}
.
$
Therefore, the estimate in~\eqref{PROOF_theorem_MAIN_QUANTITATIVE_UAT_Refined___application_lemma4} and
implies that
\[
\begin{aligned}
    \max_{x\in \operatorname{ess-sup}(f)}\,
        \big\|
            f(x)
                -
            \hat{f}^{(n)}(x)
        \big\|
    \leq &
    \max_{x\in \operatorname{ess-sup}(f)}\,
        \big\|
            f(x)
                -
            \hat{f}^{(n)}(x)
        \big\|
    +
    \max_{x\in \operatorname{ess-sup}(f)}\,
        \big\|
            \hat{f}^{(n)}(x)
                -
            \hat{f}^{\epsilon_n/2}(x)
        \big\|
    \leq 2^{-1}\epsilon_n + 2^{-1}\epsilon_n = \epsilon_n
.
\end{aligned}
\]
Similarly,~\eqref{PROOF_theorem_MAIN_QUANTITATIVE_UAT_Refined___application_lemma4} implies that 
\[
    \|f - \hat{f}^{(n)}\|_{L^1} \leq 
    \|f - \hat{f}^{\epsilon_n/2}\|_{L^1} + \|\hat{f}^{(n)} - \hat{f}^{\epsilon_n/2}\|_{L^1}
\]
and that both $\hat{f}^{(n)}$ and $f$ are essentially-supported in $[-n_f-1,n_f+1]^d$; whence, for each $n\in \nn_+$ the condition~\eqref{eq_key_approximation_condition} is met.  Therefore, Lemma~\ref{lem_approx_lipc} implies that the sequence $\{\hat{f}^{(n)}\}_{n=1}^{\infty}$ in $\operatorname{NN}^{\ReLU + \pool}$ converges to $f$ in the csL$^1$-topology $\tau$.  

It remains to count each of $\hat{f}^{(n)}$'s parameters.  By construction, Lemma~\ref{lemma_adjustment} and the estimate on $w_{\hat{f}^{\epsilon/2}}$ (below~\eqref{PROOF_theorem_MAIN_QUANTITATIVE_UAT_Refined___application_lemma4}) imply that $\hat{f}^{(n)}$ has width at-most $
\max\{d(d-1) +2,D\} 
+ 
d(D+1)+3^{d+3}\max\{d\lfloor N^{1/d}\rfloor,N+1\}
$.  Similarly, Lemma~\ref{lemma_adjustment} and~\eqref{PROOF_theorem_MAIN_QUANTITATIVE_UAT_Refined___application_lemma4_____depthestimate} imply that each $\hat{f}^{(n)}$ has depth equal to
\[
\frac{
    \epsilon_n^{-d/2}
    }{
    N\log_3(N+2)^{1/2}
    }
    \,
    \Big(
        \log_2(\operatorname{cap}(\operatorname{ess-supp}(f)
            ))
        \operatorname{diam}(\operatorname{ess-supp}(f))
        \operatorname{Lip}(f)
    \Big)^{d}
    (c\,2^d D^{3/d}d^d + 3d) + 2d + 2
.
\]
Relabeling $C_1\eqdef c\,2^d D^{3/d}d^d + 3d$, $C_2\eqdef + 2d + 2$,
$
C_3 \eqdef \max\{d(d-1) +2,D\},
$
$C_4 \eqdef d(D+1)+3^{d+3}$, yields the first conclusion.
\end{proof}
\subsection{Establishing Propositions~\ref{prop_polynomials_not_dense} and~\ref{prop_analyticnetworks_not_dense}: The Non-University of Analytic Models \texorpdfstring{in the Topology $\tau$}{}}
\label{s_main_ss_Lpgaps}
The main step in showing that $\operatorname{NN}^{\sigma}$ fails to be dense in $L^1_{\operatorname{loc}}(\rrd,\rrD)$ for the csL$^1$-topology is the following \textit{necessary condition} for a sequence $\{f_n\}_{n=1}^{\infty}$ in $L^1_{\operatorname{loc}}(\rrd,\rrD)$ to convergence to some essentially compactly supported $f\in L^1_{\operatorname{loc}}(\rrd,\rrD)$ therein with respect to $\tau$.  Moreover, Proposition~\ref{prop_choice_free_construction}, and Example~\ref{ex_partition_into_cubicanuli} Theorem~\ref{theorem_Main_Properties_of_Tau} (i).
\begin{prop}[{Necessary condition for convergence in the csL$^1$-topology $\tau$}]
\label{prop_identification}
Let $n\in \nn_+$ and $f \in L^1_{n}(\rrd,\rrD)$.
A sequence $\{f_k\}_{k \in \nn^+}$ in $L^1_{\operatorname{loc}}(\rrd,\rrD)$ converges to $f$ with respect to the csL$^1$-topology $\tau$, only if there is some $N\in \nn_+$ with $N\geq n$ such that all but a finite number of $f_k$ are in $L^1_{N}(\rrd,\rrD)$ and $\lim\limits_{k\uparrow \infty}\, \|f_k-f\|=0$.
\end{prop}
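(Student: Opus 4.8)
The plan is to reduce the statement to the regularity properties of the strict inductive limit $L^1_c(\rrd,\rrD)$ and then quote a classical structure theorem for such spaces. Recall from Definition~\ref{defn_Composited} that $\tau$ is generated by the subbase $\tau_c\cup\tau_{\operatorname{norm}}\cup\tau_{\operatorname{loc}}$, so every $\tau_c$-open subset of $L^1_c(\rrd,\rrD)$ is $\tau$-open in $L^1_{\operatorname{loc}}(\rrd,\rrD)$; only the $\tau_c$ part of the subbase will actually be needed. First I would observe that, since $f\in L^1_n(\rrd,\rrD)\subseteq L^1_c(\rrd,\rrD)$ and $L^1_c(\rrd,\rrD)$ is itself $\tau_c$-open, it is a $\tau$-open neighbourhood of $f$; hence $\tau$-convergence $f_k\to f$ forces $f_k\in L^1_c(\rrd,\rrD)$ for all $k$ past some index $k_0$. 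Applying the same reasoning to an arbitrary $\tau_c$-open $V\ni f$ shows that the tail $\{f_k\}_{k\ge k_0}$ converges to $f$ in the $LB$-space $\big(L^1_c(\rrd,\rrD),\tau_c\big)$.

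Next I would invoke the regularity of strict $LB$-spaces. By Steps~1 and 2 of the construction, $L^1_c(\rrd,\rrD)$ is the locally convex inductive limit of the strictly increasing sequence of Banach spaces $\{L^1_n(\rrd,\rrD)\}_{n=1}^\infty$, and each $L^1_n(\rrd,\rrD)$ is a \emph{closed} subspace of $L^1_{n+1}(\rrd,\rrD)$ carrying the restricted norm, since an $L^1$-limit of functions essentially supported in the fixed compact set $\cup_{i=1}^n K_i$ is again essentially supported there (and the inclusion is proper because $K_{n+1}$ has positive Lebesgue measure). Thus $L^1_c(\rrd,\rrD)$ is a strict $LB$-space, and the classical Dieudonn\'e--Schwartz theorem on regularity of strict $LF$-spaces applies: each step is closed in $L^1_c(\rrd,\rrD)$ and carries the subspace topology, and every bounded subset of $L^1_c(\rrd,\rrD)$ lies in a single step. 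A convergent sequence together with its limit is compact, hence bounded, so there is $N\in\nn_+$ with $\{f_k\}_{k\ge k_0}\cup\{f\}\subseteq L^1_N(\rrd,\rrD)$; since $L^1_N(\rrd,\rrD)$ carries the subspace topology, the $\tau_c$-convergence $f_k\to f$ is convergence in the Banach norm of $L^1_N(\rrd,\rrD)$, i.e.\ $\|f_k-f\|\to 0$. Replacing $N$ by $\max\{N,n\}$, which contains every earlier step and does not affect $L^1$-convergence, yields $N\ge n$ and completes the argument.

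The step I expect to be the main obstacle is not any single computation but making the appeal to the Dieudonn\'e--Schwartz regularity theorem fully rigorous: one must confirm that the ``finest topology making all inclusions $L^1_n\hookrightarrow L^1_c$ continuous'' of Step~2 is indeed the locally convex inductive limit topology described in Section~\ref{s_prelim__ss_LCSs} (and not, say, the final topology, which need not be a vector topology), that the inclusions $L^1_n\hookrightarrow L^1_{n+1}$ are proper topological embeddings with closed range, and that one may legitimately pass from $\tau_c$-convergence of a sequence to its being contained in, and convergent within, a single Banach step --- which is precisely where \emph{strictness} of the inductive limit (rather than mere countability) is used. The remaining ingredients --- the reduction to $\tau_c$, the boundedness of convergent sequences in a topological vector space, and the bookkeeping with $\max\{N,n\}$ --- are routine.
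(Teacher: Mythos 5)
Your proposal is correct and follows essentially the same route as the paper: reduce $\tau$-convergence to $\tau_c$-convergence in the strict LB-space $L^1_c(\rrd,\rrD)$ and then invoke the Dieudonn\'e--Schwartz regularity theory of strict inductive limits (the paper cites the sequential-convergence statement \citep[Proposition 4]{dieudonne1949dualite} directly, while you derive it from the ``bounded sets lie in a single step'' form together with the fact that each step carries the subspace Banach topology --- the same circle of results). The only cosmetic difference is that you use just the inclusion $\tau_c\subseteq\tau$ from the subbase definition, whereas the paper invokes its Lemma on glued topologies to get the full identity $\tau_c=\tau\cap L^1_c(\rrd,\rrD)$; your weaker fact suffices for the necessity direction.
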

Together, Proposition~\ref{prop_identification} and the fact that if any analytic function is $0$ on a non-empty open subset of $\rrd$ then it must be identically $0$ everywhere on $\rrd$ (see \citep[page 1]{GriffithsHarris_1994Reprint_PrinciplesOfAlgGeo}) imply that no analytic function can converge to an essentially compactly supported ``function'' in $L^1_{\operatorname{loc}}(\rrd,\rrD)$ with respect to the cs$L^1$-topology.  
\begin{lem}[{Families of analytic functions cannot be dense with respect to the csL$^1$-topology $\tau$}]
\label{lem_non_stone_Weierstrass}
If $\fff$ is a set of analytic functions from $\rrd$ to $\rrD$ then
\begin{enumerate}
\item  $\fff$ is not dense in $L^1_{\text{loc}}(\rrd,\rrD)$ for the csL$^1$-topology $\tau$.  
\item  If $f:\rr^d\rightarrow\rr^D$ is Lipschitz, is compact essential-supported, and not identically $0$ then, is a sequence $\{\epsilon_n\}_{n=1}^{\infty}$ in $(0,\infty)$ converging to $0$ such that no $\hat{f}\in \fff$ satisfies both Theorem~\ref{theorem_MAIN_QUANTITATIVE_UAT_Refined} (i) and (iii).
\end{enumerate}
\end{lem}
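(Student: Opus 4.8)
The plan is to derive both claims from a single mechanism: the structural constraints governing $\tau$-closeness to a compactly essentially-supported target force any analytic ``approximant'' to itself be (essentially, hence genuinely) compactly supported, and an analytic function with compact support vanishes identically by the identity theorem quoted just before the lemma. This is the qualitative content of Proposition~\ref{prop_identification}. To keep the argument rigorous despite $\tau$ being non-metrizable — so that ``no analytic sequence converges to $f$'' is not literally the same as ``$f$ lies outside the $\tau$-closure of $\mathcal{F}$'' — I would realize the obstruction in Part~(1) by exhibiting an explicit nonempty $\tau$-open set disjoint from $\mathcal{F}$, rather than by extracting convergent sequences.

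\textbf{Part (1).} First I would fix any nonzero $f\in\operatorname{Lip}_c(\rrd,\rrD)$ and let $n_f$ be minimal with $\operatorname{ess-supp}(f)\subseteq[-n_f,n_f]^d$, so that $f\in L^1_{n_f}(\rrd,\rrD)\subseteq L^1_c(\rrd,\rrD)$ and $\|f\|_{L^1}>0$. I would then take
\[
U\;\eqdef\; L^1_c(\rrd,\rrD)\cap B_{L^1(\rrd,\rrD)}\big(f,\tfrac12\|f\|_{L^1}\big),
\]
and check that it is $\tau$-open (since $L^1_c(\rrd,\rrD)\in\tau_c\subseteq\tau$, being the total space of $\tau_c$, and the norm ball is in $\tau_{\operatorname{norm}}\subseteq\tau$) and nonempty (it contains $f$). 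Finally I would argue $U\cap\mathcal{F}=\emptyset$: any $g\in\mathcal{F}\cap L^1_c(\rrd,\rrD)$ vanishes $\mu$-a.e.\ off some compact $C\subseteq\rrd$, hence (being analytic, so continuous) vanishes on the nonempty open set $\rrd\setminus C$, hence $g\equiv 0$ by the identity theorem; but $\|0-f\|_{L^1}=\|f\|_{L^1}>\tfrac12\|f\|_{L^1}$, so $g\notin U$. A nonempty $\tau$-open set meeting no point of $\mathcal{F}$ shows $\mathcal{F}$ is not $\tau$-dense.

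\textbf{Part (2).} Here I would set $M\eqdef\max_{x\in[-n_f,n_f]^d}\|f(x)\|$. Because $f$ is Lipschitz, hence continuous, and not identically $0$, the open set $\{f\neq 0\}$ is nonempty and contained in $\operatorname{ess-supp}(f)\subseteq[-n_f,n_f]^d$, so $0<M<\infty$. I would then choose $\epsilon_n\eqdef M/(n+1)$, which lies in $(0,\infty)$, decreases to $0$, and satisfies $\epsilon_n<M$ for every $n\in\nn_+$. Now fix $n$ and suppose some $\hat f\in\mathcal{F}$ satisfied Theorem~\ref{theorem_MAIN_QUANTITATIVE_UAT_Refined}~(iii) with this $\epsilon_n$: then $\operatorname{ess-supp}(\hat f)$ is contained in the bounded cube $[-R,R]^d$ with $R=(2^{-d}\epsilon_n+n_f^d)^{1/d}<\infty$, so $\hat f$ (analytic, hence continuous) vanishes on the nonempty open set $\rrd\setminus[-R,R]^d$, whence $\hat f\equiv 0$ by the identity theorem; but then $\max_{x\in[-n_f,n_f]^d}\|f(x)-\hat f(x)\|=M>\epsilon_n$, so $\hat f$ fails Theorem~\ref{theorem_MAIN_QUANTITATIVE_UAT_Refined}~(i). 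Hence no $\hat f\in\mathcal{F}$ satisfies both (i) and (iii).

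\textbf{Anticipated obstacle.} I do not expect genuine difficulty; the care-points are purely bookkeeping. The first is verifying that $L^1_c(\rrd,\rrD)$ is itself a member of $\tau$, which follows because the subbase $\tau_c\cup\tau_{\operatorname{norm}}\cup\tau_{\operatorname{loc}}$ of $\tau$ is a family of subsets of $L^1_{\operatorname{loc}}(\rrd,\rrD)$ and $L^1_c(\rrd,\rrD)\in\tau_c$. The second is recording that, for a \emph{continuous} function, essential support coincides with ordinary support, so that ``essentially supported in a compact set'' really does make an analytic function vanish on a nonempty open set — this is precisely what lets the identity theorem bite. The decision to produce the explicit open set $U$ in Part~(1), instead of arguing through convergent sequences, is the deliberate move that sidesteps the non-first-countability of $\tau$, and is the one place where a careless argument could break down.
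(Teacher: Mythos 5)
Your proof is correct, and the analytic core is the same as the paper's: an analytic function that is essentially supported in a compact set vanishes on the nonempty open complement of that set, hence vanishes identically by the identity theorem, and is therefore uniformly far from any fixed nonzero compactly supported target. Part~(2) of your argument is essentially identical to the paper's (the paper picks a single point $x_0$ with $\|f(x_0)\|>0$ and sets $\epsilon_n=(2n)^{-1}\|f(x_0)\|$, where you take the max $M$ over $[-n_f,n_f]^d$; both work). Where you genuinely diverge is in how the topological obstruction in Part~(1) is realized. The paper fixes the target $f=I_{[-1,1]^d}\cdot\bar 1$ and argues by contradiction through Proposition~\ref{prop_identification}: any \emph{sequence} of analytic functions $\tau$-converging to $f$ would eventually lie in some $L^1_{N_2}(\rrd,\rrD)$ with $\|f_n-f\|_{L^1}<1$, forcing $f_n\equiv 0$ and hence $\|f_n-f\|_{L^1}=2^d$, a contradiction. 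You instead exhibit the explicit nonempty set $U=L^1_c(\rrd,\rrD)\cap B_{L^1}(f,\tfrac12\|f\|_{L^1})$, verify it is $\tau$-open as a finite intersection of subbase elements, and show it misses $\fff$. Your route buys something real: since $\tau$ is non-metrizable, ruling out \emph{sequential} convergence does not by itself place $f$ outside the $\tau$-closure of $\fff$ (closures in general topological spaces are detected by nets, not sequences), so the open-set argument closes a gap that the paper's sequence-based argument leaves implicit; the paper's route, in exchange, reuses Proposition~\ref{prop_identification} and makes the ``support must eventually be captured'' mechanism visible. Your two bookkeeping points --- that $L^1_c(\rrd,\rrD)\in\tau_c\subseteq\tau$ as the total space of the LB-topology, and that for continuous functions essential support agrees with ordinary support so the identity theorem applies --- are exactly the right ones, and both check out.
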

The proof of Theorem~\ref{theorem_MAIN_Qualitative_UAT_Refined} (ii) is a consequence of Lemma~\ref{lem_non_stone_Weierstrass} and the observation that any network in $\operatorname{NN}^{\omega + \pool}$ is an analytic function.   
\begin{proof}[{Proof of Theorem~\ref{theorem_MAIN_Qualitative_UAT_Refined} (ii)}]
By Lemma~\ref{lem_non_stone_Weierstrass}, the class of analytic functions from $\rrd$ to $\rrD$, denoted by $C^{\omega}(\rrd,\rrD)$, is not dense in $L^1_{\operatorname{loc}}(\rrd,\rrD)$ for the cs$L^1$-topology.  
Now, the composition and the addition of analytic functions is again analytic.  Since every affine function is analytic and since every activation function $\sigma\in C^{\omega}(\rr)$ is by definition analytic then, every $f \in \operatorname{NN}^{\omega}$ must be analytic.  
I.e, $\operatorname{NN}^{\omega}\subseteq C^{\omega}(\rrd,\rrD)$.  Therefore, $\operatorname{NN}^{\omega}$ cannot be in $L^1_{\operatorname{loc}}(\rrd,\rrD)$ for the cs$L^1$-topology.  
\end{proof}

The proof of Proposition~\ref{prop_polynomials_not_dense} now also follows from Lemma~\ref{lem_non_stone_Weierstrass}.
\begin{proof}[{Proof of Proposition~\ref{prop_polynomials_not_dense}}]
    Since every polynomial function is analytic then, the result follows from Lemma~\ref{lem_non_stone_Weierstrass}.  
\end{proof}
\begin{proof}[{Proof of Theorem~\ref{theorem_MAIN_QUANTITATIVE_UAT_Refined} (\textit{Continued})}]
If $f:\rrd\rightarrow \rrD$ is Lipschitz, compactly-supported, and not identically $0$ then Lemma~\ref{lem_non_stone_Weierstrass} and the fact that every $\hat{f}\in \operatorname{NN}^{\omega + \pool} \cup \rr[x_1,\dots,x_d]$ is an analytic function implies that Theorem~\ref{theorem_MAIN_QUANTITATIVE_UAT_Refined} (i)-(iii) cannot all hold simultaneously.  This completes the proof of Theorem~\ref{theorem_MAIN_QUANTITATIVE_UAT_Refined}.  
\end{proof}
We now discuss some technical points surrounding our results, a few of the implications of our findings, and how our analysis could be used to obtain similar constructions for networks designed to approximate solutions to PDEs.  
\section{Discussion}
\label{s_Discussion}
There are a few question which arise during our analysis which we now take the time to discuss.  These are: ``Is Theorem~\ref{theorem_MAIN_Qualitative_UAT_Refined} about a refinement of the topology on $L^1_{\operatorname{loc}}(\rr^d,\rr^D)$ in which $\operatorname{NN}^{\operatorname{ReLU} + \operatorname{Pool}}$ is universal while $\operatorname{NN}^{\omega + \operatorname{Pool}}$ is not?'', ``Are $\operatorname{ReLU}$ networks better than networks with analytic activation functions?'' and ``What is the significance of the bilinear pooling later $\operatorname{pool}$?''

\subsection{Are \texorpdfstring{$\operatorname{ReLU}$}{ReLU} Networks Better Than Networks With Analytic Activation Functions?}
There are several explanations for a learning model's success over its alternatives for a \textit{given learning task}.  Some of the principle reasons for a model's successful inductive bias are its expressiveness, its ability to generalize well on a given type of problem, and how training dynamics interact with these two properties for a given problem (e.g. the impage of using different initialization schemes as studied by \citep{martens2021rapid}).  

A key point which we emphasize is that, the type of problem which we have implicitly considered in this paper concerns the approximation of a compactly supported function's output and its support simultaneously.   
Therefore, we ask the following question from the approximation-theoretic vantage point:
\[
\mbox{\textit{``Are $\operatorname{ReLU}$ networks better than networks with analytic activation functions?''}}
\]
As one may expect, the answer is a mixed \textit{``yes and no''}.  Let us begin with ``no'' part of our answer to this question.  If that is the task is to learn a solution to a PDE (e.g. \cite{MR3847747,MR4310910,MR4293960} physics-informed neural networks \cite{MR3881695,MR4188529,mishra2021physics}).  Then, the networks should exhibit non-trivial (higher-order) partial derivatives, and the approximation should be in the $C^k$-norm (for some $k>0$).  In such cases, it is known that ReLU networks are less effective than sigmoid, $\tanh$, or SIREN networks; see \cite{PHIN_Markidis_2021} or \cite{Hornik_UA_andsitsderivatives, SIEGEL2020313,DERYCK2021732}.  A fortiori, it is rather straightforward to see this when $k\geq 2$ and $d=D=1$, since any weak derivative of a $\operatorname{ReLU}$ neural network must vanish outside of a set of Lebesgue measure $0$.  This is the \textit{``no''} part of the answer to the above question.  

For the \textit{``yes''} part of the answer, Theorem~\ref{theorem_MAIN_Qualitative_UAT_Refined} implies that deep ReLU networks with bilinear pooling layer can approximate locally-integrable functions while exactly implementing their support (up to a good a.e.\ partition of $\rr^d$).  In contrast, as shown in Proposition~\ref{prop_analyticnetworks_not_dense}, neural networks with analytic activation function cannot do this by virtue of their analyticity.  Therefore, ReLU neural networks can be more suitable for learning tasks where the target function is known to be compactly supported.

\subsection{What Is The Significance Of The Bilinear Pooling Layer\texorpdfstring{ $\operatorname{Pool}$}{}?}
We conclude our discussion by considering one last question:
\[
\mbox{\textit{``What is the significance of the bilinear pooling layer?''}}
\]
Our construction of a network $\hat{f}\in \operatorname{NN}^{\operatorname{ReLU}+\operatorname{Pool}}$ realizing the conclusion of Theorem~\ref{theorem_MAIN_QUANTITATIVE_UAT_Refined} for a given approximation error $\epsilon>0$ relies two distinct ReLU networks which are multiplied together using bilinear pooling layers.  Suppose that $f:\rr^d\rightarrow \rr^D$ is a compactly supported Lipschitz function and let $n$ be the smallest integer for which $\operatorname{ess-supp}(f)$ is contained in the union of the first $n$ Cubic Annuli of Example~\ref{ex_partition_into_cubicanuli}. 
The role first ReLU network $\hat{f}_{\operatorname{mask}}:\rr^d \rightarrow \rr$ is to implements a piece-wise affine ``mask'' which takes values $0$ outside of $\cup_{i=1}^{n+1}\, K_i$, value $1$ in $\cup_{i=1}^n \, K_i$, and intermediate value in $K_{n+1}-K_n$ just as in the construction of \cite{YAROTSKY_PWLin_2017_approxrates}.  
The second ReLU network $\hat{f}^{\epsilon}$ is constructed which approximates the target function $f:\rr^d\rightarrow \rr^D$ uniformly on the compact set $\operatorname{ess-supp}(f)$ to $\epsilon$-precision, and we construct the ReLU network $\hat{f}^{\epsilon}$ in such a way that its depth and width depend on the dimension and metric capacity of $\operatorname{ess-supp}(f)$ as well as on the regularity of the function $f$.

Lastly, using several bilinear pooling layers we construct the approximating network $\hat{f}$ in Theorem~\ref{theorem_MAIN_QUANTITATIVE_UAT_Refined} which implements $\hat{f}=\hat{f}_{\operatorname{mask}}\cdot \hat{f}^{\epsilon}$.  Consequentially, $\hat{f}\approx f$ for every $x\in \operatorname{ess-supp}(f)$ and it is supported exactly on $\cup_{i=1}^{n+1}\,K_i$ (i.e.: its support coincides with that of the target function up to our discretization of $\rr^d$ as implemented by $\{K_n\}_{n=1}^{\infty}$).  The subtle difference in our approach and in the constructions of \cite{YAROTSKY_PWLin_2017_approxrates,kidger2020universal} is that those authors use small ReLU networks to \textit{approximately implement} the multiplication operation $(x_1,x_2)\mapsto x_1x_2$ instead of the bilinear pooling layers which we use.  The issue here is that, their construction does not guarantee that an ``approximate product'' of $\hat{f}_{\operatorname{mask}}$ and  $\hat{f}^{\epsilon}$ is supported in $\cup_{i=1}^{n+1}\,K_i$ nor that is has compact support; whence, there is no guarantee with that method that one can construct a deep ReLU network satisfying the conditions of Lemma~\ref{lem_approx_lipc}.  NB, this is not to say that a construction is impossible; but simply that it remains an \textit{open question}.

%
%
%
\section*{Conclusion}
In this paper, we showed that deep feedforward networks with non-affine piecewise linear activation functions and bilinear pooling layers are approximation-theoretically well suited to tasks where the objective is to learn a compactly supported function; e.g. the distance map to a non-empty compact subset of Euclidean space.  Theorem~\ref{theorem_Main_Properties_of_Tau} translated this learning problem into a universal approximation problem by constructing a topology on the set $L^1_{\operatorname{loc}}(\rr^d,\rr^D)$ of locally Lebesgue-integrable functions in which members of the subspace $L^1_c(\rr^d,\rr^D)$ of essentially compactly-supported integrable function could only be approximated by models which match their discretized support (as formalized by a good a.e. partition of $\rr^d$).  

Theorem~\ref{theorem_MAIN_Qualitative_UAT_Refined} demonstrated that any feedforward neural network architecture with bilinear pooling and piecewise-linear (but non-affine) activation function is universal in this topological space.  Consequentially showing that, ReLU networks with bilinear pooling layers are capable of approximating functions in $L^1_c(\rr^d,\rr^D)$ in $L^1$-norm while simultaneously implementing their support; up to a good a.e. partition of $\rr^d$.  Theorem~\ref{theorem_MAIN_QUANTITATIVE_UAT_Refined} provided a quantitative and uniform refinement of this result for any compactly-supported Lipschitz function.  The result also provided quantitative estimates on the width, depth, and the number of bilinear pooling layers required for a ReLU network to implement the said approximation.  Moreover, our new proof techniques allowed us to explicitly encode the metric capacity and dimension of the target function's essential support into the models' complexity estimates.
%
%
%
\section*{Funding}
This first stage of this research was funded by the ETH Z\"{u}rich Foundation (circa 2020-2021) and the second stage was funded by the European Research Council (ERC) Starting Grant 852821—SWING (during 2022).  
\section*{Acknowledgments}
The authors would like to thank Luca Galimberti of NTNU, Florian Krach, Calypso Herrera, and Jakob Heiss from the ETH for their helpful feedback in the article's late stages.  The authors would equally like to thank Ivan Dokmani\'{c} and Hieu Nguyen of the University of Basel for their helpful references concerning pooling layers and certain activation functions. 

\bibliography{References}
\bibliographystyle{tmlr}
\newpage
\section{Appendix: Proof Details}
\label{s_Proofs}
This appendix contains proofs of the lemmas leading up to the derivation of Theorem~\ref{theorem_MAIN_Qualitative_UAT_Refined} and (ii), in the paper's main body as well as a proof of Theorem~\ref{theorem_MAIN_QUANTITATIVE_UAT_Refined}.  Thus, this proof contains the bulk of the derivations in our paper.  
\subsection{Proofs of Propositions Relating to the \texorpdfstring{CSL$^1$}{CSL1}-Topology \texorpdfstring{$\tau$}{ }'s Properties}
\begin{rremark}[{Comment of Background for the Proof of Proposition~\ref{prop_choice_free_construction}}]
The following proof is the only proof in this paper which makes used of category-theoretic tools, namely colimits of inductive diagrams/systems.  Since these tools are not used anywhere else in the paper and since a proper overview of these tools is beyond the scope of this paper, we refer the interested reader which to \citep[Chapters I: 1-4, II 4, and III 1-3]{mac1998categories}.  
\end{rremark}
In the proof of Proposition~\ref{prop_choice_free_construction}, we denote the colimit of a direct system $((X_i)_{i \in I},\lesssim)$ in the category with locally convex spaces as objects and continuous linear maps as morphisms by $\operatorname{colim}_{LCS}\,((X_i),\lesssim)$.  
\begin{proof}[{Proof of Proposition~\ref{prop_choice_free_construction}}]
For every compact subset $K\subseteq \rr^d$, let 
$L^1(K,\rrD)\eqdef \{
f \in L^1(\rrd,\rrD):\, \operatorname{ess-supp}(f)\subseteq K
\}$, and define $\mathcal{L}\eqdef \left\{L^1(K,\rrD):\,K\subseteq \rr^d \mbox{ compact}\right\}$.  
Define the partial order $\preccurlyeq$ on the collection $\mathcal{L}$ by $L^1(K,\rrD)\preccurlyeq L^1(K',\rrD)$ if and only if $\mu(K'-K)=0$, where $K,K'\subset \rr^d$ are compact.  In particular, $(\mathcal{L},\preccurlyeq)$ defines an inductive diagram/system (see \citep[page 67]{mac1998categories} for a definition).  

Fix a good a.e.\ partition of $\rrd$.  
By the Heine-Borel Theorem and the compactness of $K\subseteq \rr^d$, the set $K$ is closed and bounded.  Since $\{K_n\}_{n=1}^{\infty}$ is a good a.e.\ partition of $\rrd$ then $\mu(\rr^d-\cup_{n=1}^{\infty}\,K_n)=0$ and therefore, there must exist some $N_K\in \nn_+$ for which $\mu(K-\cup_{n=1}^N\, K_n)=0$; i.e. $L^1(K,\rrD)\preccurlyeq L^1(\cup_{n=1}^{N_K}\,K_n,\rrD)$.  Thus, $\{L^1(K_n,\rrD)\}_{n=1}^{\infty}$ is cofinal (see \citep[Definition 217]{mac1998categories}); whence, \citep[Theorem 1 on page 217]{mac1998categories} implies that
\begin{equation}
\label{PROOF_eq__prop_choice_free_construction___cofinality}
    L^1_{c}(\rr^d,\rrD)
        \eqdef 
    \operatorname{colimit}_{LCS; \,}(L^1(\cup_{i=1}^n\, K_i),\rrD),\preccurlyeq)
        =
    \operatorname{colimit}_{LCS; \,}(\mathcal{L},\preccurlyeq)
    .
\end{equation}
Since the right-hand inequality holds, independently of and for, every good a.e.\ partition of $\rrd$ then, the topology $\tau_c$ is independent of the chosen good a.e.\ partition of $\rrd$ used to construct it.  Since $\tau$ is defined to be the topology with subbase $\tau_c\cup \tau_{\operatorname{norm}}\cup \tau_{\operatorname{loc}}$ and since the definition of $\tau_{\operatorname{norm}}$ and of $\tau_{\operatorname{loc}}$ do not depend on any good a.e.\ partition of $\rrd$ then, $\tau$ is independent of the good a.e.\ partition of $\rrd$ used to construct it.  
\end{proof}
\begin{proof}[{Proof of Proposition~\ref{prop_fineLP_is_fine}}]
By construction $\tau \supseteq \tau_{\operatorname{loc}}$; we will demonstrate that the inclusion is strict.  We argue by contradiction.  Suppose that $\tau = \tau_{\operatorname{loc}}$ then they're subspace topologies must agree; in particular, 
$
    \tau\cap L^1_c(\rrd,\rrD) 
        = 
    \tau_{\operatorname{loc}} \cap L^1_c(\rrd,\rrD) 
.
$
By~\eqref{eq_expression_opensetinwedge} in the proof of Lemma~\ref{lem_extension}, we have that %
$
    \tau_c
        =
    \tau\cap L^1_c(\rrd,\rrD)
$ and thus
\begin{equation}
    \label{PROOF_eq__prop_fineLP_is_fine}
    \tau_c
        =
    \tau\cap L^1_c(\rrd,\rrD) 
        = 
    \tau_{\operatorname{loc}} \cap L^1_c(\rrd,\rrD) 
.
\end{equation}
Since $\tau_{\operatorname{loc}}$ is a metrizable by $d_{L^1_{\operatorname{loc}}}$ (where $d_{L^1_{\operatorname{loc}}}$ defined in Section~\ref{s_prelim__ss_MeasureTheory}) then so are its subspace topology; in particular the subspace topology $\tau_{\operatorname{loc}} \cap L^1_c(\rrd,\rrD)$ is metrizable.  
However, \citep[Corollary 3]{SaxonNarayanaswamiLFnotMetrizable} states that $\tau_c$ is not metrizable since $(L^1_c(\rrd,\rrD),\tau_c)$ is an LB-space; whence, $\tau_c\neq \tau\cap L^1_c(\rrd,\rrD)$ and therefore we have a contradiction of~\eqref{PROOF_eq__prop_fineLP_is_fine}.  Hence, the inclusion $\tau \supseteq \tau_{\operatorname{loc}}$ must be strict.  
\end{proof}
\begin{proof}[{Proof of Proposition~\ref{prop_fineLP_is_fine_II}}]
    The proof of Proposition~\ref{prop_fineLP_is_fine_II} is the same as the proof of Proposition~\ref{prop_fineLP_is_fine} (mutatis mutandis) except we argue by contradiction on $\tau_{\operatorname{norm}}=\tau \cap L^1(\rrd,\rrD)$ (instead of $\tau_{\operatorname{loc}}=\tau$).  
\end{proof}
\subsection{{Proof of Lemmas Supporting the Derivation of  Theorem~\ref{theorem_MAIN_Qualitative_UAT_Refined}}}
\label{s_Proofs__MAINTHEOREM_I_PROOF}
The following technical lemma will be of usea.  Briefly, the result gives easily verifiable conditions under which one can extend the density of a subset $\mathcal{F}$ in a ``generic subspace'' $X$ of a larger topological space $Y$, to all of $Y$, where the smaller space $X$ is equipped with a stronger topology than $Y$ is.  A fortiori, the density of $\mathcal{F}$ can even be guaranteed in $Y$ for the \textit{smallest topology containing all the open sets in $Y$ and all the open sets in $X$ (for its stronger topology not its subspace topology)}.  

The lemma's relevance comes from the fact that we have built the csL$^1$-topology $\tau$ by iteratively gluing larger spaces with weaker topologies to smaller spaces with stronger topologies.  Therefore, the lemma is rather useful, even if it is simple, since it reduces the problem of establishing the $\operatorname{NN}^{\ReLU + \pool}$'s density in $L^1_{\operatorname{loc}}(\rrd,\rrD)$ for the csL$^1$-topology to establishing its dense in $L^1_c(\rrd,\rrD)$ for the LB-space topology $\tau_c$ (which is built from more well-studied tools in the topological vector space literature of the $50$s.)
\begin{lem}[Extension results for glued spaces]
\label{lem_extension}
	Let $\tau_X$ and $\tau_Y$ be topologies on $X$ and on $Y$, respectively, and let $\tau_{Y|X}$ denote the subspace topology on $X$ induced by restriction of $\tau_Y$.  Denote the smallest topology on $Y$ containing $\tau_X\cup \tau_Y$ by $\tau_X\vee \tau_Y$.  Suppose that:
	\begin{enumerate}
		\item[(i)] $\tau_{Y|X}\subseteq \tau_X$,
		\item[(ii)] $X$ is dense in $(Y,\tau_Y)$.
	\end{enumerate}
	Then the following hold:
	\begin{enumerate}
	    \item[1.] \textbf{Extension of convergence:} If $\{x_n\}_{n=1}^{\infty}$ is a sequence in $X$ converging to some $x\in X$ with respect to $\tau_X$ then, $\{x_n\}_{n=1}^{\infty}$ converges to $x$ with respect to $\tau_X\vee \tau_Y$,
	    \item[2.] \textbf{Extension of density:} If $\mathcal{D}$ is dense in $(X,\tau_X)$ then, $\mathcal{D}$ is dense in $Y$ for $\tau_X\vee \tau_Y$,
	    \item[3.] \textbf{Extension of topological partial order:} If $\tau_X$ is strictly finer than $\tau_Y$ then, $\tau_X\vee \tau_Y$ is strictly finer than $\tau_Y$.
	   \item[4.] \textbf{Description of Open Subsets of $\tau_X\vee \tau_Y$:}
	    Every $U \in \tau_X\vee \tau_Y$ is of the form
	    \begin{equation}
	    \label{eq_expression_opensetinwedge}
	    U 
	    = 
	    \bigcup_{i \in I_X} U_{X:i} \cup \bigcup_{j \in I_Y} U_{Y:j}
	    ,
	\end{equation}
	for some indexing sets $I_X$ and $I_Y$, and some subsets $\{U_{X:i}\}_{i\in I_X}\subseteq \tau_X$ and $\{U_{Y:j}\}_{j\in I_Y}\in \tau_Y$.
	\end{enumerate}
\end{lem}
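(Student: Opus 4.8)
The plan is to work directly with the definition of the topology $\tau_X\vee\tau_Y$ generated by the subbase $\tau_X\cup\tau_Y$, using the fact that finite intersections of subbasic sets form a base. The cornerstone is claim 4 (description of open sets), which I would prove first since the other three claims follow from it quite easily. To establish claim 4, take any $U\in\tau_X\vee\tau_Y$ and write it as a union of basic sets, each of the form $V_1\cap\dots\cap V_m$ with each $V_\ell\in\tau_X\cup\tau_Y$. Group the factors: such an intersection equals $V_X\cap V_Y$ where $V_X\in\tau_X$ is the intersection of the $\tau_X$-factors (using that $\tau_X$ is closed under finite intersection) and $V_Y\in\tau_Y$ is the intersection of the $\tau_Y$-factors. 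Now invoke hypothesis (i): $V_Y\cap X\in\tau_{Y|X}\subseteq\tau_X$, so $V_X\cap V_Y = V_X\cap(V_Y\cap X)\in\tau_X$ — wait, this is only true when the basic set is entirely contained in $X$; in general $V_X\cap V_Y$ need not lie in $X$. The correct move is subtler: I would instead argue that every basic set $V_X\cap V_Y$ is itself already in $\tau_X\cup$ (something in $\tau_Y$), by noting $V_X\cap V_Y\subseteq X$ (since $V_X\subseteq X$), hence $V_X\cap V_Y = V_X\cap(V_Y\cap X)$ with both factors in $\tau_X$, so the basic set lies in $\tau_X$. Thus every basic set is in $\tau_X$ unless it has no $\tau_X$-factor at all, in which case it lies in $\tau_Y$. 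Taking the union over all basic pieces of $U$ and sorting them into those in $\tau_X$ and those in $\tau_Y$ yields exactly~\eqref{eq_expression_opensetinwedge}.

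Given claim 4, claim 2 (extension of density) follows: let $\mathcal{D}$ be dense in $(X,\tau_X)$ and let $U\in\tau_X\vee\tau_Y$ be nonempty; write $U=\bigcup_{i}U_{X:i}\cup\bigcup_j U_{Y:j}$. If some $U_{X:i}$ is nonempty, it is a nonempty $\tau_X$-open set, so it meets $\mathcal{D}$ and hence so does $U$. If all the $U_{X:i}$ are empty, then $U=\bigcup_j U_{Y:j}\in\tau_Y$ is a nonempty $\tau_Y$-open subset of $Y$; by hypothesis (ii), $X$ is $\tau_Y$-dense in $Y$, so $U\cap X$ is a nonempty $\tau_{Y|X}$-open, hence (by (i)) $\tau_X$-open, subset of $X$, which therefore meets $\mathcal{D}$. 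Either way $U\cap\mathcal{D}\neq\emptyset$, so $\mathcal{D}$ is dense for $\tau_X\vee\tau_Y$.

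For claim 1 (extension of convergence): suppose $x_n\to x$ in $(X,\tau_X)$ and let $U\in\tau_X\vee\tau_Y$ contain $x$. By claim 4, $x$ lies in one of the pieces of $U$; if $x\in U_{X:i}$ for some $i$, then $U_{X:i}\in\tau_X$ is a $\tau_X$-neighbourhood of $x$, so $x_n\in U_{X:i}\subseteq U$ eventually. If instead $x\in U_{Y:j}\in\tau_Y$, then (since $x,x_n\in X$) $U_{Y:j}\cap X\in\tau_{Y|X}\subseteq\tau_X$ is a $\tau_X$-neighbourhood of $x$, so again $x_n\in U_{Y:j}\cap X\subseteq U$ eventually; hence $x_n\to x$ for $\tau_X\vee\tau_Y$. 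For claim 3: $\tau_X\vee\tau_Y\supseteq\tau_Y$ is immediate from the definition. If this inclusion were an equality, then every $\tau_X$-open set would be $\tau_Y$-open; restricting to $X$ we would get $\tau_X\subseteq\tau_{Y|X}$, which combined with (i) gives $\tau_X=\tau_{Y|X}$, i.e. $\tau_X$ would coincide with the subspace topology induced by $\tau_Y$, contradicting the hypothesis that $\tau_X$ is strictly finer than $\tau_Y$ (which, read on $X$, means $\tau_X\supsetneq\tau_{Y|X}$). Hence the inclusion is strict.

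The main obstacle is the bookkeeping in claim 4 — specifically getting the grouping of subbasic factors right and correctly handling the degenerate case of a basic set with no $\tau_X$-factor. Once the structural description~\eqref{eq_expression_opensetinwedge} is in hand, claims 1--3 are short formal deductions, each turning on the single trick of intersecting a $\tau_Y$-open set with $X$ and invoking hypothesis (i) to land back in $\tau_X$.
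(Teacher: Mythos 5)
Your proof is correct and follows essentially the same route as the paper's: both hinge on using hypothesis (i) to show that the subbase $\tau_X\cup\tau_Y$ is closed under finite intersections (the mixed case $V_X\cap V_Y$ lands in $\tau_X$ because $V_X\subseteq X$ forces $V_X\cap V_Y=V_X\cap(V_Y\cap X)$), which yields the structural description in claim 4, from which claims 1--3 follow by the same intersect-with-$X$ trick. The only cosmetic differences are that you prove claim 2 by a direct case split where the paper routes through transitivity of density, and you actually spell out the argument for claim 3, which the paper merely asserts as a consequence of claim 4.
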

\begin{proof}
	Since, $\tau_{Y|X}\subseteq \tau_X$, then the intersection of any $U \in \tau_Y$ and $W\in \tau_X$ satisfies $U\cap W \in \tau_X$.  Therefore, the set $\tau_X\cup \tau_{Y} $ is closed under finite intersection.  Hence, every $U \in \tau_X\vee \tau_Y$ must be of the form
	\begin{equation}
	\label{eq_expression_opensetinwedge_IN_PROOF}
	    U 
	     = 
	    \bigcup_{i \in I_X} U_{X:i} \cup \bigcup_{j \in I_Y} U_{Y:j}
	    ,
	\end{equation}
	for some indexing sets $I_X$ and $I_Y$, and some subsets $\{U_{X:i}\}_{i\in I_X}\subseteq \tau_X$ and $\{U_{Y:j}\}_{j\in I_Y}\in \tau_Y$.
	In particular,~\eqref{eq_expression_opensetinwedge} implies 3.
	
	Let us now show 1.  Suppose that $\{x_n\}_{n=1}^{\infty}$ is a sequence in $X$ converging to some $x\in X$ in $\tau_X$.  Then, for every $U\in \tau_X$ containing $x$, there exists some $n_{x}\in \nn_+$ such that $\{x_n\}_{n\geq n_x}^{\infty}\subseteq U$.  Now since any $U\in \tau_X\vee \tau_Y$ is of the form~\eqref{eq_expression_opensetinwedge} then either $x\in U_{X:i}$ for some $U_{Y:i}\in \tau_X$; in which case there must exist some $n_x\in \nn_+$ for which $\{x_n\}_{n\geq n_x}^{\infty}\subseteq U_{X:i}\subseteq U$.  Otherwise, there exists some $U_{Y:j}$ containing $x$; but since $\tau_{Y|X}\subseteq \tau_X$ then, there exists $x\in U_{Y:j}\cap X\in \tau_X$.   Thus, there exists some $n_x\in \nn_+$ such that $\{x_n\}_{n=n_x}^{\infty}\subseteq U_{Y:j}\cap X\subseteq U_{Y:j} \subseteq U$.  In either case, $\{x_n\}_{n=1}^{\infty}$ converges to $x$ in $\tau_X\vee \tau_Y$.

	To see 2, suppose that $\mathcal{D}$ is dense in $X$ with respect to $\tau_X$, and $\tau_{Y|X}\subseteq \tau_X$, then $\mathcal{D}$ is dense in $(X,\tau_{Y|X})$.  Since density is transitive, and $X$ is dense in $(Y,\tau_Y)$ then $\mathcal{D}$ is dense in $(Y,\tau_Y)$.  Assume that $I_X$ and $I_Y$ are non-empty or else there is nothing to show.  Since $\mathcal{D}$ is dense in $(Y,\tau_Y)$ and $(X,\tau_X)$ then, there exist $x_1,x_2 \in \mathcal{D}$ such that
	$$
	x_1 \in \bigcup_{i \in I_X} U_{X:i} \mbox{ and } x_2 \in \bigcup_{j \in I_Y} U_{Y:j}.
	$$
	Therefore, $\mathcal{D} \cap \bigcup_{i \in I_X} U_{X:i} \cup \bigcup_{j \in I_Y} U_{Y:j}$ is non-empty.  Whence, $\mathcal{D}$ is dense in $(Y,\tau_X\vee \tau_Y)$.  
\end{proof}
We may now return to the proof of our main lemmata.  
\begin{proof}[{Proof of Lemma~\ref{lem_approx_lipc}}]
By \citep[Proposition 2]{dieudonne1949dualite}, the topology on $L_n(\rrd,\rrD)$ coincides with the subspace topology inherited from restriction of the LB-space topology $\tau_c$ on $L^1_c(\rrd,\rrD)$ to $L_n(\rrd,\rrD)$.  Therefore, conditions~\eqref{eq_key_approximation_condition} imply that $\{f_n\}_{n=1}^{\infty}$ converges to $f$ in the LB-space topology $\tau_c$ on $L^1_c(\rrd,\rrD)$.  

The result now follows upon applying  Lemma~\ref{lem_extension} twice.  This is because $\tau_c$ is finer than the subspace topology obtained by restricting $\tau_{\operatorname{norm}}$ to $\cup_{n=1}^{\infty}\,L^1(\cup_{i=1}^n\,K_i;\rrD)$ and $\cup_{n=1}^{\infty}\,L^1(\cup_{i=1}^n\,K_i;\rrD)$ is dense in $L^1(\rrd,\rrD)$ with respect to the topology $\tau_{\operatorname{norm}}$.  Similarly, the topology $\tau_{\operatorname{norm}}$ on $L^1(\rrd,\rrD)$ is finer than the subspace topology $\tau_{\operatorname{loc}}$ restricted to the subset $L^1(\rrd,\rrD)$ of $L^1_{\operatorname{loc}}(\rrd,\rrD)$.  
\end{proof}
\begin{proof}[{Proof of Lemma~\ref{lem_density_lipc}}]
Let $f\in L^1(\rrd,\rrD)$ then $f=\sum_{i=1}^D \,f^{(i)}e_i$ where $\{e_i\}_{i=1}^D$ is the standard orthonormal basis of $\rrd$ and $f^{(1)},\dots,f^{(D)} \in L^1(\rrd,\rr)$ (see \citep[Section 2.3]{RyanTEnsorBan}).  Since the set of smooth compactly supported ``bump'' functions $C^{\infty}_c(\rrd)$ is dense in $L^1(\rrd,\rr)$ then, for each $i=1,\dots,D$ and every $\epsilon>0$ there exist $f^{(i:\epsilon)}\in C_c(\rrd)$ each satisfying $\|f^{(i)}-f^{(i:\epsilon)}\|_{L^1(\rrd,\rr)}<D^{-1}\epsilon$.  Set $\tilde{f}^{\epsilon}\eqdef \sum_{i=1}^D\,\tilde{f}^{(i:\epsilon)} e_i$ and observe that 
\[
\|f-\sum_{i=1}^D \, f^{(i:\epsilon)} e_i \|_{L^1(\rrd,\rrD)} \leq D \max_{i=1,\dots,D}\,\|f^{(i)}-f^{(i:\epsilon)}\|_{L^1(\rrd,\rr)} < \epsilon;
\]
whence, the set $C_c^{\infty}(\rrd,\rrD)\eqdef \{f:\, (\exists f_1,\dots,f_D\in C_c^{\infty}(\rrd))\, f = \sum_{i=1}^D\,f_ie_i\}$ is dense in $L^1(\rrd,\rrD)$ for the norm topology.  Consequentially, $\operatorname{Lip}(\rrd,\rrD)$ is dense $L^1(\rrd,\rrD)$ for the norm topology since $C_c^{\infty}(\rrd,\rrD)\subset \operatorname{Lip}(\rrd,\rrD)$.  

By Proposition~\ref{prop_choice_free_construction}, we may without loss of generality assume that $L_c^1(\rrd,\rrD)$ is defined using the cubic-annuli $\{K_n\}_{n=1}^{\infty}$ of Example~\ref{ex_partition_into_cubicanuli}.  Therefore, for any $f\in L_c^1(\rrd,\rrD)$, there must exist some $n_f\in \nn_+$ with $\operatorname{ess-supp}(f)\subseteq [-n_f,n_f]^d$.  Set 
$
\delta \eqdef \sqrt[d]{
2^{-d-1}
\left(
    \max_{x\in [-n_f,n_f]^d}\, \|\tilde{f}^{\epsilon}(x)\|
\right)^{-1}
\epsilon
+ n_f^d
}-n_f
$ and define the piece-wise affine map
\begin{equation}
\label{PROOF_of_lemma_self_truncating__maskingStep_1}
    f^{\operatorname{mask}}
        \eqdef
    \begin{cases}
    1 & : \|x\|_{\infty}\leq n_f\\
    0 & : \|x\|_{\infty}\geq n_f +\delta\\
    -
    \frac{
        \|x\|_{\infty}-n_f
    }{\delta} + 
    1
    & :  n_f < \|x\|_{\infty} < n_f+ \delta.
    \end{cases}
.
\end{equation}

Since $f\in L^1(\rrd,\rrD)$ then, for every $\epsilon>0$, the density of $\operatorname{Lip}(\rrd,\rrD)$ in $L^1(\rrd,\rrD)$ there exists some $\tilde{f}^{\epsilon}\in \operatorname{Lip}(\rrd,\rrD)$ for which $\|f-\tilde{f}^{\epsilon}\|_{L^1(\rrd,\rrD)}<2^{-1}\epsilon$.  Define $f^{\epsilon}\eqdef f^{\operatorname{mask}}\cdot \tilde{f}^{\epsilon}$ and note that $f^{\epsilon}\in \operatorname{Lip}_c(\rrd,\rrD)$ (since $f^{\epsilon}$ is supported in $[-n_f-1,n_f+1]^d$), $f^{\epsilon}(x)=\tilde{f}^{\epsilon}(x)$ for every $x\in [-n_f,n_f]^d$, and therefore the following estimate holds
\[
\begin{aligned}
    \|f-f^{\epsilon}\|_{L^1(\rrd,\rrD)} 
= & 
    \|(f-f^{\epsilon})I_{[-n_f,n_f]^d}\|_{L^1(\rrd,\rrD)} 
    \\
    &+
    \|(f-f^{\epsilon})I_{[-n_f-\delta,n_f+\delta]^d-[n_f,n_f]^d}\|_{L^1(\rrd,\rrD)} 
    \\
    & +
    \|(f-f^{\epsilon})I_{\rrd-[-n_f-\delta,n_f+\delta]^d}\|_{L^1(\rrd,\rrD)} 
\\
\leq &
    2^{-1}\epsilon
    +
    \max_{x\in [-n_f,n_f]^d}\, \|\tilde{f}^{\epsilon}(x)\|
    \mu\left(
    [-n_f-\delta,n_f+\delta]^d - [-n_f,n_f]^d
    \right)
    + 0
    \\
\leq &
    \epsilon.
\end{aligned}
\]
For every $n\in \nn_+$, we may choose a sequence $\{f^{1/n}\}_{n=1}^{\infty}$ in $\operatorname{Lip}_c(\rrd,\rrD)$ satisfying~\eqref{eq_key_approximation_condition}.  Thus, by Lemma~\ref{lem_approx_lipc}, $\{f^{1/n}\}_{n=1}^{\infty}$ converges to $f$ in $\tau$.  Therefore, $\operatorname{Lip}_c(\rrd,\rrD)$ is dense in $L^1_{\operatorname{loc}}(\rrd,\rrD)$ for the csL$^1$-topology $\tau.$
\end{proof}
\begin{proof}[{Proof of Lemma~\ref{lem_sufficient_univerality}}]
    Direct consequence of Lemma~\ref{lem_approx_lipc}, Lemma~\ref{lem_density_lipc}, and the transitivity of density.  
\end{proof}
\begin{proof}[{Proof of Lemma~\ref{lem_Lipschitz}}]
By \citep[Theorem 4.1]{brue2021linear}, there exists a Lipschitz map $F:\rr^d\rightarrow \rr^D$ with $\operatorname{Lip}(F)\leq \tilde{c} \log(\lambda) \operatorname{Lip}(f)$ and where $\tilde{c}>0$ is an absolute constant and $\lambda$ is the doubling constant of $X$ (note that such a constant exists by \citep[Chapter 10]{heinonen2001lectures}).  By \citep[Proposition 1.7 (i)]{brue2021linear} we have
\begin{equation}
\label{PROOF_lem_Lipschitz}
    \operatorname{Lip}(F) 
        \leq 
    \tilde{c} \log_2(\operatorname{cap}(X)) \operatorname{Lip}(f)
.
\end{equation}
By Jung's Theorem, there exists an $\bar{x}\in \rr^d$ such that $X\subseteq B_2\left(\bar{x}, \frac{\operatorname{diam}(X) \sqrt{d}}{\sqrt{2d+2}}\right)$.   Since $\|\cdot\|\leq \|\cdot\|_{\infty}$ (i.e. the componentwise max-norm on $\rrd$) then, $X\subseteq 
\left[
    \bar{x} 
        -
    \frac{d\operatorname{diam}(X)}{\sqrt{2d +2}}
,
    \bar{x}
        +
    \frac{d\operatorname{diam}(X)}{\sqrt{2d +2}}
\right]^d
.
$
Let $\bar{1}\eqdef (1,\dots,1)\in \rr^d$ and define the affine map $A:\rr^d\ni x \mapsto 
\frac{\sqrt{d+1}}{\operatorname{diam}(X) d\sqrt{2}}
(x- \bar{x} + \frac{d\operatorname{diam}(X)}{\sqrt{2d +2}}\bar{1})$.  Note that $A(X)\subseteq [0,1]^d$, that $A$ is a linear isomorphism of $\rr^d$ onto itself, and $
    \operatorname{Lip}(A)
        = 
    \operatorname{\sqrt{d+1}}{\operatorname{diam}(X)d \sqrt{2}}
    .
$
Define $\bar{f} \eqdef F\circ A^{-1}$ and note that, for all $x\in X$, we have
\begin{equation}
\label{PROOF_lem_Lipschitz__A_little_relabeling}
    f(x) = (F\circ A^{-1})\circ A(x) \eqdef \bar{f} \circ A(x)
.
\end{equation}
In particular, $\operatorname{Lip}(\bar{f})$ has Lipschitz constant bounded by
\begin{equation}
\label{PROOF_lem_Lipschitz__COMPLETED}
    \operatorname{Lip}(\bar{f})
        \leq 
    \tilde{c} \log_2(\operatorname{cap}(X))  \operatorname{Lip}(f) \operatorname{diam}(X) d \sqrt{2} (d+1)^{-1/2}
.
\end{equation}

Since $\bar{f}$ is defined on all of $[0,1]^d$, then for each $i=1,\dots,d$, we may therefore apply \citep[Theorem 1.1]{Shen2022_OptimalReLU} to conclude that there exists a $\tilde{f}^{(i)}\in \operatorname{NN}^{\ReLU}$ satisfying
\begin{equation}
\label{PROOF_lem_Lipschitz__componentwise_estimates}
    \max_{x\in [0,1]^d}\,
    \left\|
        \bar{f}(x)_i
            -
        \tilde{f}^{(i)}(x)
    \right\|
        \leq 
    \operatorname{Lip}(F)
    131 \sqrt{d} 
    \left(
        N^2 L^2 \log_3(N+1)
    \right)^{-1/d}
,
\end{equation}
where $\bar{f}(x)_i$ denotes the $i^{th}$ component for the vector $\bar{f}(x)$.   Furthermore, each $\tilde{f}^{(i)}$ has
\begin{equation}
\label{PROOF_lem_Lipschitz__complexityestimatesf}
    \mbox{width:} 3^{d+3} \max\{d\lfloor N^{1/d}\rfloor,N+2\}
     \mbox{ and }
    \mbox{depth: } 11L + 2d +18
    .
\end{equation}
Let $\{e_i\}_{i=1}^d$ denote the standard orthonormal basis of $\rrd$.  
Since $\ReLU$ has the $2$-identity property (as defined in \citep[Definition 4]{FlorianHighDimensional2021}) then, applying \citep[Proposition 5]{Florian2021efficient}, we find that there exists a $\tilde{f}\in \operatorname{NN}^{\ReLU}$ satisfying
\begin{equation}
\label{PROOF_lem_Lipschitz__essembly}
    \tilde{f} = \sum_{i=1}^D\,\tilde{f}^{(i)}
    ,
\end{equation}
and which has 
\begin{equation}
\label{PROOF_lem_Lipschitz__complexityestimatesf___assembled_version}
    \mbox{width: } 
        d(D+1)
            + 
        3^{d+3} \max\{d\lfloor N^{1/d}\rfloor,N+2\}
     \mbox{ and depth: }
    D(1+
    11L + 2d +18
    )
    .
\end{equation}
Incorporating~\eqref{PROOF_lem_Lipschitz__essembly} into the ``component-wise estimates'' of~\eqref{PROOF_lem_Lipschitz__componentwise_estimates} yields
\begin{equation}
\label{PROOF_lem_Lipschitz__prefinal_estimate}
\begin{aligned}
    \max_{x\in [0,1]^d}\,
    \left\|
        \bar{f}(x)
            -
        \tilde{f}(x)
    \right\|
        \leq &
    \left\|
        \sum_{i=1}^D \,f(x)_ie_i
            -
        \sum_{i=1}^D \,\tilde{f}^{(i)}(x)e_i
    \right\|
    \\
        \leq &
    \max_{x\in [0,1]^d}\,
    \sum_{i=1}^D
    \left\|
        f(x)_i
            -
        \tilde{f}^{(i)}(x)
    \right\|\|e_i\|
    \\
    \leq & 
    \max_{x\in [0,1]^d}\,
    D \max_{i=1,\dots,d}\,
    \left\|
        f(x)_i
            -
        \tilde{f}^{(i)}(x)
    \right\|
    \\ 
    \leq & 
    \operatorname{Lip}(\bar{f})
    131 D^{3/2}
    \left(
        N^2 L^2 \log_3(N+1)
    \right)^{-1/d}
    .
\end{aligned}
\end{equation}
Since the pre-composition of any element of $\operatorname{NN}^{\ReLU}$ be a linear isomorphism on $\rrd$ is again an element thereof with the same depth and with then defined $\hat{f}\eqdef \tilde{f}\circ A$ and note that~\eqref{PROOF_lem_Lipschitz},~\eqref{PROOF_lem_Lipschitz__A_little_relabeling} and~\eqref{PROOF_lem_Lipschitz__prefinal_estimate} imply our final estimate
\begin{equation}
\label{PROOF_lem_Lipschitz__prefinal_estimate_2}
\begin{aligned}
    \max_{x\in X}\,
    \left\|
        f(x)
            -
        \tilde{f}(x)
    \right\|
    \leq &
    \max_{x\in 
            \left[
            \bar{x} 
                -
            \frac{d\operatorname{diam}(X)}{\sqrt{2d +2}}
        ,
            \bar{x}
                +
            \frac{d\operatorname{diam}(X)}{\sqrt{2d +2}}
        \right]^d
    }\,
    \left\|
        F(x)
            -
        \tilde{f}(x)
    \right\|
    \\
    \leq &
    \max_{x\in [0,1]^d}\,
    \left\|
        \bar{f}(x)
            -
        \tilde{f}(x)
    \right\|
    \\
    \leq &
    \tilde{c} \log_2(\operatorname{cap}(X))  \operatorname{Lip}(f) \operatorname{diam}(X) d \sqrt{2} (d+1)^{-1/2}
    131 \,D^{3/2}
    \left(
        N^2 L^2 \log_3(N+1)
    \right)^{-1/d}
    .
\end{aligned}
\end{equation}
Relabeling the absolute constant yields the conclusion.   
\end{proof}
\begin{proof}[{Proof of Lemma~\ref{lemma_adjustment}}] 
Set $
\delta \eqdef \sqrt[d]{
2^{-d}\epsilon  + n^d
}-n
$.  
\textit{The proof of the result is undertaken in three steps.}
\hfill\\
\textbf{Step 1 - Implementing a piecewise linear mask matching $f$'s essential support:} 
By \citep[Lemma B.1]{kidger2020universal} there exists a $\tilde{f}^{\operatorname{mask}:\delta}\in \operatorname{NN}^{\ReLU}$ with width $2$ and depth $2$ implementing the following real-valued piecewise linear function defined on $\rr$
\begin{equation}
    \label{PROOF_lemma_adjustment__maskcomplexity___univariate}
    \tilde{f}^{\operatorname{mask}:\delta}(x)
        \eqdef
    \begin{cases}
    1 & : |x|\leq n\\
    0 & : |x|\geq n +\delta\\
    \frac{-|x|}{\delta} + (1+\frac{n}{\delta}) & :  n < |x| < n+ \delta.
    \end{cases}
\end{equation}
For each $i=1,\dots,d$ let $P_i:\rr^d \ni x \mapsto x_i\in \rr$ be canonical (linear) map projecting vectors in $\rrd$ onto their $i^{th}$ coordinate.  Using the projections $P_1,\dots,P_d$, we may extend $\tilde{f}^{\operatorname{mask}:\delta}$ to the map $
\tilde{f}^{\operatorname{mask}:\delta,i}
\eqdef \tilde{f}^{\operatorname{mask}:\delta}\circ P_i$.  Since the pre-composition of feedforward networks by affine maps (such as the $P_i$) is again a feedforward network with the same depth, then each $\tilde{f}^{\operatorname{mask}:\delta,i} \in \operatorname{NN}^{\ReLU}$ and has width $d=\max\{d,2\}$ and depth $2$. 
By~\eqref{PROOF_lemma_adjustment__maskcomplexity___univariate} and \citep[Proposition 5]{FlorianHighDimensional2021}, there exists a $\hat{f}^{\operatorname{mask}:\delta}\in \operatorname{NN}^{\ReLU}$ having 
\begin{equation}
    \label{PROOF_lemma_adjustment__maskcomplexity}
    \mbox{width: }
    d(d-1) + 2
        \mbox{ and }
    \mbox{depth: }
    3d
,
\end{equation}
and implementing the following piecewise linear map from $\rr^d$ to $\rr^D$
\begin{equation}
\label{PROOF_of_lemma_self_truncating__maskingStep}
    \tilde{f}^{\operatorname{mask}:\delta}
        \eqdef
    (
        \tilde{f}^{\operatorname{mask}:\delta,1}
    ,
    \dots,
    \tilde{f}^{\operatorname{mask}:\delta,d}
    )
    .
\end{equation}
Let $\bar{1}\eqdef(1,\dots,1)\in \rr^D$ and $\bar{0}\eqdef (0,\dots,0)\in \rr^D$.  The map $
\operatorname{NN}^{\ReLU + \pool}\ni 
\hat{f}^{\operatorname{mask}:\delta}\eqdef 
\bar{1}
\cdot
\underbrace{\pool\circ \dots \circ \pool}_{\log_2(d)\mbox{ - times }}
\circ 
\tilde{f}^{\operatorname{mask}:\delta}$ takes values in $[0,1]^D$ and satisfies
$f^{\operatorname{mask}:\delta}(x)=\bar{1}$ whenever $\|x\|_{\infty}\leq n$, $f^{\operatorname{mask}:\delta}(x)=\bar{0}$ whenever $ \|x\|_{\infty}\geq n+\delta$.  
By construction, $\hat{f}^{\operatorname{mask}:\delta}$ 
has depth $3d+1$ and width $\max\{d(d-1)+2,D\}$.
\hfill\\
\textbf{Step 2 - Assembling the ``mask network'' with the original network:}
\hfill\\
Since the $\ReLU$ activation function satisfies the $2$-identity property (as defined in \citep[Definition 4]{FlorianHighDimensional2021}) then, we may apply \citep[Proposition 5]{FlorianHighDimensional2021} to conclude that there exists a $\tilde{f}\in \operatorname{NN}^{\ReLU}$ satisfying 
$
\tilde{f} =
    \left(
        \hat{f}^{\operatorname{mask}:\delta}
    ,
        \hat{f}
    \right)
$ and having
\begin{equation}
    \label{PROOF_of_lemma_self_truncating__last_complexity_estimate}
    \mbox{width: }
        \max\{
            d(d-1) +2 
        ,
            D
        \}
            + 
        w_{\hat{f}}
    \mbox{ and }
    \mbox{depth: } 
        2 + 3d + d_{\hat{f}}
    .
\end{equation}
Define 
$\hat{f}^{\operatorname{pool}}\eqdef
    \pool\circ \tilde{f}$.
Note that (i) and (ii) hold by construction.  
It therefore remains to verify (iii).  
\hfill\\
\textbf{Step 3 - Approximating the target function while simultaneously controlling the network's support:}
\hfill\\
We have the following estimate
\[
\begin{aligned}
        \left\|
            \hat{f}^{\operatorname{pool}}
                -
            f
        \right\|_{L^1(\rrd,\rrD)}
    \leq &
        \left\|
        \left(
        \hat{f}^{\operatorname{pool}}
                -
            f
        \right)I_{[-n,n]^d}
        \right\|_{L^1(\rrd,\rrD)}
    \\
    & + \left\|
        \left(
        \hat{f}^{\operatorname{pool}}
                -
            f
        \right)I_{[-n-\delta,n+\delta]^d - (-n,n)^d}
        \right\|_{L^1(\rrd,\rrD)}
    \\
    & + \left\|
        \left(
        \hat{f}^{\operatorname{pool}}
                -
            f
        \right)I_{\rr^d - [-n-\delta,n+\delta]^d}
        \right\|_{L^1}
    \\
    = &
        \left\|
            f
                -
            f
        \right\|_{L^1_{\mu_{n}}}
    + \left\|
        \left(
        \hat{f}^{\operatorname{pool}}
                -
            f
        \right)I_{[-n-\delta,n+\delta]^d - (-n,n)^d}
        \right\|_{L^1(\rrd,\rrD)}
     + 0
     \\
    \leq  &
        0
    + \mu\left([-n-\delta,n+\delta]^d - (-n,n)^d\right)
    \\
    = &
        2^d \left(
            (n + \delta)^d 
            - 
            n^d
        \right)
    \\
    = & \epsilon
    .
\end{aligned}
\]
This completes the proof.  
\end{proof}
\subsection{{Proof of Lemmas Supporting the Derivation of Propositions~\ref{prop_polynomials_not_dense} and~\ref{prop_analyticnetworks_not_dense}}}
\label{s_Proofs__MAINTHEOREM_II_PROOF}
\begin{proof}[{Proof of Proposition~\ref{prop_identification}}]
By Proposition~\ref{prop_choice_free_construction}, we can without loss of generality, assume that $\{K_n\}_{n=1}^{\infty}$ is the cubic-annuli of Example~\ref{ex_partition_into_cubicanuli}.  
By Lemma~\ref{lem_extension}, $\tau_c= \tau\cap L^1_c(\rrd,\rrD)$.  
By \citep[Proposition 4]{dieudonne1949dualite}, for any $n\in \nn_+$, any $f\in L^1_n(\rr^d,\rr^D)$, and any sequence $\{f_k\}_{k\in \nn_+}$ converges to $f$ if and only if there is some $N\in \nn_+$ with $N\geq n$ such that all but a finite number of the members of $\{f_k\}_{k\in \nn_+}$ converge to $f$ in $L^1_N(\rr^d,\rr^D)$ with the subspace topology induced by restriction of the topology on $\tau_c$ thereto.  
Applying \citep[Proposition 2]{dieudonne1949dualite} the subspace topology of $\tau_c$ restricted to $L^1_N(\rrd,\rrD)$ coincides with the Banach space topology thereon
\textit{(i.e. defined by restricting the norm $\|\cdot\|_{L^1(\rrd,\rrD)}$ on $L^1(\rrd,\rrD)$ to the linear subspace $L^1_N(\rrd,\rrD)$)}.  Thus, $\{f_k\}_{k\in \nn_+}$ converges to $\tau_c$ only if all but a finite number of elements of $\{f_k\}_{k\in \nn_+}$ lie in $L_N^1(\rr^d,\rr^D)$ and $\lim\limits_{k\uparrow \infty}\, \|f_k-f\|=0$.   
\end{proof}
\begin{proof}[{Proof of Lemma~\ref{lem_non_stone_Weierstrass}}]
By Proposition~\ref{prop_choice_free_construction} we without loss of generality assume that $\{K_n\}_{n=1}^{\infty}$ is the cubic-annuli partition of Example~\ref{ex_partition_into_cubicanuli}.  
Set $\bar{1}\eqdef (1,\dots,1) \in \rr^D$ and consider the simple function
\begin{equation}
    \label{PROOF_lem_non_stone_Weierstrass__definition_of_bad_simple_function}
    f \eqdef I_{[-1,1]^d}(\cdot) \cdot \bar{1}.
\end{equation}
NB, by the definition of the Lebesgue integral $f\in L^1_{\operatorname{loc}}(\rrd,\rrD)$.  A fortiori, $f\in L^1_n(\rrd,\rr^D)$ for every $n\in \nn_+$.  

Let $\{f_n\}_{n=1}^{\infty}$ be a sequence of analytic functions mapping $\rrd$ to $\rrD$.  We argue by contradiction.  Suppose that $f_n$ converges to $f$ in the csL$^1$-topology $\tau$ then, by Lemma~\ref{lem_non_stone_Weierstrass} there must exist $N_1,N_2 \in \nn_+$ for which: for every $n\geq N_1$ the following hold
\begin{equation}
    \label{PROOF_lem_non_stone_Weierstrass__contradiction_assumption}    
    f_n
        \in 
    L^1_{N_2}(\rrd,\rrD)
        \mbox{ and }
    \|f - f_n\|_{L^1(\rrd,\rrD)} < 1
.
\end{equation}
Note that~\eqref{PROOF_lem_non_stone_Weierstrass__contradiction_assumption} implies that each $f_n$ is non-zero; whenever $n\geq N_2$.  Since each $f_n$ is analytic, since $\|\cdot\|_2:\rr^D\rightarrow \rr$ is a polynomial function, and since the composition of analytic functions is again analytic then, the map $F_n:\rr^d \ni x \mapsto \|f_n(x)\|_2\in \rr$ is an analytic function.  By~\eqref{PROOF_lem_non_stone_Weierstrass__contradiction_assumption}, if $n\geq N_1$ then $f_n\in L^1_{N_2}(\rrd,\rrD)$ and therefore $F_n$ is identically $0$ on the non-empty open subset $\rr^d-[N_2,N_2]^2$.  Thus, by \citep[page 1]{GriffithsHarris_1994Reprint_PrinciplesOfAlgGeo} $F_n$ must be identically on all of $\rrd$ since $F_n$ coincides with the $0$ function \textit{($\rr^d\ni x \mapsto \bar{0}=(0,\dots,0)\in \rrD^D$, which is itself an analytic function)} on a non-empty open subset of $\rrd$.  Whence, the definition of $f$ in~\eqref{PROOF_lem_non_stone_Weierstrass__definition_of_bad_simple_function} implies that: for every $n\geq N_1$ the following holds 
\begin{equation}
    \label{PROOF_lem_non_stone_Weierstrass__contradiction_computation}    
    \begin{aligned}
            \|f - f_n\|_{L^1(\rrd,\rrD)}
        \leq 
    \|(f - f_n)I_{[-1,1]^d}\|_{L^1(\rrd,\rrD)}
        =
    \|(\bar{1} - \bar{0})\cdot I_{[-1,1]^d}\|_{L^1(\rrd,\rrD)}
        =
    1 \mu([-1,1]^d) 
        =
    2^d
    .
    \end{aligned}
\end{equation}
Since $d>0$ then~\eqref{PROOF_lem_non_stone_Weierstrass__contradiction_computation} and~\eqref{PROOF_lem_non_stone_Weierstrass__contradiction_assumption} cannot holds simultaneously.  We have thus arrived at a contradiction; whence, $\{f_n\}_{n=1}^{\infty}$ cannot converge to $f$.  

The second claim follows similarly.  Let $f:\rrd\rightarrow \rrD$ be Lipschitz and not identically equal to $0$.  Then, there exists some $x_0\in \rrd$ for which $\|f(x)\|>0$.  Set $\epsilon_m\eqdef (2n)^{-1}\|f(x)\|$.  Arguing as before, if $\hat{f}\in \mathcal{F}$ and satisfies Theorem~\ref{theorem_MAIN_QUANTITATIVE_UAT_Refined} (iii) then, its analyticity implies that $\hat{f}$ is identically $0$ since it is zero on the non-empty open set $\rrd-[
        -\sqrt[d]{
        2^{-d}\epsilon_n  + n_f^d
        }
            ,
         \sqrt[d]{
        2^{-d}\epsilon_n  + n_f^d
        }
    ]$.  Whence
\[
\|f(x)-\hat{f}(x)\| > \epsilon_1
;
\]
thus, we again have a contradiction.  Therefore, if $f$ is Lipschitz, essentially compactly supported, and not identically equal to $0$ then no analytic function (and in particular those in $\mathcal{F}$) from $\rrd$ to $\rrD$ can simultaneously satisfy (i) and (iii) for any sequence $\{\epsilon_n\}_{n=1}^{\infty}$ in $(0,\infty)$ converging to $0$.  
\end{proof}
\begin{proof}[{Proof of Proposition~\ref{prop_polynomials_not_dense}}]
    Since every polynomial is analytic then the conditions of Lemma~\ref{lem_non_stone_Weierstrass} are met; whence, the result follows.  
\end{proof}

\begin{proof}[{Proof of Theorem~\ref{theorem_MAIN_QUANTITATIVE_UAT_Refined}}]
The first statement follows from Lemmata~\ref{lem_Lipschitz} and~\ref{lemma_adjustment}.
Now, the last statement, namely the non-existence of an $\hat{f}\in \operatorname{NN}^{\omega + \pool} \cup\rr[x_1,\dots,x_d]$ to simultaneously satisfy (i), (ii), and (iii), follows from the same argument (mutatis mutandis) as in the proof of Lemma~\ref{lem_non_stone_Weierstrass} but with the map $f$ of~\eqref{PROOF_lem_non_stone_Weierstrass__contradiction_assumption} replaced by any $f\in L^1_c(\rrd,\rrD)-\{\rr^d \ni x\mapsto \bar{0}\in \rr^D\}$.  
\end{proof}

\begin{proof}[{Proof of Theorem~\ref{theorem_Main_Properties_of_Tau}}]
    By Proposition~\ref{prop_choice_free_construction}, Theorem~\ref{theorem_Main_Properties_of_Tau} (i) follows from Proposition~\ref{prop_identification} and   
    Theorem~\ref{theorem_Main_Properties_of_Tau} (ii) follows from Lemma~\ref{lem_sufficient_univerality}.  Thus, we only need to show Theorem~\ref{theorem_Main_Properties_of_Tau} (iii).  

By Lemma~\ref{lem_extension} (4) (applied twice), $U\in \tau$ if and only if
\[
	    U 
	    = 
	    \bigcup_{i \in I_1} 
	    U_{1:i} 
	        \cup 
	    \bigcup_{j \in I_2} 
	    U_{2:j}
	        \cup
	    \bigcup_{k \in I_3} 
	        U_{3:k}
	    ,
\]
for some indexing sets $I_X$ and $I_Y$, and some open subsets $\{U_{1:i}\}_{i\in I_{1}}$ of $L^1_{\operatorname{loc}}(\rr^d,\rr^D)$ for the topology generated by the metric defined in Example~\ref{ex_topology_induced_metric},  open subsets $\{U_{2:j}\}_{j\in I_2}$ of $L^1(\rr^d,\rr^D)$ for the norm topology defined in Example~\ref{ex_topology_induced_norm}, and  open subsets $\{U_{3:k}\}_{k\in I_3}$ of $L^1_c(\rr^d,\rr^D)$ for the LB-topology defined in Section~\ref{s_Main} in Steps $1$-$3$.  

Since $L^1_{\operatorname{loc}}(\rr^d,\rr^D)$ with the metric of Example~\ref{ex_topology_induced_metric} is a Fr\'{e}chet space and since $L^1(\rr^d,\rr^D)$ with the norm of Example~\ref{ex_topology_induced_norm} is a Banach space then, both $L^1_{\operatorname{loc}}(\rr^d,\rr^D)$ and $L^1(\rr^d,\rr^D)$ are connected in their respective Fr\'{e}chet and Banach topologies (as defined respectively in Example~\ref{ex_topology_induced_metric} and in Example~\ref{ex_topology_induced_norm}).  Thus, no singleton is open; whence, for every $f\in \operatorname{NN}^{\sigma_{\operatorname{PW-Lin}}+\operatorname{Pool}}$ the set $\{f\}$ cannot belong to $\{U_{1:i}\}_{i\in I_1}\cup \{U_{2:j}\}_{j\in I_2}$.  
It therefore, remains only to show that the singleton set $\{f\}$ cannot belong to $\{U_{3:k}\}_{k \in I_3}$ to conclude that $U\neq \{f\}$ for any $f\in \operatorname{NN}^{\sigma_{\operatorname{PW-Lin}}+\operatorname{Pool}}$ and every $U\in \tau$.

\textbf{COMMENT:} 
\textit{Fix any good partition $\{K_n\}_{n=1}^{\infty}$ of $\rr^d$.  
Let us briefly recall the definition of $L^1_n(\rr^d,\rr^D)$ given in Section~\ref{s_Main} Step $1$ used to the LB-space construct $L^1_c(\rr^d,\rr^D)$.   $L^1_n(\rrd,\rrD)$ is the Banach space consisting of all $f\in L^1(\rrd,\rrD)$ with $\operatorname{ess-supp}(f)\subseteq \cup_{i=1}^n K_i$ normed by $\|f\|\eqdef \int_{x\in \rr^d}\, \|f(x)\|\,dx$.  We now return to our proof. }

Fix some $f\in \operatorname{NN}^{\sigma_{\operatorname{PW-Lin}}+\operatorname{Pool}}$.  
Since $L^1_c(\rr^d,\rr^D)$ is by construction an LB-space and since the singleton set $\{f\}$ is convex then, \citep[Proposition 3.40]{OsborneLCSs2014} states that $\{f\}$ is open if and only if $\{f\}\cap L^1_n(\rr^d,\rr^D)$ is open for every $n\in \nn_+$.  However, for every $n\in \nn_+$, $L^1_n(\rr^d,\rr^D)$ is a Banach space and therefore it is connected; whence, if $\{f\}\cap L^1_n(\rr^d,\rr^D)$ is non-empty then it cannot be an open subset of any $L^1_n(\rr^d,\rr^D)$ for otherwise $L^1_n(\rr^d,\rr^D)$ would be disconnected.  Thus, for every $f\in \operatorname{NN}^{\sigma_{\operatorname{PW-Lin}}+\operatorname{Pool}}$, the singleton set $\{f\}$ does not belong to $\{U_{3:k}\}_{k\in I_3}$.  Consequentially, for every $f\in \operatorname{NN}^{\sigma_{\operatorname{PW-Lin}}+\operatorname{Pool}}$ the singleton set $\{f\}$ is not in the csL$^1$-topology $\tau$.  
\end{proof}

For a moment, let us focus our attention only on the statement of Theorems~\ref{theorem_MAIN_Qualitative_UAT_Refined} and Proposition~\ref{prop_analyticnetworks_not_dense} interpreted as implying that there is a topology $\tau$ for which ReLU networks with pooling are dense bur analytic networks with pooling are not.  Then, in analogy with the results such as \cite{YAROTSKY_PWLin_2017_approxrates} wherein the authors shows that feedforward networks achieve \textit{optimal} approximation rates of a function in $L^1$, it is natural to ask if:
\[
\mbox{\textit{Is $\tau$ the smallest topology on $L^1_{\operatorname{loc}}(\rr^d,\rr^D)$ for which $\operatorname{NN}^{\operatorname{ReLU} + \operatorname{Pool}}$ is dense but $\operatorname{NN}^{\omega + \operatorname{Pool}}$ is not?}}
\]
A very different qualitative phenomenon manifests in our topological study; namely, there is no optimal topology on $L^1_{\operatorname{loc}}(\rr^d,\rr^D)$ exhibiting an ``optimal'' comparable separating phenomenon exhibited by the \textit{cs$L^1$-topology} $\tau$.   
\begin{prop}[Non-Existence Smallest Topology in Which $\operatorname{NN}^{\ReLU + \pool}$ is Universal but $\operatorname{NN}^{\omega+\pool}$]
\label{prop_nonexistenceoptimality}
\hfill\\
There does not exist a topology $\tau^{\star}$ on $L^1_{\operatorname{loc}}(\rr^d,\rr^D)$ such that:
\begin{enumerate}
    \item[(i)] \textbf{Separation:} $\operatorname{NN}^{\operatorname{ReLU} + \operatorname{Pool}}$ is dense in $\tau^{\star}$ and $\operatorname{NN}^{\omega + \operatorname{Pool}}$ is not dense in $\tau^{\star}$,
    \item[(ii)] \textbf{Optimality:} If $\tilde{\tau}$ is a topology on $L^1_{\operatorname{loc}}(\rr^d,\rr^D)$ satisfying (i) then, $\tau^{\star}\subseteq \tilde{\tau}$.
\end{enumerate}
\end{prop}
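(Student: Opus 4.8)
The plan is to argue by contradiction: assuming that a smallest topology $\tau^{\star}$ with property (i) exists, I will produce two topologies $\sigma_1,\sigma_2$, each satisfying (i), whose intersection is exactly $\tau_{\operatorname{loc}}$. Optimality then forces $\tau^{\star}\subseteq \sigma_1\cap\sigma_2=\tau_{\operatorname{loc}}$, and since $\operatorname{NN}^{\omega+\pool}$ is dense for $\tau_{\operatorname{loc}}$ it remains dense for the coarser topology $\tau^{\star}$, contradicting the separation requirement in (i). Two elementary facts underpin the argument. First, \emph{density is inherited by coarser topologies}: if a set is dense for a topology and $\tau'$ is a subtopology, it is dense for $\tau'$ (a nonempty $\tau'$-open set is also open in the finer topology). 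In particular, since $\operatorname{NN}^{\ReLU+\pool}$ is dense for $\tau$ by Theorem~\ref{theorem_MAIN_Qualitative_UAT_Refined}, it is dense for every subtopology of $\tau$. Second, $\operatorname{NN}^{\omega+\pool}$ \emph{is} dense in $L^1_{\operatorname{loc}}(\rrd,\rrD)$ for the metric topology $\tau_{\operatorname{loc}}$: e.g. it contains single--hidden--layer networks with the (analytic) logistic activation, which are dense in $C(K,\rrD)$ uniformly on every compact $K$ by the classical universal approximation theorem \citep{hornik1989multilayer,leshno1993multilayer}, and uniform density on all compacta yields density for $d_{L^1_{\operatorname{loc}}}$ (the far-away annuli are weighted by $2^{-n}$); alternatively $\operatorname{NN}^{\omega+\pool}\supseteq \rr[x_1,\dots,x_d:D]$ is $\tau_{\operatorname{loc}}$-dense by Stone--Weierstrass.

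\textbf{Construction of the witnessing topologies.} Fix two distinct nonzero functions $f_1,f_2\in L^1_c(\rrd,\rrD)$ (e.g. $f_1=I_{[-1,1]^d}\bar 1$ and $f_2=2f_1$). Since $L^1_c(\rrd,\rrD)$ is a strict LB-space it is a Hausdorff topological vector space (\citep[Section 3.8]{OsborneLCSs2014}), so one can choose $\tau_c$-open sets $W_1\ni f_1$ and $W_2\ni f_2$ with $W_1\cap W_2=\emptyset$ and $0\notin W_1\cup W_2$. Each $W_i$ belongs to $\tau_c$, which is part of the subbase of $\tau$, so $W_i$ is $\tau$-open. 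Moreover $W_i\cap \operatorname{NN}^{\omega+\pool}=\emptyset$: any $g\in\operatorname{NN}^{\omega+\pool}$ is a composition of affine maps, analytic activations, and the polynomial layer $\pool$, hence analytic on $\rrd$; if such a $g$ were in $L^1_c(\rrd,\rrD)$ it would vanish a.e.\ on a nonempty open set, hence vanish identically on $\rrd$ by the identity theorem (\citep[page 1]{GriffithsHarris_1994Reprint_PrinciplesOfAlgGeo}), and the zero function is excluded by $0\notin W_i$. Now let $\sigma_i$ be the smallest topology containing $\tau_{\operatorname{loc}}\cup\{W_i\}$. Since $\tau_{\operatorname{loc}}\subseteq\tau$ and $W_i\in\tau$, we have $\sigma_i\subseteq\tau$; hence by the first elementary fact $\operatorname{NN}^{\ReLU+\pool}$ is dense for $\sigma_i$. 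On the other hand $W_i$ is a nonempty $\sigma_i$-open set missing $\operatorname{NN}^{\omega+\pool}$, so $\operatorname{NN}^{\omega+\pool}$ is \emph{not} $\sigma_i$-dense. Thus $\sigma_1$ and $\sigma_2$ both satisfy (i).

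\textbf{Computing $\sigma_1\cap\sigma_2$ and concluding.} A base for $\sigma_i$ consists of the $\tau_{\operatorname{loc}}$-open sets together with sets of the form $V\cap W_i$ with $V\in\tau_{\operatorname{loc}}$, so every $U\in\sigma_i$ can be written as $U=V\cup(V'\cap W_i)$ with $V,V'\in\tau_{\operatorname{loc}}$; since $V$ is a $\tau_{\operatorname{loc}}$-open subset of $U$ we get $U\setminus\operatorname{int}_{\tau_{\operatorname{loc}}}(U)\subseteq V'\cap W_i\subseteq W_i$. Therefore any $U\in\sigma_1\cap\sigma_2$ satisfies $U\setminus\operatorname{int}_{\tau_{\operatorname{loc}}}(U)\subseteq W_1\cap W_2=\emptyset$, i.e. $U=\operatorname{int}_{\tau_{\operatorname{loc}}}(U)\in\tau_{\operatorname{loc}}$; together with the trivial inclusion $\tau_{\operatorname{loc}}\subseteq\sigma_1\cap\sigma_2$ this gives $\sigma_1\cap\sigma_2=\tau_{\operatorname{loc}}$. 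Finally, suppose a topology $\tau^{\star}$ satisfying (i) and (ii) existed. Applying the optimality (ii) to $\sigma_1$ and to $\sigma_2$ gives $\tau^{\star}\subseteq\sigma_1$ and $\tau^{\star}\subseteq\sigma_2$, hence $\tau^{\star}\subseteq\sigma_1\cap\sigma_2=\tau_{\operatorname{loc}}$. Then, since $\operatorname{NN}^{\omega+\pool}$ is dense for $\tau_{\operatorname{loc}}$, it is also dense for the coarser topology $\tau^{\star}$, contradicting the separation clause of (i). Hence no such $\tau^{\star}$ exists. The only slightly delicate points are the invocation of Hausdorffness of the LB-space $L^1_c(\rrd,\rrD)$ (to get disjoint $W_1,W_2$) and the bookkeeping showing $U\setminus\operatorname{int}_{\tau_{\operatorname{loc}}}(U)\subseteq W_i$ for $U\in\sigma_i$; everything else reduces to the monotonicity of density under coarsening of topologies.
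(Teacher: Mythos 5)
Your proof is correct, but it takes a genuinely different route from the paper's. The paper's own argument is purely set-theoretic: for each $\hat f$ in the nonempty difference $\operatorname{NN}^{\ReLU+\pool}\setminus\operatorname{NN}^{\omega+\pool}$ it forms the three-element topology $\{\emptyset,\{\hat f\},L^1_{\operatorname{loc}}(\rr^d,\rr^D)\}$, which satisfies (i) for trivial reasons; since there are at least two such $\hat f$, the intersection of all these topologies is the indiscrete topology, in which every nonempty subset is dense, and the contradiction follows immediately. You instead build two refinements $\sigma_1,\sigma_2$ of $\tau_{\operatorname{loc}}$ out of disjoint $\tau_c$-open sets avoiding $0$, verify (i) for them via the identity theorem for analytic functions together with the $\tau$-density of $\operatorname{NN}^{\ReLU+\pool}$, compute $\sigma_1\cap\sigma_2=\tau_{\operatorname{loc}}$, and close the loop with the classical fact that analytic networks are $\tau_{\operatorname{loc}}$-dense. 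This costs more machinery (Hausdorffness of the LB-space $L^1_c(\rr^d,\rr^D)$, the identity theorem, the classical universal approximation theorem), but it buys something real: your witnessing topologies refine $\tau_{\operatorname{loc}}$, so your argument shows that non-existence persists even if the competition in (ii) is restricted to topologies finer than $\tau_{\operatorname{loc}}$ — a restriction under which the paper's degenerate three-point witnesses are unavailable. The individual steps all check out: monotonicity of density under coarsening of the topology, the separation of $f_1$, $f_2$, and $0$ by $\tau_c$-open sets, the exclusion of analytic networks from $W_i$ (a compactly essentially supported analytic function is identically zero), the description of a base for $\sigma_i$, and the resulting identity $U=\operatorname{int}_{\tau_{\operatorname{loc}}}(U)$ for $U\in\sigma_1\cap\sigma_2$.
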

\begin{proof}[{Proof of Proposition~\ref{prop_nonexistenceoptimality}}]
Observe that $\operatorname{NN}^{\operatorname{ReLU} + \operatorname{Pool}}
\cap \operatorname{NN}^{\omega + \operatorname{Pool}}$ contains only the constant functions and therefore the set $X\eqdef \operatorname{NN}^{\operatorname{ReLU} + \operatorname{Pool}} 
- \operatorname{NN}^{\omega + \operatorname{Pool}}$ is non-empty.  For every $\hat{f}\in X$ define the topology $\tau_{\hat{f}}$ by
\[
    \tau_{\hat{f}}
        \eqdef
    \big\{
        \{\hat{f}\}
            ,
        \emptyset
            ,
        L^1_{\operatorname{loc}}(\rr^d,\rr^D)
    \big\}
    .
\]
For every $\hat{f}\in X$, by construction, there does not exist any $\tilde{f}\in \operatorname{NN}^{\omega + \operatorname{Pool}} \cap \{\hat{f}\}$; thus, $\operatorname{NN}^{\omega + \operatorname{Pool}}$ is not dense in $\tau_{\hat{f}}$.  Conversely, $\operatorname{NN}^{\operatorname{ReLU} + \operatorname{Pool}}$ is dense in $\tau_{\hat{f}}$ since $\hat{f}\in \operatorname{NN}^{\operatorname{ReLU} + \operatorname{Pool}}$ and since $\operatorname{NN}^{\operatorname{ReLU} + \operatorname{Pool}}\subseteq L^1_{\operatorname{loc}}(\rr^d,\rr^D)$.  

Suppose that a topology $\tau^{\star}$ on $L^1_{\operatorname{loc}}(\rr^d,\rr^D)$ satisfying (i) and (ii) exists.  By (ii), $\tau^{\star}$ must be a subset of $\tau_{\hat{f}}$ for every $\hat{f}\in X$.
Since the intersection of topologies is again a topology then, $\tau^{\star}$ must be contained in $\bigcap_{\hat{f}\in X}\, \tau_{\hat{f}}$.  By construction, we have that
\[
    \Big\{\emptyset, L^1_{\operatorname{loc}}(\rr^d,\rr^D) \Big\}
        \subseteq 
    \tau^{\star}
        \subseteq
    \bigcap_{\hat{f}\in X}\,
    \tau_{\hat{f}}
        =
    \bigcap_{\hat{f}\in X}\, 
    \big\{
        \{\hat{f}\}
            ,
        \emptyset
            ,
        L^1_{\operatorname{loc}}(\rr^d,\rr^D)
    \big\}
        =
    \Big\{\emptyset, L^1_{\operatorname{loc}}(\rr^d,\rr^D) \Big\}
    .
\]
Thus, $\tau^{\star}$ is the trivial topology $\Big\{\emptyset, L^1_{\operatorname{loc}}(\rr^d,\rr^D) \Big\}$.  Observe that the only non-empty subset of the trivial topology $\Big\{\emptyset, L^1_{\operatorname{loc}}(\rr^d,\rr^D) \Big\}$ is $L^1_{\operatorname{loc}}(\rr^d,\rr^D)$.  Since $L^1_{\operatorname{loc}}(\rr^d,\rr^D)$ contains every element of $\operatorname{NN}^{\omega + \operatorname{Pool}}$ and it containts every element of $\operatorname{NN}^{\operatorname{ReLU} + \operatorname{Pool}}$ then, both $\operatorname{NN}^{\omega + \operatorname{Pool}}$ and of $\operatorname{NN}^{\operatorname{ReLU} + \operatorname{Pool}}$ are dense in $\Big\{\emptyset, L^1_{\operatorname{loc}}(\rr^d,\rr^D) \Big\}=\tau^{\star}$.  Therefore, $\tau^{\star}$ fails (i); which is a contradiction.  Therefore, $\tau^{\star}$ does not exist.  
\end{proof}

\end{document}